\newtheorem{lemma}{Lemma}       
\newtheorem{theorem}{Theorem}      
\newtheorem{proposition}{Proposition}
\newtheorem{remark}{Remark}
\def\beq{\begin{equation}}              
\def\eeq{\end{equation}}
\def\beqr{\begin{eqnarray}}             
\def\eeqr{\end{eqnarray}}               
\def\beqrs{\begin{eqnarray*}}           
\def\eeqrs{\end{eqnarray*}}     
\newcommand{\indep}{\;\, \rule[0em]{.03em}{.67em} \hspace{-.25em}
\rule[0em]{.65em}{.03em} \hspace{-.25em}
\rule[0em]{.03em}{.67em}\;\,}
\newcommand{\trans}{^\T}
\def\T{\textmd{T}}
\def\calI{\mathcal{I}}
\def\calS{\mathcal{S}}
\def\calD{\mathcal{D}}
\def\calV{\mathcal{V}}
\def \cov {\mathrm{cov}}
\def \diag {\mathrm{diag}}
\def\.{{\cdotp}}
\let\oldtabular\tabular
\renewcommand{\tabular}{\footnotesize\oldtabular}
\begin{document}

\title{\Large Distributed estimation of principal support vector machines for sufficient dimension reduction}

\author
{Jun Jin, Chao Ying, Zhou Yu\\
\\
\normalsize{School of Statistics, East China Normal University, Shanghai, China 200241}\\
\\
}

\maketitle

\begin{abstract}
The principal support vector machines method  \citep{PSVM2011} is a powerful tool for sufficient dimension reduction that replaces original predictors with their low-dimensional linear combinations without loss of information. However, the computational burden of the principal support vector machines method constrains its use for massive data. To address this issue, we in this paper propose two distributed estimation algorithms for fast implementation when the sample size is large. Both the two distributed sufficient dimension reduction estimators enjoy the same statistical efficiency as merging all the data together, which provides rigorous statistical guarantees for their application to large scale datasets. The two distributed algorithms are further adapt to prisncipal weighted support vector machines \citep{WPSVM2016} for sufficient dimension reduction in binary classification. The statistical accuracy and computational complexity of our proposed methods are examined through comprehensive simulation studies and a real data application with more than $600000$ samples.
\end{abstract}

{\bf Key Words:} Distributed estimation; Principal support vector machine; Sliced inverse regression; Sufficient dimension reduction

\newpage

\setcounter{equation}{0}

\section{Introduction}
For regression or classification problems with a univariate response variable $Y$ and a $p \times 1$ random vector
$X$, sufficient dimension reduction \citep{SIR1991, Cook:1998, Li:2018} is concerned with the scenarios where the distribution of $Y$ given $X$ depends on $X$ only through a set of linear combinations of $X$. That is, there exists a $p\times d$ matrix $\beta$ with $d\le p$, such that $$Y\indep X \mid \beta\trans X,$$ where $\indep$ stands for independence. The column space spanned by $\beta$ is called the dimension reduction subspace.  Under mild conditions \citep{YINLICOOK2008}, the intersection of all such dimension reduction subspaces is itself a dimension reduction subspace and is called the central subspace. We denote the central subspace as ${\cal S}_{Y\mid X}$ and its dimension $d=\text{dim}({\cal S}_{Y\mid X}) $ is called the structural dimension.

During past decades, a bunch of promising tools has been proposed for recovering ${\cal S}_{Y\mid X}$ from inverse regression, forward regression and semiparametric regression perspectives. As pioneered by sliced inverse regression \citep{SIR1991}, a series of inverse regression type methods were developed, which include sliced average variance estimation \citep{SAVE1991}, Contour regression\citep{CRSDR2005}, directional regression \citep{DR2007}, the inverse third moment method \citep{YIN2003}, the central kth moment method \citep{YINCOOK2002} and many others. The forward regression type methods utilized multi-index model to study ${\cal S}_{Y\mid X}$, see \cite{MAVE} and \cite{CONSTRUCTIVEMAVE}.  \cite{SEMISDR2012} and \cite{EFFICIENTSDR2013} adopt semiparametric techniques to estimate ${\cal S}_{Y\mid X}$ through solving estimating equations.

A new trend in sufficient dimension reduction is to borrow the strengths from powerful machine learning methods. The representative work is the principal support vector machines proposed by \cite{PSVM2011}, which establishes a firm connection between sufficient dimension reduction methods and the popular machine learning technique, support vector machine \citep{Vapnik:1998}. This combination inspires some further developments in sufficient dimension reduction, such as the principal weighted support vector machines \citep{WPSVM2016}, the principal $L_q$ support vector machine \citep{LQPSVM2016}, the principal minimax support vector machine \citep{MinimaxPSVM}, the penalized principal logistic regression \citep{PPLR2017}.

However, principal support vector machine can be very time consuming when one generalizes its use to nowadays massive datasets, because the core of support vector machine itself is a quadratic programming problem and the computational complexity is about $O(n^3)$ where $n$ is the sample size. In addition, large datasets are often stored across different local machines because of the data collection schemes and then data integration is extremely difficult due to communication cost, data privacy{\color{red},} and other security concerns.

To address this challenging issue, we in this paper propose two distributed estimation algorithms for principal support vector machines to facilitate its implementation with big data. For the distributed algorithms,  we
partition the $n$ data observations into $k$ subsets with equal size $m=n/k$. The naive distributed algorithm
performs principal support vector machines on each subset and then combines all the $k$ estimators suitably into an aggregated estimator. When $m\rightarrow \infty$ in the sense that $n=o(m^2)$, the aggregated estimator is  proven to be root-$n$ consistent and the resulting asymptotic variance is the same as that of the original principal support vector machines, which means that the naive divide-and-conquer approach for sufficient dimension reduction enjoys the same statistical efficiency as merging all the data together. This simple yet effective divide-and-conquer approach has also been advocated in many other statistical applications \citep{ DPCA, Lian2017,Battey2018}.

The naive distributed algorithm has its own limitation as it requires a relatively large $m$ with $n=o(m^2)$. However, some modern large-scale datasets are distributed in many local machines that can collect or store a limited amount of data. Motivated by the distributed quantile regression under such memory constraint \citep{DQR2018}, we further proposed a refined distributed estimator of ${\cal S}_{Y\mid X}$ based on an initial root-$m$ consistent estimator on a randomly selected data subset. The refined distributed  estimator is also as efficient if all data were simultaneously used to
compute the estimator without the assumption $m/ n^{1/2}\rightarrow \infty$, which provides statistical guarantees for the application of the refined distributed principal support vector machines to large scale datasets.

The principal support vector machine may fail to work for a binary response when $d\ge 2$, as it can only identify one direction in $\calS_{Y\mid X}$. To address this issue, \cite{WPSVM2016} proposed principal weighted support vector machines for sufficient dimension reduction in binary classification. And the naive and refined distributed algorithms we proposed are readily applicable to principal weighted support vector machines.

\begin{figure}
\centering     
\subfigure[Accuracy comparison with machine number $k=500$]{\label{fig:a}\includegraphics[width=70mm]{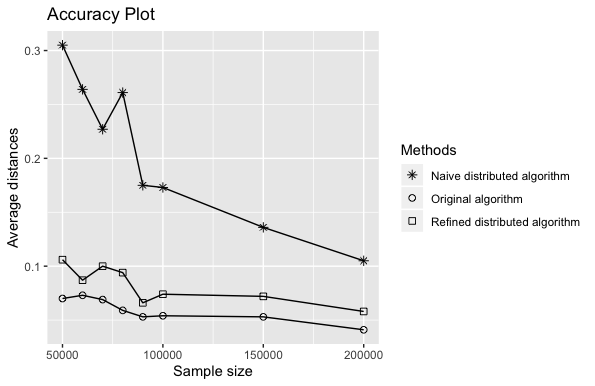}}
\subfigure[Runtime comparison with machine number $k=500$]{\label{fig:b}\includegraphics[width=70mm]{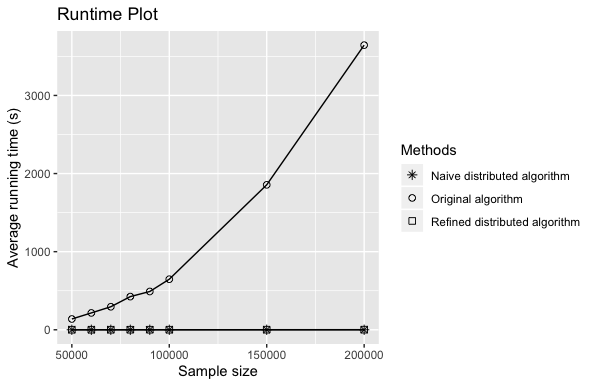}}
\caption{Average estimation errors and running times across three different methods}
\end{figure}

We investigate the performance of our proposals by simulations and a Boeing $737$ data analysis. As an illustration, we show in Figure 1 the accuracy in the estimation of the central subspace and the running time for the original method and the two distributed algorithms based on simulated Model I with $p=10$ and $k=500$.
It is obvious that the refined distributed algorithm  runs much faster than the original principal support vector machines method while retaining high accuracy  for estimating $\calS_{Y|X}$.  For the Boeing $737$ track record data during the landing process with the sample size greater than $600000$, the implementation of the original principal support vector machines will take more than $25$ hours on our personal computer. In comparison, the naive and refined distributed algorithms will only need $0.21$ and $3.54$ seconds to produce a rather satisfied sufficient dimension reduction estimator which is very close to the original estimator involving intensive computations.

\section{Principal support vector machines revisited}
Following the common practice in the literature of sufficient dimension reduction, we partition the sample space of $Y$ into $R$ non-overlapping slices. And let $\{q_{1},\ldots,q_{R-1}\}$ be the dividing points and $\tilde Y^{(\ell)}= I(Y>q_{\ell})-I(Y\le q_{\ell})$, where $\ell=1,\ldots,R-1$.  The following objective function was proposed by \cite{PSVM2011} for linear sufficient dimension reduction
\begin{align}\label{population:psvm}
L(\psi_{\ell},t_{\ell})=\psi_\ell\trans \Sigma\psi_\ell + \lambda E[1-\tilde Y^{(\ell)}\{\psi_{\ell}\trans(X-\mu)-t_{\ell}\} ]^+,
\end{align}
where $\mu = EX $ and $\Sigma=E(X-\mu)(X-\mu)^{\trans}$. Let $\theta_{0,\ell}=(\psi^{\trans}_{0,\ell},t_{0,\ell})^{\trans}$ be the minimizer of  (\ref{population:psvm}) among all $(\psi^{\trans}_{\ell},t_{\ell})^{\trans} \in \mathbb{R}^{p+1}$.
Assuming that $E(X|\beta\trans X)$ is linear in $X$, \cite{PSVM2011} further proved that
$\psi_{0,\ell}\in {\cal S}_{Y\mid X}$ for $\ell=1,\ldots,R-1$. The population level candidate matrix of the linear principal support vector machines is then constructed as
\begin{align}\label{eq:M0}
M_0 = \sum_{\ell=1}^{R-1}  \psi_{0,\ell}\psi^{\trans}_{0,\ell}.
\end{align}
The top $d$ eigenvectors $\calV_0=(\nu_1,\ldots,\nu_k) $ of $M_0$  provide a basis of the central subspace ${\cal{S}}_{Y\mid X}$.

Given a random sample $\{(X_i, Y_i), i=1,\ldots,n\}$ from $(X,Y)$, we can estimate $\mu$ and $\Sigma$ through $\hat \mu=E_n(X)$ and $\hat\Sigma=E_n\{(X-\hat \mu)(X-\hat \mu)^\T\}$, where $E_n(\cdot)$ indicates the sample average $n^{-1}\sum_{i=1}^n (\cdot)$. Then the sample version of (\ref{population:psvm}) is
\begin{align}\label{sample:psvm}
\hat L(\psi_{\ell},t_{\ell})=\psi_\ell\trans \hat \Sigma\psi_\ell + \lambda E_n\{1-\tilde Y^{(\ell)}[\psi_{\ell}\trans(X-\hat\mu)-t_{\ell}] \}^+.
\end{align}
Denote $\hat\theta_{n,\ell}=(\hat\psi^{\trans}_{n,\ell},\hat t_{n,\ell})^{\trans}$ as the corresponding minimizer. Then the sample level candidate matrix is
\begin{align}\label{eq:Mn}
\widehat M_n = \sum_{\ell=1}^{R-1}  \hat\psi_{n,\ell}\hat\psi^{\trans}_{n,\ell}.
\end{align}
And the first $d$ eigenvectors of $\widehat M$, denoted by $\widehat \calV_n=(\hat \nu_1,\ldots,\hat \nu_k) $, forms an estimate of the central subspace ${\cal{S}}_{Y\mid X}$.

We begin with some notations to present the asymptotic results of the principal support vector machines.
Let $\tilde X=(X\trans - \mu\trans,-1)\trans$,  $Z^{(\ell)}=(X \trans,\tilde Y^{(\ell)})\trans$.
Denote $D$ by the $d\times d$ diagonal matrix with its diagonal elements being the nonzero eigenvalues of $M_0$. Let $\Gamma$ be the $p\times d$ matrix whose columns are the eigenvectors of $M_0$ corresponding to the nonzero eigenvalues. We define
\begin{align*}
&D_{\theta_{0,\ell}} (Z^{(\ell)})= (2\psi_{0,\ell}\trans \Sigma,0)\trans/\lambda- \{\tilde X \tilde Y^{(\ell)} I(1-\theta_{0,\ell}\trans\tilde X\tilde Y^{(\ell)}>0)\},\\
&H_{\theta_{0,\ell}}=2\diag(\Sigma,0)/\lambda+\sum_{\tilde y=1,-1} P(\tilde Y^{(\ell)}=\tilde y) f_{\psi_{0,\ell}\trans X|\tilde Y^{(\ell)} }(t_\ell+\tilde y |\tilde y) E(\tilde X \tilde X\trans |\psi_{0,\ell}\trans X= t_\ell+\tilde y),
\end{align*}
where $\diag(\Sigma,0)$ denotes the $(p + 1) \times (p + 1)$ block-diagonal matrix whose block-diagonal
elements are $\Sigma$ and $0$, and $f_{\psi_{0,\ell}\trans X|\tilde Y^{(\ell)} }$ is the conditional density function of $\psi_{0,\ell}\trans X$ given $\tilde Y^{(\ell)}$.
In addition,  let
$ S_{\theta_{0,\ell}}(Z^{(\ell)}) = - H^{-1}_{\theta_{0,\ell}}D_{\theta_{0,\ell}} (Z^{(\ell)})$ and
$ \Lambda_{rt} = E\{S_{\theta_{0,r}}(Z^{(r)})  S\trans_{\theta_{0,t}}(Z^{(t)}) \}$. \cite{PSVM2011} established the asymptotic property of principal support vector machines as follows.


\begin{theorem}\label{theo: asy psvm} Assume the regularity conditions 1-5 listed in the Appendix, then
\begin{align*}
&n^{1/2}\textup{vec}(\widehat M_n -M_{0})\rightarrow N(0_{p^2},\Sigma_{M}),\\
&n^{1/2}\textup{vec}(\widehat \calV_{n}-\calV_{0})\rightarrow N(0_{pd},\Sigma_{V}),
\end{align*}
in distribution, where 
\begin{align*}
\Sigma_{M}=& (I_{p^2}+K_{p,p}) \sum_{r=1}^{R-1}\sum_{t=1}^{R-1} (\psi_{0,r}\psi\trans_{0,t}\otimes \Lambda_{rt}) (I_{p^2}+K_{p,p}),\\
 \Sigma_{V}=&( D^{-1}\Gamma\trans\otimes I_p) \Sigma_M(\Gamma D^{-1}\otimes I_p),
\end{align*}
and $K_{p,p} \in \mathbb{R}^{p^2\times p^2}$ denotes the communication matrix satisfying $K_{p,p} \textup{vec}(A)= \textup{vec}(A\trans)$ for a matrix
$A \in \mathbb{R}^{p\times p}$.
\end{theorem}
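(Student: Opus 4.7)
My plan is to proceed in three stages: (i) establish a Bahadur-type asymptotic linearization for each $\hat\theta_{n,\ell}$, (ii) combine these via a joint CLT and the delta method to obtain the limit law of $\widehat M_n$, and (iii) invoke eigenvector perturbation theory to pass from $\widehat M_n$ to $\widehat \calV_n$.

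For the first stage, I would exploit that the sample objective $\hat L(\psi_\ell,t_\ell)$ is convex in $\theta_\ell=(\psi_\ell\trans,t_\ell)\trans$, since the hinge loss and the quadratic penalty are both convex. Applying the convexity lemma of Hjort and Pollard for minimisers of convex stochastic processes, together with a pointwise quadratic expansion of the smooth population objective $L(\theta_\ell)$ around $\theta_{0,\ell}$, should yield
\[
n^{1/2}(\hat\theta_{n,\ell}-\theta_{0,\ell}) = -H_{\theta_{0,\ell}}^{-1}\cdot n^{-1/2}\sum_{i=1}^n D_{\theta_{0,\ell}}(Z_i^{(\ell)}) + o_p(1) = n^{-1/2}\sum_{i=1}^n S_{\theta_{0,\ell}}(Z_i^{(\ell)}) + o_p(1).
\]
Here $D_{\theta_{0,\ell}}$ is the (scaled) subgradient of the empirical loss at $\theta_{0,\ell}$, with zero mean by the first-order optimality condition for $L$, while $H_{\theta_{0,\ell}}$ is the Hessian of the population objective; the latter exists despite the non-smoothness of the hinge because integration against the conditional density of $\psi_{0,\ell}\trans X$ given $\tilde Y^{(\ell)}$ smooths the kink.

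For the second stage, a multivariate CLT applied to the stacked i.i.d.\ vectors $\{(S_{\theta_{0,1}}(Z_i^{(1)})\trans,\ldots,S_{\theta_{0,R-1}}(Z_i^{(R-1)})\trans)\trans\}_{i=1}^n$ gives joint asymptotic normality of $\{n^{1/2}(\hat\psi_{n,\ell}-\psi_{0,\ell})\}_{\ell}$, with cross-covariances given by the appropriate $p\times p$ leading blocks of the $\Lambda_{rt}$'s (which I identify with the theorem's $\Lambda_{rt}$). Using the first-order expansion
\[
\hat\psi_{n,\ell}\hat\psi_{n,\ell}\trans-\psi_{0,\ell}\psi_{0,\ell}\trans=(\hat\psi_{n,\ell}-\psi_{0,\ell})\psi_{0,\ell}\trans+\psi_{0,\ell}(\hat\psi_{n,\ell}-\psi_{0,\ell})\trans+O_p(n^{-1}),
\]
vectorizing via $\textup{vec}(uv\trans)=v\otimes u$, and applying $K_{p,p}\textup{vec}(A)=\textup{vec}(A\trans)$, we obtain
\[
n^{1/2}\textup{vec}(\widehat M_n-M_0)=(I_{p^2}+K_{p,p})\sum_{\ell=1}^{R-1}(\psi_{0,\ell}\otimes I_p)\,n^{1/2}(\hat\psi_{n,\ell}-\psi_{0,\ell}) + o_p(1).
\]
The Kronecker identity $(\psi_{0,r}\otimes I_p)\Lambda_{rt}(\psi_{0,t}\trans\otimes I_p)=\psi_{0,r}\psi_{0,t}\trans\otimes\Lambda_{rt}$ then delivers the stated form of $\Sigma_M$.

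For the third stage, I would write $M_0=\Gamma D\Gamma\trans$ and use first-order eigenvector perturbation theory (in the style of Kato), under the assumption that the leading $d$ eigenvalues of $M_0$ are isolated from the remaining spectrum. This produces a linear representation $n^{1/2}\textup{vec}(\widehat \calV_n-\calV_0)=(D^{-1}\Gamma\trans\otimes I_p)\,n^{1/2}\textup{vec}(\widehat M_n-M_0)+o_p(1)$, whence the second-stage CLT and the delta method immediately give $\Sigma_V=(D^{-1}\Gamma\trans\otimes I_p)\Sigma_M(\Gamma D^{-1}\otimes I_p)$. The principal technical hurdle is the Bahadur representation in the first stage, because the empirical hinge objective is non-differentiable and standard M-estimator Taylor expansions do not apply directly; the convexity lemma short-circuits a full empirical-process stochastic-equicontinuity argument, but one still needs to verify positive definiteness of $H_{\theta_{0,\ell}}$ and the stochastic quadratic expansion of $n[\hat L(\theta_{0,\ell}+s/\sqrt n)-\hat L(\theta_{0,\ell})]$ in $s$. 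After that, the remaining steps are largely matrix algebra with Kronecker products and the commutation matrix.
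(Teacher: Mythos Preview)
Your three-stage outline is correct and coincides with the approach taken in the original reference: this paper does not supply its own proof of Theorem~\ref{theo: asy psvm} but attributes the result to \cite{PSVM2011}, and the ingredients you list are exactly those invoked there. In particular, the Bahadur linearization you write in Stage~(i) is precisely what the paper quotes as ``Theorem~6 in \cite{PSVM2011}'' when proving Theorems~\ref{theo: asy npsvm} and~\ref{theo: asy rpsvm}, and the passage from $\widehat M_n$ to $\widehat\calV_n$ in Stage~(iii) is what the paper cites as \cite{Bura2008} (rather than Kato-style perturbation, though the outcome is the same linear map $(D^{-1}\Gamma\trans\otimes I_p)$).

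One small point worth flagging: as you implicitly noticed, $\Lambda_{rt}=E\{S_{\theta_{0,r}}(Z^{(r)})S_{\theta_{0,t}}\trans(Z^{(t)})\}$ is $(p+1)\times(p+1)$, whereas the expression $\psi_{0,r}\psi_{0,t}\trans\otimes\Lambda_{rt}$ in $\Sigma_M$ must be $p^2\times p^2$; the theorem statement is silently using the leading $p\times p$ block of $\Lambda_{rt}$ (the $\psi$-components of $S_{\theta_{0,\ell}}$). Your identification of this block is the right reading, and your Kronecker algebra in Stage~(ii) then goes through verbatim.
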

However, as $R-1$ support vector machines are involved in the above estimation procedure, the principal support vector machine is very computational intensive when $n$ is large. We in the next propose two distributed algorithms for fast computation while enjoying the same asymptotic property.

\section{Naive distributed estimation}
To design the naive distributed algorithm of principal support vector machines, we randomly and evenly partitions the data sample  $\calD=\{(X_1,Y_1),\ldots,(X_n,Y_n)\}$ into $k$ disjoint subsets $\calD_1,\ldots,\calD_k$, such that $\calD=\cup_{j=1}^k\calD_j$ and $\calD_i\cup\calD_j=\varnothing$ for $1\le i\neq j\le k$. Without loss of generality, assume that $n$ can be divided evenly and hence $m=n/k$. Let $\calI_j \subset \{1,\ldots,n\}$ be the index set corresponding to $\calD_j$. Then for each batch of data $\calD_j$, we can estimate $\mu$ and $\Sigma$ as $\hat \mu_j= m^{-1}\sum_{i\in\calI_j} X_i$ and $\hat\Sigma_j= m^{-1}\sum_{i\in\calI_j} (X_i-\hat\mu_j)(X_i-\hat\mu_j)\trans$. In addition, the sample version of the objective function (\ref{population:psvm}) based on the $j$th batch of data $\calD_j$
becomes
\begin{align*}
\hat L_j(\psi_{\ell},t_{\ell})=\psi_\ell\trans \hat \Sigma_j\psi_\ell + \lambda m^{-1}\sum_{i\in\calI_j}[1-\tilde Y_i^{(\ell)}\{\psi_{\ell}\trans(X_i-\hat\mu_j)-t_{\ell}\} ]^+.
\end{align*}
Let $\hat\theta_{j,\ell}=(\hat\psi^{\trans}_{j,\ell},\hat t_{j,\ell})^{\trans}$ be the corresponding minimizer on $\calD_j$. The resulting sample level candidate matrix constructed based on $\calD_j$ is then
\begin{align}\label{eq:Mj}
\widehat M_j = \sum_{\ell=1}^{R-1}  \hat\psi_{j,\ell}\hat\psi^{\trans}_{j,\ell}.
\end{align}
Finally, the aggregated estimator is defined by
\begin{align}\label{eq:Mj}
\widetilde M = \sum_{j=1}^{k} \widehat M_j/k .
\end{align}
And then the leading $d$ eigenvectors of $\widetilde M$, denoted by $\widetilde V=(\tilde \nu_1,\ldots,\tilde \nu_d) $, are the naive distributed estimators of the central subspace $\calS_{Y|X}$. And the asymptotic property is given below.

\begin{theorem}\label{theo: asy npsvm} In addition to the regularity conditions 1-6 listed in the Appendix, assume that $m\rightarrow \infty$ and  $k\rightarrow\infty$ such that $n=o(m^{2\gamma})$ where $1/2<\gamma\le 1$ is a positive constant specified in condition 6 in the Appendix. Then we have
	\begin{align*}
	&n^{1/2}\textup{vec}(\widetilde M-M_{0})\rightarrow N(0_{p^2},\Sigma_{M}),\\
	&n^{1/2}\textup{vec}(\widetilde \calV-\calV_{0})\rightarrow N(0_{pd},\Sigma_{V}),
	\end{align*}
	in distribution.
	\end{theorem}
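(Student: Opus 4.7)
The plan is to reduce the analysis of $\widetilde M$ on each machine to an influence-function representation and then show that averaging over the $k$ machines recovers exactly the leading term that drives Theorem~\ref{theo: asy psvm}, with all per-machine higher-order terms suppressed by the rate condition $n=o(m^{2\gamma})$.

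First, I would invoke a Bahadur-type linearization on each subsample, which is implicit in the proof of Theorem~\ref{theo: asy psvm} and quantified in regularity condition 6: for each $j=1,\ldots,k$ and $\ell=1,\ldots,R-1$,
\begin{align*}
\hat\psi_{j,\ell}-\psi_{0,\ell} \;=\; m^{-1}\sum_{i\in\calI_j} S^{(\psi)}_{\theta_{0,\ell}}(Z_i^{(\ell)}) + R_{j,\ell},
\end{align*}
where $S^{(\psi)}_{\theta_{0,\ell}}$ is the first $p$ coordinates of $S_{\theta_{0,\ell}}$ and $R_{j,\ell}=O_p(m^{-\gamma})$ uniformly in $j$. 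The nonsmoothness of the hinge loss means this rests on empirical-process and convexity arguments, but those tools are already needed for Theorem~\ref{theo: asy psvm}; the novel content is only the uniform control of $R_{j,\ell}$ across the $k$ disjoint (hence independent) subsamples.

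Next, I would expand
\begin{align*}
\widehat M_j - M_0 \;=\; \sum_{\ell=1}^{R-1}\bigl[(\hat\psi_{j,\ell}-\psi_{0,\ell})\psi_{0,\ell}\trans + \psi_{0,\ell}(\hat\psi_{j,\ell}-\psi_{0,\ell})\trans\bigr] + \sum_{\ell=1}^{R-1}(\hat\psi_{j,\ell}-\psi_{0,\ell})(\hat\psi_{j,\ell}-\psi_{0,\ell})\trans.
\end{align*}
Averaging over $j$ turns the linear cross term into $\sum_\ell\{n^{-1}\sum_{i=1}^n S^{(\psi)}_{\theta_{0,\ell}}(Z_i^{(\ell)})\}\psi_{0,\ell}\trans +\text{transpose}$, modulo the aggregated remainder $k^{-1}\sum_j R_{j,\ell}\psi_{0,\ell}\trans+\text{transpose}$. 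Using independence across $j$, this aggregated remainder has bias $O(m^{-\gamma})$ and standard deviation $O(m^{-\gamma}k^{-1/2})$, so after multiplying by $n^{1/2}$ it is at most $(n/m^{2\gamma})^{1/2}+m^{1/2-\gamma}=o(1)$ under the stated scaling. The aggregated quadratic term has order $O_p(m^{-1})$, and since $n=o(m^{2\gamma})$ with $\gamma\le 1$ implies $n=o(m^2)$, multiplying by $n^{1/2}$ also yields $o_p(1)$. Applying the central limit theorem to the surviving linear part then gives $n^{1/2}\textup{vec}(\widetilde M-M_0)\to N(0_{p^2},\Sigma_M)$, with exactly the same $\Sigma_M$ as in Theorem~\ref{theo: asy psvm} because the linear representation is algebraically identical to the one used there for $\widehat M_n$. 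The eigenvector conclusion then follows by the delta method applied to the map from symmetric matrices to their leading $d$ eigenvectors at $M_0$, whose Jacobian $D^{-1}\Gamma\trans\otimes I_p$ composed with $\Sigma_M$ produces $\Sigma_V$.

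The most delicate step will be verifying the uniform-in-$j$ remainder bound $R_{j,\ell}=O_p(m^{-\gamma})$ and, just as importantly, controlling the deterministic bias of the quadratic term $(\hat\psi_{j,\ell}-\psi_{0,\ell})(\hat\psi_{j,\ell}-\psi_{0,\ell})\trans$, whose expectation is of order $m^{-1}$ and does not vanish under simple averaging. A looser bound on either piece would propagate a non-negligible bias into $\widetilde M$ and destroy asymptotic normality. The tradeoff between this per-machine bias of order $m^{-\gamma}\vee m^{-1}$ and the target resolution $n^{-1/2}$ is precisely what forces the memory condition $n=o(m^{2\gamma})$ and hence determines the maximum admissible number of machines.
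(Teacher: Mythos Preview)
Your proposal is correct and arrives at the same conclusion as the paper, but the organization differs. The paper centers each $\widehat M_j$ at its own expectation and writes
\[
n^{1/2}\textup{vec}(\widetilde M-M_0)=k^{-1/2}\sum_{j=1}^k m^{1/2}\textup{vec}(\widehat M_j-E\widehat M_j)+n^{1/2}\textup{vec}(E\widehat M_1-M_0),
\]
applying the CLT across the $k$ i.i.d.\ machine-level summands (each with asymptotic covariance $\Sigma_M$) and killing the deterministic bias term by Assumption~6. You instead linearize at the level of $\hat\psi_{j,\ell}$, pull out the observation-level influence sum $n^{-1}\sum_i S^{(\psi)}_{\theta_{0,\ell}}(Z_i^{(\ell)})$, and apply the CLT across all $n$ observations; the remaining pieces are your averaged remainder and averaged quadratic term. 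The two decompositions are equivalent in content---your ``linear part'' is exactly the leading contribution to the paper's $\widehat M_j-E\widehat M_j$, and your averaged remainder plus quadratic bias is exactly $E\widehat M_1-M_0$---so each approach buys the same thing, just with the CLT applied at a different level of aggregation.

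One small point to tighten: you state $R_{j,\ell}=O_p(m^{-\gamma})$ and then derive a standard deviation bound $O(m^{-\gamma}k^{-1/2})$ for the aggregated remainder. Assumption~6 only controls the \emph{bias} $ER_{j,\ell}=O(m^{-\gamma})$, not the stochastic size of $R_{j,\ell}$ itself; the Bahadur expansion gives only $R_{j,\ell}=o_p(m^{-1/2})$. This is harmless for your argument, since $\text{sd}(k^{-1}\sum_j R_{j,\ell})=o(m^{-1/2})k^{-1/2}$ already satisfies $n^{1/2}\cdot o(m^{-1/2})k^{-1/2}=o(1)$ without any rate condition, but you should phrase the remainder control as ``bias $O(m^{-\gamma})$, fluctuation $o(m^{-1/2})$'' rather than claiming the sharper pointwise rate. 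The paper sidesteps this by working directly with $\text{cov}\{\textup{vec}(\widehat M_j)\}=\Sigma_M/m+o(1/m)$, which absorbs the remainder variance into the machine-level CLT.
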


The naive distributed algorithm of principal support vector machines requires $n=o(m^2)$ to achieve the same asymptotic  efficiency as the original method. In other words, the naive distributed estimator may not work well when the batch size $m$ is relative small compared to the number of batches $k$. In the next section, we will propose a refined distributed algorithm which does not need such a stringent condition.

\section{Refined distributed estimation}
Inspired by the smoothing technique introduced in  \cite{DQR2018} and \cite{DSVM} , we consider a smooth version of (\ref{sample:psvm}) instead, that is
\begin{align}\label{smooth loss}
\hat L(\psi_{\ell},t_{\ell})=\psi_\ell\trans \hat \Sigma\psi_\ell + \lambda n^{-1} \sum_{i=1}^n K_h[1-\tilde Y_i^{(\ell)}\{\psi_{\ell}\trans(X_i-\hat\mu)-t_{\ell}\} ].
\end{align}
Here the hinge loss function ${u}^+=\max(u,0)$ is approximated by the smooth function $K_h(u)=u H(u/h)$ as the bandwidth $h$ tends to zero and
$ H(u)$ is a smooth and differentiable function satisfying $H(u) = 1$ when
$u\ge 1$ and $H(u) = 0$ when $u \le -1$. Moreover, in this paper we assume that $H$ holds $C-$Lipschitznss for some constant $C$. Let $\hat X_i=(X_i\trans-\hat\mu\trans,-1)\trans$ and $g(\hat X_i,\tilde Y_i^{(\ell)},\theta_{\ell})=1-\tilde Y_i^{(\ell)}\theta_{\ell}\trans\hat X_i$. Then the optimal $\theta_{\ell}=(\psi_{\ell}\trans,t_\ell)\trans$ that minimizes (\ref{smooth loss}) should be the solution of the following equations:
\begin{align*}
\lambda n^{-1}\sum_{i=1}^n \hat X_i\tilde Y^{(\ell)}_i
[H\{g(\hat X_i,\tilde Y_i^{(\ell)},\theta_{\ell})/h\}+
\{g(\hat X_i,\tilde Y_i^{(\ell)},\theta_{\ell})/h\}H^\prime\{g(\hat X_i,\tilde Y_i^{(\ell)},\theta_{\ell})/h\}]=2\diag(\hat\Sigma,0)\theta_{\ell}
\end{align*}

After some rearrangements, we have
\begin{align}\nonumber
\theta_{\ell}=& \big[ \lambda n^{-1}\sum_{i=1}^n \hat X_i\hat X_i\trans
H^\prime\{g(\hat X_i,\tilde Y_i^{(\ell)},\theta_{\ell})/h\}/h+2\diag(\hat\Sigma,0)\big]^{-1}
\\\label{eq: smooth_theta}
&\big[\lambda n^{-1}\sum_{i=1}^n \hat X_i\tilde Y^{(\ell)}_i
\{H\{g(\hat X_i,\tilde Y_i^{(\ell)},\theta_{\ell})/h\}+H^\prime\{g(\hat X_i,\tilde Y_i^{(\ell)},\theta_{\ell})/h\}/h\}\big].\\\nonumber
\end{align}
Given a good initial value $\hat\theta_{(0),\ell}=(\hat\psi\trans_{(0),\ell},\hat t_{(0),\ell})\trans$, we can adopt (\ref{eq: smooth_theta}) to update $\theta_{\ell}$ as follows:
\begin{align}\nonumber
\hat\theta_{(1),\ell}=& \big[ \lambda n^{-1}\sum_{i=1}^n \hat X_i\hat X_i\trans
H^\prime\{g(\hat X_i,\tilde Y_i^{(\ell)},\hat\theta_{(0),\ell})/h\}/h+2\diag(\hat\Sigma,0)\big]^{-1}
\\\label{eq: update_theta}
&\big[\lambda n^{-1}\sum_{i=1}^n \hat X_i\tilde Y^{(\ell)}_i
\{H\{g(\hat X_i,\tilde Y_i^{(\ell)},\hat\theta_{(0),\ell})/h\}+H^\prime\{g(\hat X_i,\tilde Y_i^{(\ell)},\hat\theta_{(0),\ell})/h\}/h\}\big].\\\nonumber
\end{align}
Moreover, (\ref{eq: update_theta}) can be realized through distributed estimation to speed up the computations. Firstly, the sample mean $\hat{\mu}$ can be quickly obtained through averaging the local means from each batch of data, that is $\hat{\mu}=k^{-1}\sum_{j=1}^k \hat \mu_j$. And the sample mean is then transferred to each local machine to achieve the centralization $\hat X_i=(X_i\trans-\hat\mu\trans,-1)\trans$. For each batch of data $\calD_j$, we then calculate the following quantities:
\begin{align*}
& \hat U_{j}=n^{-1}\sum_{i \in \calI_j} \hat X_i\hat X_i\trans, \quad
\hat U_{(0),j,\ell}=n^{-1}\sum_{i \in \calI_j} \hat X_i\hat X_i\trans
H^\prime\{g(\hat X_i,\tilde Y_i^{(\ell)},\hat\theta_{(0),\ell})/h\}/h,\\
& \hat V_{(0),j,\ell}=n^{-1}\sum_{i \in \calI_j} \hat X_i\tilde Y^{(\ell)}_i
\{H\{g(\hat X_i,\tilde Y_i^{(\ell)},\hat\theta_{(0),\ell})/h\}+H^\prime\{g(\hat X_i,\tilde Y_i^{(\ell)},\hat\theta_{(0),\ell})/h\}/h\}\big].\\
\end{align*}
$(\hat U_j, \hat U_{(0),j,\ell},\hat V_{(0),j,\ell})$ computed on each local machine are finally aggregated together to fulfill the calculation of (\ref{eq: update_theta}) as
\begin{align}\nonumber
\hat\theta_{(1),\ell}=\big[\sum_{j=1}^k \{ \hat U_{(0),j,\ell} + 2\lambda^{-1}\diag(\hat U_{j},0)\}\big]^{-1} \sum_{j=1}^k  \hat V_{(0),j,\ell}
\end{align}
We then estimate the population level candidate matrix $M_0$ as
\begin{equation}
\label{m-proc}
\widetilde M_{(1)} = \sum_{\ell=1}^{R} \hat\psi_{(1),\ell} \hat\psi\trans_{(1),\ell},
\end{equation}
where $\hat\theta\trans_{(1),\ell}=(\hat\psi\trans_{(1),\ell},\hat t_{(1),\ell})\trans$. And the eigenvectors corresponding to the $d$ largest eigenvalues of $\widetilde M_{(1)}$, denoted by $\widetilde V_{(1)}=(\tilde \nu_{(1),1},\ldots,\tilde \nu_{(1),d})$, are the refined distributed estimators with one step iteration for the central subspace ${\calS}_{Y\mid X}$.

In general, based on the $(B-1)$th iteration estimator $\hat\theta_{(B-1),\ell}$, we can update the parameters through distributed estimation as follows:
\begin{equation}
\label{procedure}
\begin{gathered}
  {{\hat U}_{\left( {B - 1} \right),j,\ell }} = {n^{ - 1}}\sum\limits_{i \in {\calI_j}} {{{\hat X}_i}\hat X_i^TH'\left\{ {g\left( {{{\hat X}_i},\tilde Y_i^{(\ell )},{{\hat \theta }_{(B - 1),\ell }}} \right)/h} \right\}/h}  \hfill \\
  {{\hat V}_{\left( {B - 1} \right),j,\ell }} = {n^{ - 1}}\sum\limits_{i \in {\calI_j}} {{{\hat X}_i}\tilde Y_i^{\left( \ell  \right)}\left\{ {H\left\{ {g\left( {{{\hat X}_i},\tilde Y_i^{(\ell )},{{\hat \theta }_{(B - 1),\ell }}} \right)/h} \right\} + H'\left\{ {g\left( {{{\hat X}_i},\tilde Y_i^{(\ell )},{{\hat \theta }_{(B - 1),\ell }}} \right)/h} \right\}/h} \right\}}  \hfill \\
  {{\hat \theta }_{\left( B \right),\ell }} = {\left[ {\sum\limits_{j = 1}^k {\left\{ {{{\hat U}_{\left( {B - 1} \right),j,\ell }} + 2{\lambda ^{ - 1}}diag\left( {{{\hat U}_j},0} \right)} \right\}} } \right]^{ - 1}}\sum\limits_{j = 1}^k {{{\hat V}_{(B - 1),j,\ell }}}  \hfill \\
\end{gathered}
\end{equation}
And we can further construct the $B$th step candidate matrix $\widetilde M_{(B)}$ and the sufficient dimension reduction estimators $\widetilde{\calV}_{(B)}$ accordingly. The next theorem confirms the asymptotic efficiency of the refined distributed algorithm for the estimation of $\calS_{Y\mid X}$.
\par
The entire procedure is summarized in Algorithm \eqref{alg:rpsvm} in the appendix.

\begin{theorem}\label{theo: asy rpsvm}  Assume the regularity conditions 1-5 and 7 in the Appendix holds true. If $\hat\theta_{(0),\ell}-\theta_{0,\ell}=O_P(m^{-1/2})$ for $1\le \ell \le R-1$, then for $B\ge \lceil C_0 \log_2(\log_m n) \rceil $ with some positive constant $C_0$,
	\begin{align*}
	&n^{1/2}\textup{vec}(\widetilde M_{(B)}-M_{0})\rightarrow N(0_{p^2},\Sigma_{M}),\\
	&n^{1/2}\textup{vec}(\widetilde \calV_{(B)}-\calV_{0})\rightarrow N(0_{pd},\Sigma_{V}),
	\end{align*}
	in distribution as $n\rightarrow\infty$.
\end{theorem}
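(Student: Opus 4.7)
The plan is to show that the distributed updates in (\ref{procedure}) are algebraically identical to a centralized fixed-point iteration on the smoothed full-sample objective (\ref{smooth loss}), and then to exploit the quadratic contraction of that iteration to pass from the root-$m$ initial estimator to a root-$n$ estimator in $O(\log_2 \log_m n)$ steps. The first step is bookkeeping: aggregating $\hat U_{(B-1),j,\ell}$ and $\hat V_{(B-1),j,\ell}$ across $j=1,\dots,k$ reconstructs exactly the single-machine quantities formed from all $n$ observations, so $\hat\theta_{(B),\ell}$ equals the update (\ref{eq: update_theta}) applied to the whole sample and the problem reduces to analyzing a centralized iteration.

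Let $\tilde\theta_\ell^{\ast}$ denote the minimizer of (\ref{smooth loss}) on the full dataset. Under the bandwidth rate supplied by Condition~7, standard arguments for $M$-estimation with vanishing smoothing show that the smoothing bias is $o_P(n^{-1/2})$ and the smoothed Hessian converges in probability to $H_{\theta_{0,\ell}}$, so that $\sqrt{n}(\tilde\theta_\ell^{\ast} - \theta_{0,\ell})$ admits the same limiting distribution as $\sqrt{n}(\hat\theta_{n,\ell} - \theta_{0,\ell})$ given in Theorem~\ref{theo: asy psvm}. Matching the asymptotic distribution of $\hat\theta_{(B),\ell}$ to the target therefore reduces to proving $\hat\theta_{(B),\ell} - \tilde\theta_\ell^{\ast} = o_P(n^{-1/2})$.

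Next, I would show that the update map $T(\theta)$ defined by the right-hand side of (\ref{eq: smooth_theta}) is a quadratic contraction around $\tilde\theta_\ell^{\ast}$: in a shrinking neighborhood,
\[
\|T(\theta) - \tilde\theta_\ell^{\ast}\| \;\leq\; C_1 \|\theta - \tilde\theta_\ell^{\ast}\|^2 + C_2\, \varepsilon_n,
\]
where $\varepsilon_n = o_P(n^{-1/2})$ collects higher-order stochastic remainders. The quadratic term comes from a Taylor expansion that exploits the Lipschitz property of $H$, since the inverted matrix in (\ref{procedure}) is a first-order approximation to the true Hessian of the smoothed loss at the current iterate. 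Starting from $\|\hat\theta_{(0),\ell} - \tilde\theta_\ell^{\ast}\| = O_P(m^{-1/2})$ and iterating, this gives $\|\hat\theta_{(B),\ell} - \tilde\theta_\ell^{\ast}\| = O_P(m^{-2^{B}} + \varepsilon_n)$, so that $B \geq \lceil C_0 \log_2 \log_m n\rceil$ forces $m^{-2^{B}} = o(n^{-1/2})$. Hence $\hat\theta_{(B),\ell} - \hat\theta_{n,\ell} = o_P(n^{-1/2})$, which immediately yields the asymptotic normality of $\widetilde M_{(B)}$ through $\widetilde M_{(B)} - \widehat M_n = o_P(n^{-1/2})$ combined with Theorem~\ref{theo: asy psvm}, and the asymptotic normality of $\widetilde \calV_{(B)}$ via the same eigenvector perturbation expansion used there.

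The main obstacle will be establishing the quadratic contraction rigorously. The matrix inverted in (\ref{procedure}) is a quasi-Newton/MM surrogate rather than the exact Hessian of (\ref{smooth loss}), so obtaining quadratic rather than merely geometric convergence requires a delicate Taylor expansion around $\tilde\theta_\ell^{\ast}$ that exploits cancellation between the surrogate and the true Hessian at the fixed point, together with uniform control of $H'$ and $H''$ on the scale of the margin $g \approx 0$. One must also verify that the smoothed Hessian stays invertible with probability tending to one uniformly over all iterates, and that the remainder $\varepsilon_n$ does not accumulate through the $O(\log\log n)$ iterations; balancing these contributions against the bandwidth $h$ specified in Condition~7 is the technical heart of the argument.
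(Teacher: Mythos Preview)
Your high-level strategy---reduce to a centralized iteration and show the error shrinks doubly-exponentially---is the right picture, but the mechanism you propose does not match the one that actually works, and the mismatch is substantive.

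The update (\ref{eq: update_theta}) can be rewritten as $T(\theta)=\theta-A(\theta)^{-1}\nabla\hat L(\theta)$ with $A(\theta)=\lambda n^{-1}\sum_i\hat X_i\hat X_i^{\T}H'\{g_i(\theta)/h\}/h+2\diag(\hat\Sigma,0)$. Since $K_h''(u)=(2/h)H'(u/h)+(u/h^2)H''(u/h)$, the surrogate $A(\theta)$ is \emph{not} the Hessian of (\ref{smooth loss}), so $T'(\tilde\theta_\ell^{\ast})=I-A(\tilde\theta_\ell^{\ast})^{-1}\nabla^2\hat L(\tilde\theta_\ell^{\ast})\neq 0$ and the map is only linearly contractive about your fixed point $\tilde\theta_\ell^{\ast}$. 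Linear contraction would require $B$ of order $\log k$, not the $\lceil C_0\log_2\log_m n\rceil$ in the statement, so the argument as written cannot deliver the claimed iteration count. The paper never introduces $\tilde\theta_\ell^{\ast}$; instead it writes $\hat\theta_{(1),\ell}-\theta_{0,\ell}=H_{n,h,\theta_{0,\ell}}^{-1}D_{n,h,\theta_{0,\ell}}$ and shows via Propositions~\ref{prop: asp h}--\ref{prop: asp D} that this equals $-H_{\theta_{0,\ell}}^{-1}D_{\theta_{0,\ell}}(Z^{(\ell)})+O_P(h^2+m^{-1}+\cdots)$. The $m^{-1}$ (initial error squared) emerges from a second-order cancellation in the \emph{expectation} of the particular combination $H\{g(\hat\theta_{(0)})/h\}+\{g(\theta_0)/h\}H'\{g(\hat\theta_{(0)})/h\}-I\{g(\theta_0)>0\}$ when expanded around $\theta_0$ (Lemma~\ref{lemma: et}); it is this algebraic structure, not a Newton-type Hessian match at $\tilde\theta_\ell^{\ast}$, that produces the quadratic rate.

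A second, related gap: Condition~7 mandates a \emph{step-dependent} bandwidth $h_b=\max\{n^{-1/2},m^{-2^{b-2}}\}$, so both the map $T$ and any candidate fixed point change at every iteration. This schedule is essential in the paper's proof: the remainder after step $b$ carries an $h_b^2$ bias together with stochastic terms of order $\{h_b\log n/n\}^{1/2}$ and $\{\log n/(n^2 h_b)\}^{1/2}$, and only the doubly-exponential decrease of $h_b$ drives $h_B^2=o(n^{-1/2})$ while keeping those stochastic pieces controlled. Your proposal treats $\tilde\theta_\ell^{\ast}$ as a single target independent of $b$, which is incompatible with this mechanism. Reworking the argument around $\theta_0$ with the varying bandwidth, as the paper does, is what makes the proof go through.
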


The refined distributed principal support vector machines through $B$ times iteration is as efficient as the original  support vector machines based on the entire data set.
More importantly, such asymptotic efficiency is attained for a wide range of $m$, which suggests the refined distributed algorithm is advocated when the batch size $m$ is {relatively} small.

As for the initial value $\hat\theta_{(0),\ell}$,  the following proposition suggests that it can be chosen as any $\hat{\theta}_{j,\ell}$ for $1\le j\le k$,  where $\hat{\theta}_{j,\ell}$ is the estimator of ${\theta}_{0,\ell}$ based on the $j$th batch of data. Without loss of generality, we set $\hat\theta_{(0),\ell}=\hat\theta_{1,\ell}$.
\begin{proposition}\label{prop: initial value} Assume the regularity conditions 1-5 in the Appendix, then we have
	\begin{align*}
	\hat\theta_{j,\ell}-\theta_{0,\ell}=O_P(m^{-1/2}),
	\end{align*}
	for $j=1,\ldots,k$ and $\ell=1,\ldots,R-1$.
\end{proposition}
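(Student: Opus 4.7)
The plan is to observe that Proposition~\ref{prop: initial value} is, in essence, a restatement of the root-$n$ consistency of the original principal support vector machines estimator with the global sample size $n$ replaced by the batch size $m$. Because the partition $\calD_1,\dots,\calD_k$ is drawn at random, each $\calD_j$ is itself an i.i.d.\ sample of size $m$ from the joint distribution of $(X,Y)$, and the batch-level objective $\hat L_j(\psi_\ell,t_\ell)$ has exactly the same functional form as the full-sample objective $\hat L(\psi_\ell,t_\ell)$ in (\ref{sample:psvm}). Consequently, every step of the asymptotic analysis underlying Theorem~\ref{theo: asy psvm} carries over verbatim after substituting $m$ for $n$, which is essentially what we need to show.

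Concretely, I would first establish consistency $\hat\theta_{j,\ell}\to \theta_{0,\ell}$ in probability. Since the hinge loss is convex in $\theta$, the random map $\hat L_j(\theta)$ is convex, and a uniform law of large numbers on compact sets, combined with uniqueness of the population minimizer $\theta_{0,\ell}$ (which holds under regularity conditions 1--5), delivers convergence of the minimizers through the convexity lemma of Hjort and Pollard. The moment and density hypotheses listed in conditions 1--5 supply the integrability and identifiability needed for that step.

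Next I would upgrade consistency to the $\sqrt{m}$-rate by a Bahadur-type expansion. The subgradient of $\hat L_j$ at $\theta_{0,\ell}$ equals the empirical mean of $D_{\theta_{0,\ell}}(Z^{(\ell)})$ plus correction terms arising from the plug-in estimators $\hat\mu_j$ and $\hat\Sigma_j$; these corrections are themselves $O_P(m^{-1/2})$ by the ordinary CLT on the batch. Under conditions 4--5, the density factors appearing in $H_{\theta_{0,\ell}}$ exist and are bounded away from zero, so $H_{\theta_{0,\ell}}$ is positive definite. Applying the convex argmin theorem to the quadratic approximation
\begin{align*}
\hat L_j\bigl(\theta_{0,\ell}+m^{-1/2}u\bigr)-\hat L_j(\theta_{0,\ell})
= \tfrac{1}{2}\,u\trans H_{\theta_{0,\ell}} u + u\trans W_{j,m} + o_P(1),
\end{align*}
with $W_{j,m}=m^{-1/2}\sum_{i\in\calI_j} D_{\theta_{0,\ell}}(Z_i^{(\ell)})+o_P(1)$, yields $\sqrt{m}\,(\hat\theta_{j,\ell}-\theta_{0,\ell})=-H_{\theta_{0,\ell}}^{-1} W_{j,m}+o_P(1)=O_P(1)$, as claimed.

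The main technical hurdle will be controlling the stochastic equicontinuity needed to validate the quadratic expansion, because the hinge loss is non-differentiable and classical Taylor expansion is unavailable. The same difficulty is already overcome in \cite{PSVM2011}, typically via an empirical-process maximal inequality or through the directional derivative of the hinge loss, and the same argument applies per batch. Once that step is in place, the conclusion $\hat\theta_{j,\ell}-\theta_{0,\ell}=O_P(m^{-1/2})$ follows, uniformly over $j=1,\dots,k$ and $\ell=1,\dots,R-1$, since $k$ batches of the same type are handled by a union bound together with the uniformity of the empirical-process bound in the sample indices.
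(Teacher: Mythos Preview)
Your approach is correct and aligns with the paper's own treatment: the paper simply invokes the Bahadur expansion from Theorem~6 of \cite{PSVM2011} applied to the $j$th batch, i.e., equation~(\ref{eq:expan thetaj}),
\[
\hat\theta_{j,\ell}=\theta_{0,\ell}+m^{-1}\sum_{i\in\calI_j} S_{\theta_{0,\ell}}(Z_i^{(\ell)})+o_P(m^{-1/2}),
\]
which immediately gives $O_P(m^{-1/2})$. Your proposal unpacks that same argument (convexity lemma for consistency, quadratic approximation plus convex argmin theorem for the rate), so the route is identical, just more explicit.

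One small overreach: your final sentence claims the bound holds ``uniformly over $j=1,\dots,k$'' via a union bound. Since $k\to\infty$, a union bound over $k$ batches would require exponential tail bounds, not merely $O_P$ statements, and the proposition as stated does not assert uniformity in $j$. The paper only needs the result for a single batch (it sets $\hat\theta_{(0),\ell}=\hat\theta_{1,\ell}$), so pointwise in $j$ is enough; you should drop the uniformity claim or note that only a fixed $j$ is required.
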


\section{Extensions to Principal Weighted Support Vector Machines }
Like sliced inverse regression, principal support vector machines may work poorly for binary $Y$ when $d>1$, see detail discussions in \cite{BINARYSDR}. To fix this problem, \cite{WPSVM2016} proposed the principal weighted support vector machines for binary $Y=\{-1,+1\}$, which modifies the sample level loss function (\ref{sample:psvm}) as
\begin{align}\label{sample:wpsvm}
\hat L(\psi_{\ell},t_{\ell})=\psi_\ell\trans \hat \Sigma\psi_\ell + \lambda E_n[w_{\pi_{\ell}}(Y)[1-Y\{\psi_{\ell}\trans(X-\hat\mu)-t_{\ell}\} ]^+],
\end{align}
where $w_{\pi_{\ell}}(Y) = 1 - \pi_{\ell}$ if $Y = 1$
and $\pi_{\ell}$
if  $Y=-1$
with a weight $\pi_{\ell} \in (0, 1)$
that controls the relative importance of the two classes for $\ell=1,\ldots,R$. Then, the sample level candidate matrix is
\begin{equation}
\hat M_n^{WL} = \sum\limits_{\ell  = 1}^R {{{\hat \psi }_{n,\ell }}} \hat \psi _{n,\ell }^T,
\end{equation}
where ${\hat \theta _{n,\ell }} = {(\hat \psi _{n,\ell }^T,{\hat t_{n,\ell }})^T}$ are the corresponding minimizer. Similarly, the corresponding estimation of the central subspace $\calV$ can be derived by the first $d$ eigenvectors. And according to \cite{WPSVM2016},  we define
\begin{equation}
\begin{gathered}
  {\tilde D_{{\theta _{0,\ell }}}}({Z^{(\ell )}}) = {(2\psi _{0,\ell }^T\Sigma ,0)^T} - \lambda \left\{ {{w_{{\pi _\ell }}}\left( {{{\tilde Y}^{(\ell )}}} \right)\tilde X{{\tilde Y}^{(\ell )}}I(1 - \theta _{0,\ell }^T\tilde X{{\tilde Y}^{(\ell )}} > 0)} \right\} \hfill \\
  {\tilde H_{{\theta _{0,\ell }}}} = 2diag(\Sigma ,0) + \lambda \sum\limits_{\tilde y = 1, - 1} P ({{\tilde Y}^{(\ell )}} = \tilde y){w_{{\pi _\ell }}}\left( {\tilde y} \right){f_{\psi _{0,\ell }^TX|{{\tilde Y}^{(\ell )}}}}({t_\ell } + \tilde y|\tilde y)E(\tilde X{{\tilde X}^T}|\psi _{0,\ell }^TX = {t_\ell } + \tilde y) \hfill \\
\end{gathered}
\end{equation}
and
\begin{equation}
{\tilde S_{{\theta _{0,\ell }}}}({Z^{(\ell )}}) =  - \tilde H_{{\theta _{0,\ell }}}^{ - 1}{\tilde D_{{\theta _{0,\ell }}}}({Z^{(\ell )}})
\end{equation}
in a similar fashion as with the original PSVM. And the following asymptotic property stands.

\begin{theorem}\label{theo: asy wpsvm}  Let $M_0^{WL} = \sum\limits_{\ell  = 1}^R {{\psi _{0,\ell }}\psi _{0,\ell }^T} $ be the true candidate matrix, then, assume that $\Sigma $ is positive definite and regularity conditions 1-5 in the Appendix holds true. We have
\begin{align*}
	&{n^{1/2}}vec\left( {\hat M_n^{WL} - M_0^{WL}} \right) \to N\left( {{0_{{p^2}}},{\Sigma _M}} \right),\\
	&n^{1/2}\textup{vec}(\hat \calV_n^{WL}-\calV_0^{WL})\rightarrow N(0_{pd},\Sigma_{V}),
	\end{align*}
\end{theorem}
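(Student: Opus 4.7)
The plan is to mirror the proof of Theorem \ref{theo: asy psvm} essentially verbatim, exploiting the fact that the weighting function $w_{\pi_\ell}(Y)$ depends only on $Y$ and not on $(\psi_\ell,t_\ell)$, so the weighted hinge loss has the same structural features (convex, piecewise linear in $\theta_\ell$, bounded weight) as the unweighted hinge loss. The three building blocks I would assemble in order are: (i) a Bahadur-type representation for each slice-wise minimizer $\hat\theta_{n,\ell}$, (ii) a joint central limit theorem and delta-method for $\hat M_n^{WL}$, and (iii) an eigenvector perturbation argument to pass from $\hat M_n^{WL}$ to $\hat\calV_n^{WL}$.

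First I would establish consistency and a $\sqrt n$-rate linearization
\begin{align*}
\sqrt{n}\,(\hat\theta_{n,\ell}-\theta_{0,\ell}) \;=\; \tilde S_{\theta_{0,\ell}}\!\bigl(Z^{(\ell)}\bigr)\text{-based sum} + o_P(1) \;=\; -\tilde H_{\theta_{0,\ell}}^{-1}\,\frac{1}{\sqrt n}\sum_{i=1}^n \tilde D_{\theta_{0,\ell}}(Z_i^{(\ell)}) + o_P(1).
\end{align*}
Because the weighted hinge criterion is convex and piecewise linear, I would invoke the convexity lemma of Hjort and Pollard (1993) together with the Pakes--Pollard (1989) theory for Z-estimators with non-smooth criterion functions. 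The population gradient vanishes at $\theta_{0,\ell}$ by the first-order condition defining $\tilde D_{\theta_{0,\ell}}$, the Hessian $\tilde H_{\theta_{0,\ell}}$ arises as the derivative in quadratic-mean of the expected criterion (with the conditional density $f_{\psi_{0,\ell}^T X\mid\tilde Y^{(\ell)}}$ coming from the kink of the hinge at $1-\theta^T\tilde X\tilde Y=0$), and stochastic equicontinuity of the empirical process follows from a standard VC/bracketing argument since indicators $I(1-\theta^T\tilde X\tilde Y^{(\ell)}>0)$ form a VC class and $w_{\pi_\ell}$ is uniformly bounded. The only points that need re-checking relative to \cite{PSVM2011} are that the factor $w_{\pi_\ell}(\tilde y)$ carries through every moment calculation and that $\tilde H_{\theta_{0,\ell}}$ remains positive definite — guaranteed by $\pi_\ell\in(0,1)$ and regularity conditions 1--5.

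With the Bahadur representation in hand, I would stack the $R$ slice scores $\{\tilde S_{\theta_{0,\ell}}(Z^{(\ell)})\}_{\ell=1}^R$ and invoke the multivariate CLT to get joint asymptotic normality of $(\hat\psi_{n,1},\ldots,\hat\psi_{n,R})$. The delta method applied to the map $(\psi_1,\ldots,\psi_R)\mapsto \sum_\ell \psi_\ell\psi_\ell^T$ then yields
\begin{align*}
\sqrt{n}\,\mathrm{vec}(\hat M_n^{WL}-M_0^{WL}) \;\to\; N(0_{p^2},\Sigma_M),
\end{align*}
where the $(I_{p^2}+K_{p,p})$ factors in $\Sigma_M$ come from the identity $\mathrm{vec}(\psi\psi^T)$ having Jacobian $(I+K_{p,p})(\psi\otimes I_p)$, and $\Lambda_{rt}$ is recovered as the cross-covariance $E\{\tilde S_{\theta_{0,r}}(Z^{(r)})\tilde S_{\theta_{0,t}}^T(Z^{(t)})\}$ (taking the upper-left $p\times p$ block of the $(p+1)\times(p+1)$ influence covariance, i.e.\ the $\psi$-components). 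For the second statement I would apply the standard perturbation expansion for eigenvectors of symmetric matrices: at a matrix $M_0^{WL}$ with nonzero eigenvalues collected in $D$ and eigenvectors $\Gamma$, the map $M\mapsto \calV(M)$ has derivative $(D^{-1}\Gamma^T\otimes I_p)$ (acting on $\mathrm{vec}(M-M_0^{WL})$), which produces $\Sigma_V=(D^{-1}\Gamma^T\otimes I_p)\Sigma_M(\Gamma D^{-1}\otimes I_p)$ via another delta-method step.

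The main obstacle will be step (i): handling the non-differentiable, weighted hinge loss rigorously. In particular, the quadratic expansion of the expected objective near $\theta_{0,\ell}$ requires a careful change-of-variables that picks up the conditional density at the hinge point, and the empirical-process remainder must be controlled uniformly in a $n^{-1/2}$-neighbourhood. Once this step is done, stacking over $\ell$, applying the delta method for the quadratic map, and invoking eigenvector perturbation are essentially mechanical and follow exactly as in the PSVM case; the weight $w_{\pi_\ell}$ never interacts with the parameter, so it only reweights the existing terms in $\tilde D_{\theta_{0,\ell}}$ and $\tilde H_{\theta_{0,\ell}}$ without changing the shape of any argument.
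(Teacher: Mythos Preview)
Your proposal is correct and follows exactly the route taken in the literature the paper relies on: the paper does not actually supply its own proof of Theorem~\ref{theo: asy wpsvm} but states it as a consequence of Theorem~3 in \cite{WPSVM2016} (itself modeled on Theorems~6--7 and Corollary~1 of \cite{PSVM2011}), which proceeds precisely by a Bahadur representation for each $\hat\theta_{n,\ell}$ via convex M-estimation arguments, then the delta method for the quadratic map $(\psi_1,\ldots,\psi_R)\mapsto\sum_\ell\psi_\ell\psi_\ell^T$, and finally the eigenvector perturbation result of \cite{Bura2008}. Your observation that $w_{\pi_\ell}(Y)$ is bounded and parameter-free, so it only reweights $\tilde D_{\theta_{0,\ell}}$ and $\tilde H_{\theta_{0,\ell}}$ without altering any structural argument, is exactly the point.
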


Then for the $j$th batch of data, the naive distributed algorithm for principal weighted support vector machines adopt the following loss functions
\begin{align*}
\hat L_j(\psi_{\ell},t_{\ell})=\psi_\ell\trans \hat \Sigma_j\psi_\ell + \lambda m^{-1}\sum_{i \in \calI_j}[w_{\pi_{\ell}}(Y_i)[1-Y_i\{\psi_{\ell}\trans(X_i-\hat\mu_j)-t_{\ell}\} ]^+].
\end{align*}
In addition,  we denote ${{\hat \theta }_{j,\ell }} = {(\hat \psi _{j,\ell }^T,{{\hat t}_{j,\ell }})^T}$ as the minimizer of ${{\hat L}_j}({\psi _\ell },{t_\ell })$. Then the corresponding sample level candidate matrix can be expressed as
\begin{equation}
{{\tilde M}^{WL}} = \sum\limits_{j = 1}^k {\hat M_j^{WL}} /k,
\end{equation}
where
\begin{equation}
\hat M_j^{WL} = \sum\limits_{\ell  = 1}^R {{{\hat \psi }_{j,\ell }}\hat \psi _{j,\ell }^T}.
\end{equation}

The following theorem confirms the asymptotic efficiency of the naive distributed algorithm of weighted principal support vector machines with sufficiently large $m$.

\begin{theorem}\label{theo: asy nwpsvm} In addition to the regularity conditions 1-6 listed in the Appendix, assume that $m\rightarrow \infty$ and  $k\rightarrow\infty$ such that $n=o(m^{2\gamma})$ where $1/2<\gamma\le 1$ is a positive constant specified in condition 6 in the Appendix. Then we have
	\begin{align*}
	&{n^{1/2}}vec\left( {{{\tilde M}^{WL}} - M_0^{WL}} \right) \to N\left( {{0_{{p^2}}},{\Sigma _M}} \right),\\
	&n^{1/2}\textup{vec}(\widetilde \calV^{WL}-\calV_0^{WL})\rightarrow N(0_{pd},\Sigma_{V}),
	\end{align*}
	in distribution.
	\end{theorem}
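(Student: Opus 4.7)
}
The plan is to mirror the argument that yields Theorem~\ref{theo: asy npsvm}, substituting the weighted influence function $\tilde S_{\theta_{0,\ell}}(Z^{(\ell)})=-\tilde H_{\theta_{0,\ell}}^{-1}\tilde D_{\theta_{0,\ell}}(Z^{(\ell)})$ defined in Section~5 for the unweighted $S_{\theta_{0,\ell}}$ used for ordinary PSVM. Once this substitution is justified, Theorem~\ref{theo: asy wpsvm} will give that a single global WPSVM estimator satisfies the desired $N(0_{p^2},\Sigma_M)$ limit with this modified influence function, and the task reduces to showing the average of the per-batch candidate matrices has the same asymptotic distribution.

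First, I would establish a batchwise Bahadur-type expansion for the per-machine minimizer $\hat\theta_{j,\ell}$ of $\hat L_j$: under regularity conditions~1--6, and mimicking the empirical-process argument used by \citet{WPSVM2016} to derive Theorem~\ref{theo: asy wpsvm}, one obtains
\begin{align*}
\hat\theta_{j,\ell}-\theta_{0,\ell}=m^{-1}\sum_{i\in\calI_j}\tilde S_{\theta_{0,\ell}}(Z_i^{(\ell)})+R_{j,\ell},\qquad \|R_{j,\ell}\|=O_P(m^{-\gamma}),
\end{align*}
where $\gamma\in(1/2,1]$ is the rate appearing in condition~6. The weights $w_{\pi_\ell}(Y)$ enter only as bounded multipliers of the hinge loss, so the convexity-plus-smoothness arguments that yield the remainder rate for PSVM carry over verbatim; the weighted $\tilde D,\tilde H$ replace the unweighted $D,H$ throughout.

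Second, I would expand each $\hat\psi_{j,\ell}\hat\psi_{j,\ell}^{\trans}-\psi_{0,\ell}\psi_{0,\ell}^{\trans}$ to first order in $\hat\psi_{j,\ell}-\psi_{0,\ell}$ and average over batches. Writing $\tilde S_{\theta_{0,\ell}}=(\tilde S^\psi_{\theta_{0,\ell}}{}^{\trans},\tilde S^t_{\theta_{0,\ell}})^{\trans}$, this produces
\begin{align*}
\tilde M^{WL}-M_0^{WL}=\sum_{\ell=1}^R\Bigl[\,n^{-1}\sum_{i=1}^n\bigl\{\psi_{0,\ell}\tilde S^\psi_{\theta_{0,\ell}}(Z_i^{(\ell)})^{\trans}+\tilde S^\psi_{\theta_{0,\ell}}(Z_i^{(\ell)})\psi_{0,\ell}^{\trans}\bigr\}+r_{n,\ell}\Bigr],
\end{align*}
where $r_{n,\ell}$ collects the linear-in-$R_{j,\ell}$ and quadratic-in-$(\hat\psi_{j,\ell}-\psi_{0,\ell})$ contributions. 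The linear term is controlled by $k^{-1}\sum_j\|R_{j,\ell}\|=O_P(m^{-\gamma})$, and the quadratic term by $k^{-1}\sum_j\|\hat\psi_{j,\ell}-\psi_{0,\ell}\|^2=O_P(m^{-1})$. Under the condition $n=o(m^{2\gamma})$, both are $o_P(n^{-1/2})$, so the leading linear term drives the distributional limit. Applying the multivariate CLT to the $n^{-1}\sum_i$ average (after vectorising and collecting the $\ell$-sum) yields the same asymptotic covariance $\Sigma_M$ obtained for $\hat M_n^{WL}$ in Theorem~\ref{theo: asy wpsvm}. The second conclusion follows from the eigenvector delta method exactly as in Theorem~\ref{theo: asy npsvm}: the map $M\mapsto\calV$ is differentiable at $M_0^{WL}$ with derivative $D^{-1}\Gamma^{\trans}\otimes I_p$, so $n^{1/2}\mathrm{vec}(\widetilde\calV^{WL}-\calV_0^{WL})$ converges to $N(0_{pd},\Sigma_V)$.

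The main obstacle will be the Bahadur remainder step: the objective function is only piecewise smooth because of the hinge and the indicator-valued weight $w_{\pi_\ell}$, so extracting the rate $\|R_{j,\ell}\|=O_P(m^{-\gamma})$ requires a careful entropy/bracketing argument for the weighted empirical process. Condition~6 is designed precisely to supply the modulus of continuity of the random criterion near $\theta_{0,\ell}$ that delivers this $\gamma>1/2$; with this in hand the remainder control, and hence the $n=o(m^{2\gamma})$ budget, are sharp enough for the distributed average to match the oracle variance.
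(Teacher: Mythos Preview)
Your proposal is essentially correct and follows the same strategy as the paper: invoke the Bahadur expansion for $\hat\theta_{j,\ell}$ with the weighted influence function $\tilde S_{\theta_{0,\ell}}$ (borrowed from \citet{WPSVM2016}), then repeat the averaging argument of Theorem~\ref{theo: asy npsvm}; the eigenvector limit follows by the same delta method.

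There is one organizational difference worth flagging. The paper's proof of Theorem~\ref{theo: asy npsvm} does not bound the random Bahadur remainder $R_{j,\ell}$ by $O_P(m^{-\gamma})$; instead it splits $\tilde M^{WL}-M_0^{WL}$ into a bias part $E\hat M_j-M_0$ and a centered part $\hat M_j-E\hat M_j$. Assumption~6 is stated as a condition on the \emph{expectation} $E\hat\theta_{j,\ell}-\theta_{0,\ell}=O(m^{-\gamma})$, so it controls the bias directly, while the centered parts are i.i.d.\ across batches and handled by a block CLT. Your route---bounding each $R_{j,\ell}$ by $O_P(m^{-\gamma})$ and then averaging---reaches the same place, but the step ``$\|R_{j,\ell}\|=O_P(m^{-\gamma})$'' is slightly stronger than what Assumption~6 literally gives, and to make it rigorous you would still need to separate $R_{j,\ell}$ into its mean (handled by Assumption~6) and a centered piece (whose average over $k$ i.i.d.\ batches is $o_P(m^{-1/2})$). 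Once that split is made, your argument and the paper's coincide. A minor aside: $w_{\pi_\ell}(Y)$ is a bounded scalar weight, not indicator-valued, so the ``main obstacle'' you anticipate is milder than stated---the paper disposes of it in one line by noting $|w_{\pi_\ell}(y)|\le 1$, which preserves $E\{\tilde S_{\theta_{0,\ell}}(Z^{(\ell)})\}=0$.
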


For the refined distributed algorithm of principal weighted support vector machines, we consider a smooth version of (\ref{sample:wpsvm})
\begin{align}\label{smooth:wpsvm}
\hat L(\psi_{\ell},t_{\ell})=\psi_\ell\trans \hat \Sigma\psi_\ell + \lambda n^{-1} \sum_{i=1}^n w_{\pi_{\ell}}(Y_i)K_h[1- Y_i\{\psi_{\ell}\trans(X_i-\hat\mu)-t_{\ell}\} ].
\end{align}
Similar to (\ref{eq: smooth_theta}), the optimal $\theta_{\ell}$ that minimizes (\ref{smooth:wpsvm}) should satisfy
\begin{align}\nonumber
\theta_{\ell}=& \big[ \lambda n^{-1}\sum_{i=1}^n  w_{\pi_{\ell}}(Y_i)\hat X_i\hat X_i\trans
H^\prime\{g(X_i, Y_i,\theta_{\ell})/h\}/h+2\diag(\hat\Sigma,0)\big]^{-1}
\\\label{update:theta1}
&\big[\lambda n^{-1}\sum_{i=1}^n  w_{\pi_{\ell}}(Y_i)\hat X_i Y_i
\{H\{g(X_i,Y_i,\theta_{\ell})/h\}+H^\prime\{g(\hat X_i, Y_i,\theta_{\ell})/h\}/h\}\big].
\end{align}

Parallel to the developments in the previous section, we will solve the optimization problem (\ref{sample:wpsvm}) through distributed estimation and recursive programming. Given the $(B-1)$th step estimator $\tilde\theta_{(B-1),\ell}$ $(B\geq 1)$ we calculate the following quantities based on the $j$th batch of data:
\begin{align*}
& \tilde U_{(B-1),j,\ell}=n^{-1}\sum_{i \in \calI_j} w_{\pi_{\ell}}(Y_i)\hat X_i\hat X_i\trans
H^\prime\{g(\hat X_i, Y_i,\tilde\theta_{(B-1),\ell})/h\}/h,\\
& \tilde V_{(B-1),j,\ell}=n^{-1}\sum_{i \in \calI_j} w_{\pi_{\ell}}(Y_i)\hat X_i\tilde Y^{(\ell)}_i
\{H\{g(\hat X_i,Y_i,\tilde\theta_{(B-1),\ell})/h\}+H^\prime\{g(\hat X_i, Y_i,\tilde\theta_{(B-1),\ell})/h\}/h\}\big].
\end{align*}
In view of (\ref{update:theta1}),  we then update the estimation as
\begin{align*}
\tilde\theta_{(B),\ell}=\big[\sum_{j=1}^k \{ \tilde U_{(B-1),j,\ell} + 2\lambda^{-1}\diag(\hat U_{j},0)\}\big]^{-1} \sum_{j=1}^k  \tilde V_{(B),j,\ell}.
\end{align*}

And we can further construct the candidate matrix and utilize the top $d$ eigenvectors to estimate the central subspace $\calS_{Y\mid X}$. Similarly, the candidate matrix will be
\begin{equation}
{{\tilde M}_{\left( B \right)}} = \sum\limits_{\ell  = 1}^R {{{\tilde \psi }_{\left( B \right),\ell }}\tilde \psi _{\left( B \right),\ell }^T} ,
\end{equation}
where ${{\tilde \theta }_{\left( B \right),\ell }} = {\left( {{{\tilde \psi }_{\left( B \right),\ell }},{{\tilde t}_{\left( B \right),\ell }}} \right)^T}$.

\par
Moreover, along with the theoretical investigations in Theorem \ref{theo: asy rpsvm}, we can also establish the asymptotic efficiency results for the naive and refined distributed estimators of principal weighted support vector machines.

\begin{theorem}\label{theo: asy rwpsvm}  Assume the regularity conditions 1-5 and 7 in the Appendix holds true. If $\widetilde\theta_{(0),\ell}-\theta_{0,\ell}=O_P(m^{-1/2})$ for $1\le \ell \le R-1$, then for $B\ge \lceil C_0 \log_2(\log_m n) \rceil $ with some positive constant $C_0$,
	\begin{align*}
	&n^{1/2}\textup{vec}(\widetilde M_{(B)}-M_{0})\rightarrow N(0_{p^2},\Sigma_{M}),\\
	&n^{1/2}\textup{vec}(\widetilde \calV_{(B)}-\calV_{0})\rightarrow N(0_{pd},\Sigma_{V}),
	\end{align*}
	in distribution as $n\rightarrow\infty$.
\end{theorem}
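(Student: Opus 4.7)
\textbf{Proof proposal for Theorem \ref{theo: asy rwpsvm}.} The plan is to mirror the argument used for Theorem \ref{theo: asy rpsvm}, checking that the only structural modifications between PSVM and PWSVM, namely the bounded weight $w_{\pi_\ell}(Y)\in(0,1)$ and the use of the binary label $Y$ in place of the sliced label $\tilde Y^{(\ell)}$, do not affect any of the rate calculations. First I would establish an oracle benchmark: using Theorem \ref{theo: asy wpsvm} together with the smoothness of $H$ and standard empirical process arguments, I would show that the minimizer $\hat\theta^{sm}_{n,\ell}$ of the smoothed loss (\ref{smooth:wpsvm}) admits the Bahadur representation
\begin{align*}
\sqrt{n}(\hat\theta^{sm}_{n,\ell}-\theta_{0,\ell})=-\tilde H_{\theta_{0,\ell}}^{-1}\,\frac{1}{\sqrt n}\sum_{i=1}^n \tilde D_{\theta_{0,\ell}}(Z_i^{(\ell)})+o_P(1),
\end{align*}
provided the bandwidth satisfies $h\to 0$ with $nh^{2}\to\infty$ and $nh^{4}\to 0$, so that the smoothing bias is $o(n^{-1/2})$ and the stochastic remainder from the $H'/h$ term is negligible. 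This guarantees that the smoothed full-sample estimator attains exactly the variance $\Sigma_M$ appearing in Theorem \ref{theo: asy wpsvm}.

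Next I would carry out the key contraction step for the Newton-type update (\ref{update:theta1}). Writing
\begin{align*}
A_n(\theta_\ell)=\lambda n^{-1}\sum_{i=1}^n w_{\pi_\ell}(Y_i)\hat X_i\hat X_i^{\T} H'\{g(\hat X_i,Y_i,\theta_\ell)/h\}/h+2\diag(\hat\Sigma,0),\\
b_n(\theta_\ell)=\lambda n^{-1}\sum_{i=1}^n w_{\pi_\ell}(Y_i)\hat X_i Y_i\bigl[H\{g(\hat X_i,Y_i,\theta_\ell)/h\}+H'\{\cdot\}/h\bigr],
\end{align*}
the update reads $\tilde\theta_{(B),\ell}=A_n(\tilde\theta_{(B-1),\ell})^{-1}b_n(\tilde\theta_{(B-1),\ell})$. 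Using the Lipschitz property of $H$ and boundedness of $w_{\pi_\ell}$, together with a uniform law of large numbers on a shrinking neighbourhood of $\theta_{0,\ell}$, I would show that $A_n(\theta_\ell)\to \tilde H_{\theta_{0,\ell}}$ in probability uniformly over such a neighbourhood, and that a Taylor expansion yields the quadratic-convergence-with-noise bound
\begin{align*}
\|\tilde\theta_{(B),\ell}-\theta_{0,\ell}\|=O_P(n^{-1/2})+O_P\bigl(\|\tilde\theta_{(B-1),\ell}-\theta_{0,\ell}\|^{2}\bigr).
\end{align*}

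Iterating this recursion from the initial error $\|\tilde\theta_{(0),\ell}-\theta_{0,\ell}\|=O_P(m^{-1/2})$ (which is supplied either by Proposition \ref{prop: initial value} applied to PWSVM or by Theorem \ref{theo: asy wpsvm} on a single batch), the second term after $B$ steps becomes $O_P(m^{-2^{B}})$; choosing $B\ge\lceil C_0\log_2(\log_m n)\rceil$ forces $m^{-2^{B}}=o(n^{-1/2})$ and therefore $\tilde\theta_{(B),\ell}=\hat\theta^{sm}_{n,\ell}+o_P(n^{-1/2})$. Combined with the Bahadur representation of the first step, this transfers the full-sample asymptotic distribution to $\tilde\psi_{(B),\ell}$, which after summation delivers
\begin{align*}
\sqrt n\,\text{vec}(\widetilde M_{(B)}-M_0)\to N(0_{p^2},\Sigma_M),
\end{align*}
by the continuous mapping / delta method on $\theta\mapsto\sum_\ell\psi_\ell\psi_\ell^{\T}$, yielding the $(I_{p^2}+K_{p,p})$ symmetrisation. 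The conclusion for $\widetilde\calV_{(B)}$ then follows by the standard eigenvector perturbation formula, which supplies the factor $(D^{-1}\Gamma^{\T}\otimes I_p)$ in $\Sigma_V$, exactly as in Theorem \ref{theo: asy rpsvm}.

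The main obstacle I anticipate is the uniform control of $A_n(\theta_\ell)$ near $\theta_{0,\ell}$: the factor $H'(\cdot)/h$ blows up like $1/h$, so obtaining a uniform rate of convergence to $\tilde H_{\theta_{0,\ell}}$ requires a careful bracketing argument combined with the bandwidth restriction $nh^{2}\to\infty$. Everything else, including accounting for $w_{\pi_\ell}(Y)$, is bookkeeping because the weight is bounded and deterministic given $Y$, so it can be absorbed into the summands of the relevant empirical process without altering any rate.
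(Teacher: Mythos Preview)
Your top-level reduction---mirror Theorem~\ref{theo: asy rpsvm} and observe that the bounded weight $w_{\pi_\ell}(Y)\in(0,1)$ can be absorbed into every summand without altering any rate---is exactly the paper's proof: it writes $\tilde\theta_{(1),\ell}-\theta_{0,\ell}=\tilde H_{n,h,\theta_{0,\ell}}^{-1}\tilde D_{n,h,\theta_{0,\ell}}$, states the weighted analogues (Propositions~\ref{prop: asp wh}--\ref{prop: asp wD}) of Propositions~\ref{prop: asp h}--\ref{prop: asp D}, and defers the rest to the proof of Theorem~\ref{theo: asy rpsvm}.

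Where your inner argument diverges is the bandwidth mechanism. You work with a single $h$ satisfying $nh^2\to\infty$ and $nh^4\to 0$ and write the clean Newton recursion $\|\tilde\theta_{(B),\ell}-\theta_{0,\ell}\|=O_P(n^{-1/2})+O_P(\|\tilde\theta_{(B-1),\ell}-\theta_{0,\ell}\|^{2})$. The paper instead invokes Assumption~7, which prescribes an \emph{adaptive} schedule $h_b=\max\{n^{-1/2},m^{-2^{b-2}}\}$: at each step the smoothing bias $O(h_b^2)$ (the term isolated in Lemma~\ref{lemma: et}) is matched to the square of the current error, and it is the shrinking of $h_b$, not pure Newton contraction at fixed $h$, that forces $h_B^2=o(n^{-1/2})$ after $B\ge\lceil C_0\log_2(\log_m n)\rceil$ rounds. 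Your recursion suppresses this $h^2$ bias; under your side condition $nh^4\to 0$ it is indeed $o(n^{-1/2})$ and the argument can be rescued, but as written you are proving a variant of the theorem under bandwidth conditions different from the Assumption~7 that the statement explicitly references. To match the theorem you should track the $h_b$-dependent bias explicitly and adopt the adaptive schedule. The separate ``oracle benchmark'' step is also unnecessary: the paper obtains the Bahadur expansion for $\tilde\theta_{(B),\ell}$ directly from the $\tilde H_{n,h}^{-1}\tilde D_{n,h}$ decomposition, without first analysing the smoothed full-sample minimizer.
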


\section{Simulation Studies}
In this section, we conduct extensive monte carlo simulations to examine our proposed methods. Our simulation studies include 36 different combinations of $(n,p,k)\in \{30000,50000,100000\}\times \{10,20,30\}\times \{10,50,100,500\}$.  We generate data from the following four models:

\begin{eqnarray*}
	\text{I:} \quad Y&=&x_1/(0.5+(x_2+1)^2)+\varepsilon\\
	\text{II:} \quad Y&=&x_1(x_1+x_2+1)+\varepsilon\\
	\text{III:} \quad Y&=&\text{sign}(x_1/(0.5+(x_2+1)^2)+\varepsilon)\\
	\text{IV:} \quad Y&=&\text{sign}(x_1(x_1+x_2+1)+\varepsilon)
\end{eqnarray*}
where $X=(x_1,\ldots,x_p)\trans \thicksim N(0_p,I_p)$ and the error $\varepsilon \thicksim N(0,0.5^2)$. Model I and II with continuous response are used in \cite{PSVM2011} to demonstrate the effectiveness of principal support vector machines (PSVM). Model III and IV with binary response which favor principal weighted support vector machines (WPSVM) are adopted in \cite{WPSVM2016}.

For principal support vector machines, the number of slices is set as $R=5$.  And for the weighted support vector machines, we also use $R=5$  values equally spaced in $[0,1]$ as the weights $\pi_\ell$'s. According to the theoretical findings in \cite{ASY1SVM2008} and \cite{ASY2SVM2008},  $\lambda$ is chosen as $2n^{2/3}$ for the  principal (weighted) support vector machines and is chosen as  $2m^{2/3}$ for the distributed algorithms. Similar to \cite{DQR2018}, the bandwidth $h$ is chosen as $ \max\{10(p/n)^{1/2},10(p/m)^{2^{B-2}},0.3\}$ for the $B$th step iteration in the refined distributed algorithm. And the total number of iterations in our numerical studies is set as $B=3$.  In addition,  we adopt the following smoothing function for the refined distributed algorithm for the refined distributed algorithm:
$$ H(v)=\left\{
\begin{array}{rcl}
&0      ,&\ \  v\in(-\infty,-1],\\
&\frac{1}{2}+\frac{15}{16}(v-\frac{2}{3}v^3+\frac{1}{5}v^5)      ,&\ \  v\in[-1,1],\\
&1     , &\ \  v\in[1,\infty).
\end{array} \right. $$

We first compare the accuracy and the computational cost of each method with a relatively small sample size $n=30000$. For the above four models, the structural dimension $d$ are all equal to $2$, and $\text{Span}(\calS_{Y\mid X})=\text{Span}(\calV)=(e_1,e_2)$. To assess the the performance of each estimator $\widehat{\calV}$, we adopt the distance measure $d(\widehat{\calV}, \calV)=\|P_{\widehat{\calV}}-P_{
\calV}\|_{F}$, where $P_{\calV}$ and $P_{\widehat{\calV}}$ are orthogonal projections on to $\calV$ and $\widehat{\calV}$, and $\|.\|_F$ stands for the Frobenius norm. Table 1 summarizes the mean of distances calculated from $200$ simulated samples for $n=30000$. In Table 2, we report the average running time for $n=30000$. As our computing resource is limited, the naive and refined distributed algorithms are actually implemented on a single machine with the computation time recorded as if in a parallel setting.
From Table 1, we observe that the refined distributed estimation of PSVM (RD-PSVM) and refined distributed estimation of WPSVM (RD-WPSVM) performs better than the naive distributed estimation of PSVM (ND-PSVM) and naive distributed estimation of WPSVM (ND-WPSVM) separately, especially when $k$ is getting larger. The accuracy of the naive algorithm drops considerably with $(m,k)=(60,500)$, which is consistent with the theoretical limitation of the naive approach explored in Theorem \ref{theo: asy npsvm}. The refined distributed estimator is quite robust to the choice of $k$ as expected from Theorem \ref{theo: asy rpsvm}. The refined distributed estimator with moderate $m$ is comparable to the standard principal (weighted) support vector machines in estimating $\calS_{Y\mid X}$. Moreover, compared to the original estimation, the refined distributed algorithm reduces the computational burden significantly, {which can be verified in} Table 2.

We in the next focus on the comparison of the two distributed algorithms with large $n$. Table 3 and 4 report $d(\widehat{\calV}, \calV)$
averaged over $200$ repetitions for $n = 50000$ and $n=100000$.
The original principal (weighted) support vector machine estimators are not included for comparison as the implementation is very {time-consuming}.
The naive distributed algorithm again tends to deteriorate when $k$ becomes larger. However, the mean distances are getting smaller as $n$ increases, which echoes the large sample results.  The two distributed algorithms are thus highly recommended for sufficient dimension  {reduction} with massive datasets, as they take into account both statistical accuracy and computational complexity.

\begin{table}[ht]\label{t1}
	\def~{\hphantom{0}}
	\caption{\it Average distances in estimating the central subspace with $n=30000$. }{%
		\begin{tabular}{cccccccccccc}
			\\	
			\multicolumn{6}{c}{Model I}&\multicolumn{6}{c}{Model II}\\[6pt]
			$p=10$	&	500      &   100	  &	  50	     & 	10	    &	 $p=10$	&	 500     &   100	  &	  50	    &	10	    \\
			ND-PSVM         &	0.257	 &   0.163    &   0.172   	 &  0.163	&	   ND-PSVM   	&	0.189	 &   0.106    &   0.106   	&  0.079	\\
			RD-PSVM         &	0.196	 &   0.150    &   0.147	     &  0.146	&	   RD-PSVM	    &	0.108	 &   0.091    &   0.088	    &  0.078	\\
			PSVM	        &	0.076    	 &   -    &     -    	 &   -   	&	   PSVM	    &	  0.076       & 	 -    &      -       &	    -	\\
			[6pt]																		
			$p=20$	&	500      &   100	  &	  50	     &	10	    &	 $p=20$	&	 500     &   100	  &	  50	   &	10	    \\
			ND-PSVM         &	0.554	 &   0.288    &   0.254   	 &  0.255	&	   ND-PSVM	    &	 0.381   &	 0.159	  &   0.148	   &	0.132	\\
			RD-PSVM         &	0.301	 &   0.317    &   0.225	     &  0.214	&	   RD-PSVM  	&	 0.147   &	 0.151	  &	  0.128	   &	0.127	\\
			PSVM	        &	  0.205       &   -    &     -     	 &    -  	&	   PSVM	    &	  0.116       &	 -	  &  -          &	   - 	\\
			[6pt]																		
			$p=30$	&   500      &   100	  &   50	     &	10	    &	 $p=30$	&	 500     &   100	  &	  50	   &	10	    \\
			ND-PSVM	        &	1.137	 &   0.332    &   0.326      &  0.259	&	   ND-PSVM	    &	 0.658   & 	 0.237	  &   0.199	   &	0.164	\\
			RD-PSVM         &	0.611	 &   0.402    &   0.297	     &  0.256	&	   RD-PSVM  	&	 0.301   &	 0.159    &	  0.180    &	0.160	\\
			PSVM      	&	0.249    	 &   -    &     -         &    -    	&	   PSVM	    &	 0.151        &	 -    &     - 	   &	-        \\
			[8pt]																		
			\multicolumn{6}{c}{Model III}&\multicolumn{6}{c}{Model IV}\\[6pt]
			$p=10$	&	500      &   100	  &   50	     &	10	    &	 $p=10$	&	 500     &   100	  &	  50	   &	10	    \\
			ND-WPSVM         &	0.656	 &   0.229    &   0.215   	 &  0.130	&	   ND-WPSVM	    &	 0.452   &	 0.139	  &   0.137	   &	0.101	\\
			RD-WPSVM         &	0.135	 &   0.128    &   0.118	     &  0.117	&	   RD-WPSVM	    &	 0.086   &   0.084	  &	  0.075	   &	0.073	\\
			WPSVM	        &	  0.085   	 &   -    &     -     	 &     - 	&	   WPSVM	    &	 0.089        &	 -    &   -         &	-     	\\
			[6pt]																		
			$p=20$	&	500      &   100	  &	  50	     &	10	    &	$p=20$	&	 500     &   100	  &	  50	   &	10	    \\
			ND-WPSVM         &	1.195	 &   0.373    &   0.328   	 &  0.183	&	   ND-WPSVM	    &	 1.242   &	 0.247	  &   0.155	   &	0.141	\\
			RD-WPSVM         &	0.195	 &   0.203    &   0.201	     &  0.173	&	   RD-WPSVM	    &	 0.164   &	 0.150	  &	  0.143	   &	0.123	\\
			WSPVM	        &	0.157     	 &   -    &      -        &   -    	&	   WPSVM	    &	 0.134        &	 -   &   -   	   & -	        \\
			[6pt]																		
			$p=30$	&	500      &   100	  &	  50	     &	10	    &	 $p=30$	&	 500     &   100	  &	  50	   &	10   	\\
			ND-WPSVM        &	1.342	 &   0.522    &   0.360   	 &  0.208	&	   ND-WPSVM   	&	 1.382   &	 0.313	  &   0.217	   &	0.191	\\
			RD-WPSVM	        &	0.475	 &   0.258    &   0.244	     &  0.228	&	   RD-WPSVM   	&	 0.238   &	 0.209	  &	  0.231	   &	0.152	\\
			WPSVM        & 0.181	     	 &   -    &     -     	 &    -      &	   WPSVM	    &	  0.167       &	 -    &     -       &	-    	\\
			[6pt]
	\end{tabular}}
\end{table}

\begin{table}[ht]\label{t2}
	\def~{\hphantom{0}}
	\caption{\it Average running time (in seconds) with $n=30000$. }{%
		\begin{tabular}{cccccccccccc}
			\\	
			\multicolumn{6}{c}{Model I}&\multicolumn{6}{c}{Model II}\\[6pt]
			$p=10$	&	k=500      &   k=100	  &	  k=50	     & 	k=10	    &	 $p=10$	&	 k=500     &   k=100	  &	  k=50	    &	k=10	    \\
			ND-PSVM         &	0.004	 &   0.011    &   0.028	     &  0.487	&	   ND-PSVM      &	 0.006   &   0.010	  &	  0.026	    &	0.228	\\
			RD-PSVM	        &	0.133	 &   0.242    &   0.386 	 &  1.848	&	   RD-PSVM     	&	 0.106   &	 0.218	  &   0.367 	&	1.454	\\
			PSVM        &	33.962    	 &   -   &      -	     &    -  	&	   PSVM     	&	10.702         &	 -   &	     -       &	    -	\\
			[6pt]																		
			$p=20$	&	k=500      &   k=100	  &	  k=50	     &	k=10	    &	 $p=20$	&	 k=500     &   k=100	  &	  k=50	   &	k=10	    \\
			ND-PSVM         &	0.005	 &   0.017    &   0.041	     &  0.702	&	   ND-PSVM     	&	 0.005   &	 0.014	  &	  0.030	   &	0.291	\\
			RD-PSVM	        &	0.173	 &   0.291    &   0.421	     &  2.070	&	   RD-PSVM     	&	 0.130   &   0.246	  &	  0.362	   &	1.585	\\
			PSVM        &	 51.707        &   -   &      -	     &     - 	&	   PSVM      &	  15.703       &	 -   &	   -        &	   - 	\\
			[6pt]
			$p=30$																		
			&   k=500      &   k=100	  &   k=50	     &	k=10	    &	 $p=30$	&	 k=500     &   k=100	  &	  k=50	   &	k=10	    \\
			ND-PSVM         &	0.016	 &   0.027    &   0.062	     &  0.982	&	   ND-PSVM      &	 0.007   &	 0.019	  &	  0.041	   &	0.374	\\
			RD-PSVM	        &	0.212	 &   0.370    &   0.493	     &  2.445	&	   RD-PSVM      &	 0.158   &	 0.256	  &	  0.406	   &	1.689	\\
			PSVM	        &	 79.971   	 &   -   &      -	     &     - 	&	   PSVM      &	   23.846      &	 -   &	   -  	   &	 -       \\
			[8pt]																		
			\multicolumn{6}{c}{Model III}&\multicolumn{6}{c}{Model IV}\\[6pt]
			$p=10$	&	k=500      &   k=100	  &   k=50	     &	k=10	    &	 $p=10$	&	 k=500     &   k=100	  &	  k=50	   &	k=10	    \\
			ND-PSVM         &	0.004	 &   0.020    &   0.045	     &  0.653	&	   ND-PSVM      &	 0.006   &	 0.016	  &	  0.041	   &	0.715	\\
			RD-WPSVM	        &	0.289	 &   0.346    &   0.459	     &  1.433	& 	   RD-WPSVM      &	 0.283   &	 0.358	  &	  0.417    &	1.504	\\
			WPSVM	        &	  49.845  	 &   -   &      -	     &     - 	&      WPSVM     	&	   53.106      &	 -   &	     -      &	    -  	\\
			[6pt]																		
			$p=20$	&	k=500      &   k=100	  &	  k=50	     &	k=10	    &	$p=20$	&	 k=500     &   k=100	  &	  k=50	   &	k=10	    \\
			ND-WPSVM         &	0.005	 &   0.021    &   0.055	     &  0.887	&	   ND-WPSVM      &	 0.005   &	 0.020	  &	  0.054	   &	0.919	\\
			RD-WPSVM        &	0.302	 &   0.373    &   0.419	     &  1.636	&	   RD-WPSVM      &	 0.304   &   0.357	  &	  0.434    &	1.696	\\
			WPSVM	        &	  74.560   	 &   -   &       -       &      - 	&	   WPSVM      &	    80.322     &	 -   &	    - 	   &	   -     \\
			[6pt]																		
			$p=30$	&	k=500      &   k=100	  &	  k=50	     &	k=10	    &	 $p=30$	&	 k=500     &   k=100	  &	  k=50	   &	k=10   	\\
			ND-WPSVM         &	0.008	 &   0.028    &   0.072	     &  1.197	&      ND-WPSVM      &	 0.007   &	 0.028	  &	  0.072	   &	1.187	\\
			RD-WPSVM	        &	0.301	 &   0.381    &   0.463	     &  1.944	&	   RD-WPSVM     	&	 0.304   &	 0.392	  &   0.460	   &	2.006	\\
			WPSVM	        &	  114.767       &  - &      -	     &    -      &	  WPSVM     	&	 122.767        &	 -  &	  -         &	 -    	\\
			[6pt]
	\end{tabular}}
\end{table}

\begin{table}\label{ta3}
	\def~{\hphantom{0}}
	\caption{\it Average distances in estimating the central subspace with $n=50000$. }{%
		\begin{tabular}{cccccccccccc}
			\\	
			\multicolumn{6}{c}{Model I}&\multicolumn{6}{c}{Model II}\\[6pt]
			$p=10$    	    &	k=500     &  k=100	   &   k=50	    &	k=10	    &	$p=10$     	    &	k=500     &   k=100	   &	k=50	    &	k=10	    \\
			ND-PSVM	                &   0.213	&  0.147   &   0.122	&	0.109	&	 ND-PSVM               &   0.193	&  0.137   &   0.081	&	0.069	\\
			RD-PSVM                 &   0.181 	&  0.104   &   0.106	&	0.093	&	 RD-PSVM                &   0.177 	&  0.100   &   0.080	&	0.067	\\
			[6pt]																		
			$p=20$          &	k=500     &  k=100	   &   k=50	    &	k=10	    &	$p=10$           &	k=500     &   k=100	   &	k=50	    &	k=10	    \\
			ND-PSVM	                &	0.319	&  0.209   &   0.191	&	0.166	&	 ND-PSVM	            &	0.207	&  0.138   &   0.103	&	0.121	\\
			RD-PSVM	                &	0.287 	&  0.174   &   0.181	&	0.138	&	 RD-PSVM                &	0.127 	&  0.117   &   0.097	&	0.122	\\
			[6pt]																		
			$p=30$       	&	k=500     &  k=100	   &   k=50	    &	k=10	    &	$p=30$       	&	k=500     &   k=100	   &	k=50	    &	k=10	    \\
			ND-PSVM	                &	0.427	&  0.255   &   0.247	&   0.209	&	 ND-PSVM	            &	0.304	&  0.148   &   0.149	&   0.129	\\
			RD-PSVM	                &	0.281 	&  0.274   &   0.210	&   0.206	&	 RD-PSVM    	        &	0.145 	&  0.127   &   0.127	&   0.119	\\
			[8pt]																		
			\multicolumn{6}{c}{Model III}&\multicolumn{6}{c}{Model IV}\\[6pt]
			$p=10$	&	k=500     &  k=100	   &   k=50	    &	k=10	    &	$p=10$	&	k=500    &   k=100	    &	k=50	    &	k=10	    \\
			ND-WPSVM	        &	0.294	&  0.191   &   0.154	&	0.097	&	 ND-WPSVM        &	0.218  &   0.110	&   0.085	&	0.060	\\
			RD-WPSVM         &	0.104	&  0.115   &   0.088	&	0.094	&	 RD-WPSVM    	&	0.062  &   0.121	&   0.063	&	0.061	\\
			[6pt]																		
			$p=20$	&	k=500     &  k=100	   &   k=50	    &	k=10	    &	$p=20$	&	k=500    &   k=100	    &	k=50	    &	k=10	    \\
			ND-WPSVM	        &	0.657	&  0.224   &   0.220	&	0.145	&	 ND-WPSVM        &	0.507  &   0.137	&   0.135	&	0.122	\\
			RD-WPSVM	        &	0.182 	&  0.179   &   0.147	&	0.141	&	 RD-WPSVM	    &	0.142  &   0.174	&   0.115	&	0.103	\\
			[6pt]																		
			$p=30$	&	k=500     &  k=100	   &   k=50	    &	k=10	    &	$p=30$	&	k=500    &   k=100	    &	k=50	    &	k=10   	\\
			ND-WPSVM	        &	1.106	&  0.301   &   0.223	&	0.164	&	 ND-WPSVM       &	1.096  &   0.649	&   0.150	&	0.156	\\
			RD-WPSVM	        &	0.255	&  0.259   &   0.191	&	0.170	&    RD-WPSVM	    &	0.286  &   0.217	&   0.196	&	0.127	\\
			[6pt]
	\end{tabular}}
\end{table}

\begin{table}\label{ta4}
	\def~{\hphantom{0}}
	\caption{\it Average distances in estimating the central subspace with $n=100000$. }{%
		\begin{tabular}{cccccccccccc}
			\\	
			\multicolumn{6}{c}{Model I}&\multicolumn{6}{c}{Model II}\\[6pt]
			$p=10$	&	k=500     &  k=100	   &   k=50	    &	k=10	    &	$p=10$	&	k=500    &   k=100	    &	k=50	    &	k=10	    \\
			ND-PSVM	        &   0.135	&  0.097   &   0.085	&	0.083	&	 ND-PSVM        &	0.061  &   0.052	&   0.048	&   0.053	\\
			RD-PSVM         &   0.112 	&  0.092   &   0.201	&	0.066	&	 RD-PSVM        &	0.055  &   0.040	&   0.048	&   0.052	\\
			[6pt]																		
			$p=20$	&	k=500     &  k=100	   &   k=50	    &	k=10	    &	$p=20$	&	k=500    &   k=100	    &	k=50	    &	k=10	    \\
			ND-PSVM	        &	0.170	&  0.155   &   0.137	&	0.135	&	 ND-PSVM	    &	0.114  &   0.093	&   0.075	&   0.077	\\
			RD-PSVM	        &	0.183 	&  0.128   &   0.139	&	0.098	&	 RD-PSVM        &	0.076  &   0.078	&   0.067	&   0.075	\\
			[6pt]																		
			$p=30$	&	500     &  100	   &   50	    &	10	    &	$p=30$	&	500    &   100	    &	50	    &	10	    \\
			ND-PSVM	        &	0.210	&  0.159   &   0.157	&   0.142	&	 ND-PSVM	    &	0.153  &   0.102	&   0.093	&	0.097	\\
			RD-PSVM	        &	0.181 	&  0.158   &   0.156	&   0.147	&	 RD-PSVM    	&	0.101  &   0.099	&   0.095	&	0.087	\\
			[8pt]																		
			\multicolumn{6}{c}{Model III}&\multicolumn{6}{c}{Model IV}\\[6pt]
			$p=10$	&	k=500     &  k=100	   &   k=50	    &	k=10	    &	$p=10$	&	k=500    &   k=100	    &  k=50	    &	k=10	    \\
			ND-WPSVM	        &	0.103	&  0.064   &   0.058	&	0.060	&	 ND-WPSVM        &	0.188  &   0.088	&   0.071	&	0.067	\\
			RD-WPSVM         &	0.068	&  0.055   &   0.045	&	0.047	&	 RD-WPSVM   	&	0.070  &   0.066	&   0.065	&	0.075	\\
			[6pt]																		
			$p=20$	&	k=500     &  k=100	   &   k=50	    &	k=10	    &	$p=20$	&	k=500    &   k=100	    &	k=50	    &	k=10	    \\
			ND-WPSVM	        &	0.279	&  0.137   &   0.106	&	0.091	&	 ND-WPSVM        &	0.167  &   0.094	&   0.099	&	0.078	\\
			MD-WPSVM	        &	0.147	&  0.114   &   0.105	&	0.102	&	 RD-WPSVM	    &	0.117  &   0.090	&   0.075	&	0.070	\\
			[6pt]																		
			$p=30$	&	k=500     &  k=100	   &   k=50	    &	k=10	    &	$p=30$	& k=500    & k=100	    &  k=50	    &	k=10   	\\
			ND-WPSVM	        &	0.377	&  0.168   &   0.126	&	0.112	&	 ND-WPSVM        &	0.239  &   0.109	&   0.116	&	0.098	\\
			RD-WPSVM        &	0.159	&  0.155   &   0.122	&	0.121	&    RD-WPSVM	    &	0.181  &   0.120	&   0.097	&	0.084	\\
			[6pt]
	\end{tabular}}
\end{table}

\section{Boeing 737 Data Analysis}
We now compare the principal support vector machines method with the proposed distributed algorithms in a real data analysis. The data contains $14$ index variables of $618178$ flights conducted by Boeing 737 throughout the landing process. The $14$ measured values during the landing procedure, include the maximal pitch angle, the maximal airspeed, the average airspeed, the maximal groundspeed, the total ground distance, the total elapsed time, the difference in fuel consumed engine $1$, the average fuel flow engine $1$, the maximal { Mach (a unit of speed)}, the average  {Mach}, the maximal absolute longitudinal acceleration, the maximal absolute lateral acceleration, the minimal  vertical acceleration, the maximal vertical {acceleration}. In a landing action, if the plane lands too fast, a huge vertical acceleration will be generated and accordingly a considerable gravitational force will be acted on the landing gear, jeopardizing the quality and safety of a flight. Therefore, we adopt the maximal vertical  {acceleration} during landing as the response $Y$ and the rest $13$ {indices} as the explanatory variables.

We first apply the original principal support vector machines method to this data for the estimation of the $\calS_{Y|X}$. For this data with $n=618178$, we set the number of slices as $R=10$. And the computation time for this massive data is $90028.17$ seconds. The top $3$ eigenvalues are $2484.7$, $69.2$, $0.5$ respectively, and the rest eigenvalues are all smaller than $0.02$. The ridge ratio based BIC-type method proposed by \cite{RidgeBIC} further yields $\hat d=2$  as the estimation of the structural dimension. Denote the resulting estimator of principal support vector machine by $\widehat\calV_n$, which is a $13\times 2$ matrix. And the distance correlation \citep{Dcor2007} between $Y$ and $\widehat\calV_n\trans X$, represented by $\text{dcor}(Y,\widehat\calV_n\trans X)$ is $0.2848$.

We also apply the two distributed algorithms of principal support vector machines to this data for comparisons. The estimator of $\calS_{Y|X}$ based on the distributed algorithms is denoted as $\widetilde\calV$. We calculate the $d({\widehat\calV_n}, \widetilde\calV)$, which {measures} the distance between the distributed estimator and the original estimator. The distance correlation between $Y$ and $\widetilde\calV\trans X$ is also included in our calculations. We summarize all these results along with the computation time in Table 5. It is clear that the distributed algorithms work much faster than the original principal support vector machines method. And the refined distributed estimator is generally very close to the  {original} estimator and is insensitive to the choice of $k$, which again supports our theoretical findings. Although the naive distributed estimator is not very close to the original estimator when $k$ is large, the corresponding distance correlation is very close to the oracle value $0.2848$, which implies the estimated directions still capture useful information for the regression.

\begin{table}[ht]
	\def~{\hphantom{0}}
	\caption{\it Results for the distributed algorithm applied to the Boeing 737 Data.}{%
		\begin{tabular}{lccccc}
		&	& $k=2000$ & $k=1000$ & $k=500$ & $k=100$  \\
$d({\widehat\calV_n}, \widetilde\calV) $	& ND-PSVM & $1.4092$ & $1.2913$ & $0.5025$ & $0.1234$  \\
	& RD-PSVM & $0.1308$ & $0.1293$ & $0.0980$ & $0.1035$  \\
$\text{dcor}(Y,\widetilde{\calV}\trans X)$	& ND-PSVM &$0.1928$ & $0.2784$ & $0.2859$ & $0.2855$  \\
	& RD-PSVM &$0.2840$ & $0.2840$ & $0.2843$ & $0.2841$  \\
	running time	& ND-PSVM &$0.0511$ & $0.2109$ & $0.5950$ & $14.0733$  \\
	& RD-PSVM &$2.1529$ & $3.5482$ & $4.9056$ & $18.9438$  \\
	\end{tabular}}
\end{table}

Finally, based on the sufficient dimension reduction estimators obtained, we can create the 3D scatter plot (Figure 2 and Figure 3) to scrutinize that whether the two distributed algorithms generate a close estimator to the original method or not. In this plot, the $x$ and $y$ axises, i.e. the axises on bottom plane, are ${X^T}{\hat v_1}$ and ${X^T}{\hat v_2}$, where ${\hat v_1}$ and ${\hat v_2}$ are the first two estimated directions. And the $z$ axis characterizes the response value. Moreover, in these figures, the circle points represent the feature extraction of the original method, while the  asterisks and squares represent the naive and refined distributed estimators respectively. It is clear that the two distributed methods can well capture the key regression patterns. In Figure 2 with $k=100$, the extracted features from different methods are almost in the same spatial position, which indicates that the two distributed estimators are close enough to the original principal support vector machines estimator.

\par
On the application front, the return implies that "Max Mach during Landing" (speed measurement) and “Average Mach during Landing” (speed measurement) are two most significant influencing factors to the vertical acceleration on landing moment, which is highly recognized by the aviation industry. It can at least show the effectiveness of our refined method to some extent from another side.

\begin{figure}
  \centering
  \includegraphics[width=115mm]{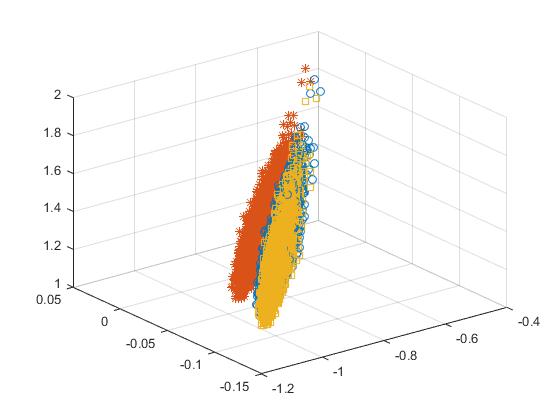}
  \caption{3D scatter plot of feature extraction with $k=100$.}
  \label{fig:boat1}
\end{figure}

\begin{figure}
\centering     
\subfigure[$k=500$]{\label{fig:a}\includegraphics[width=45mm]{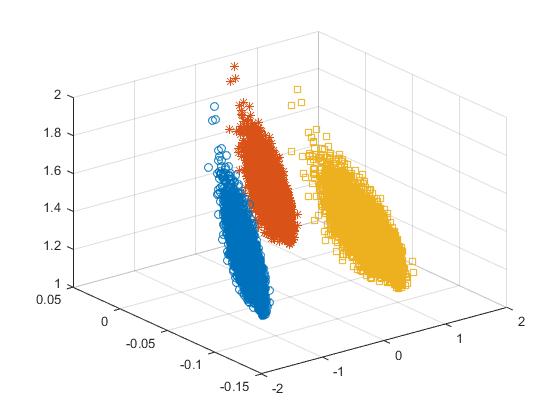}}
\subfigure[$k=1000$]{\label{fig:b}\includegraphics[width=45mm]{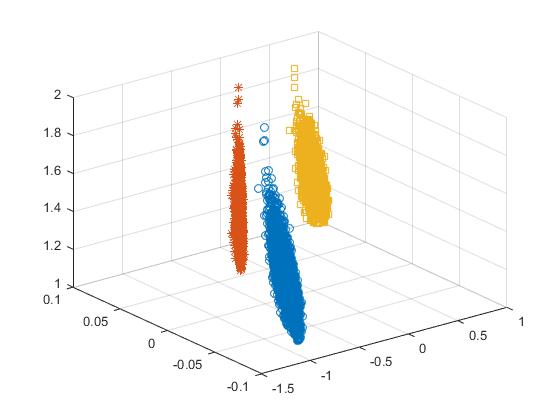}}
\subfigure[$k=2000$]{\label{fig:b}\includegraphics[width=45mm]{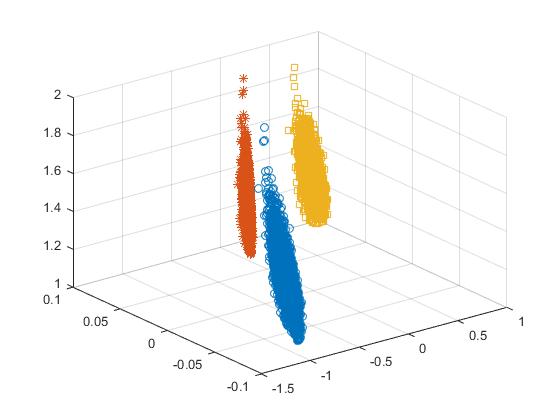}}
\caption{3D scatter plot of feature extraction with different $k$'s.}
\end{figure}

\section{Appendix}

\subsection{Algorithm Map}

\begin{center}
  \begin{minipage}{\linewidth} 
    \begin{algorithm}[H]
      \KwIn{Samples stored in the machines $S = \left\{ {{H_1}, \ldots ,{H_k}} \right\}$; $R-1$ dividing points; Smoothing function $H$; Regularization parameter $\lambda$; Number of iterations $T$; Bandwidth $\left\{ {{h_1}, \ldots ,{h_T}} \right\}$.}  
      \KwResult{$\left( {{{\tilde M}_{\left( T \right)}},{{\tilde V}_{\left( T \right)}}} \right)$} 
      \medskip
      Compute $\hat \Sigma  = Var\left( X \right)$. \;
      \For{${q_\ell } \in \left\{ {{q_1}, \ldots ,{q_{R - 1}}} \right\}$}{
        \For{$g = 1, \ldots ,T$}{
          \If{$g=1$}{
            Compute the initiator based on $H_1$
            \[{{\hat \theta }_{\left( 0 \right),\ell }} \in \mathop {\arg \min }\limits_\theta  {\theta ^T}diag\left( {\hat \Sigma ,0} \right)\theta  + \lambda {n^{ - 1}}\sum\limits_{i \in {\calI_1}} {{K_h}\left( {g\left\{ {{{\hat X}_i},\tilde Y_i^{\left( \ell  \right)},\theta } \right\}/h} \right)} \]
          }
          ${{\hat \theta }_{\left( 0 \right),\ell }}$ is assigned into all the machines. \;
          \For{$k = 1, \ldots K$}{
            Compute $\left( {{{\hat U}_{\left( g \right),k,\ell }},{{\hat V}_{\left( g \right),k,\ell }}} \right)$ according to equation \eqref{procedure}. \;
            Transform all $\left( {{{\hat U}_{\left( g \right),k,\ell }},{{\hat V}_{\left( g \right),k,\ell }}} \right)$ into the central machine. \;
          }
        Compute ${{\hat \theta }_{\left( T \right),\ell }}$ according to equation \eqref{procedure}. \;
        }
      Compute $\left( {{{\tilde M}_{\left( T \right)}},{{\tilde V}_{\left( T \right)}}} \right)$ according to the equation \eqref{m-proc}. \;

      }
      {\bf return} $\left( {{{\tilde M}_{\left( T \right)}},{{\tilde V}_{\left( T \right)}}} \right)$ \;
      \caption{\texttt{Refined distributed estimation}} 
      \label{alg:rpsvm}   
    \end{algorithm}
  \end{minipage}
\end{center}

\subsection{Regularity Conditions}
The following regularity assumptions are necessary for the theoretical investigations.\\ \\
\textit{Assumption} 1. $X$ has an open and  {convex} support and $E(\|X\|^2)<\infty$.\\ \\
\textit{Assumption} 2. The condition  {distribution} of $X|\tilde Y^{(\ell)}$ is dominated by the Lebesgue measure. \\ \\
\textit{Assumption} 3. For any linear independent $\psi, \delta \in \mathbb{R}^p $, $\tilde y=1, -1$ and $v\in \mathbb{R}$, the mapping function $u \mapsto E( X |\psi\trans X =u, \delta\trans X= v, \tilde Y^{(\ell)}=\tilde y) f_{\psi\trans X|\delta\trans X, \tilde Y^{(\ell)} } (u,v|\tilde y) $ is continuous, where  $f_{\psi\trans X|\delta\trans X, \tilde Y^{(\ell)} } $ is the conditional density function of $\psi\trans X$ given $\delta\trans X$ and $ \tilde Y^{(\ell)}$.\\ \\
\textit{Assumption} 4. There exists a  {nonnegative} $\mathbb{R}^{p+1}$-function $C(v,\tilde y)$ with $E\{C(\delta\trans X,\tilde Y^{(\ell)})|\tilde Y^{(\ell)}=\tilde y\} \le \infty$ and $vE( X |\psi\trans X =u, \delta\trans X= v, \tilde Y^{(\ell)}=\tilde y) f_{\psi\trans X|\delta\trans X, \tilde Y^{(\ell)} } (u,v|\tilde y) \le C(v,\tilde y)$, where the  {inequality} holds componentwise. \\ \\
\textit{Assumption} 5. There exists a {nonnegative} function $c_0(v,\tilde y)$ with $E\{c_0(\delta\trans X,\tilde Y^{(\ell)})|\tilde Y^{(\ell)}=\tilde y\} \le \infty$ and $ f_{\psi\trans X|\delta\trans X, \tilde Y^{(\ell)} } (u,v|\tilde y) \le c_0(v,\tilde y)$, where the  {inequality} holds componentwise. \\ \\
\textit{Assumption} 6. There exists a  {nonnegative} constant $\gamma$ such that $1/2<\gamma\leq 1$ $E\hat\theta_{j,\ell}-\theta_{j,\ell}=O_P(m^{-\gamma})$. \\ \\
\textit{Assumption} 7. Assume the bandwidth $h$ satisfies that $h\rightarrow 0$ and $\log n/nh=o(1)$. In addition, for the $b$th iteration, the bandwidth is chosen as $h:=h_b=\max \{ {{n^{ - 1/2}},{m^{ - {2^{b - 2}}}}} \}$ for $1 \leqslant b \leqslant B$. \\ \\

\begin{remark}
Assumptions 1-5 are all utilized in \cite{PSVM2011} and \cite{WPSVM2016} to study the asymptotic  {behavior} of (weighted) principal support vector machines. These are common regularity conditions for the  {asymptotic} analysis of support vector machine related problems. Assumption 6 can be regarded as a conclusion from Theorem 6 in \cite{PSVM2011}, which asserts that
\begin{align*}
\hat\theta_{j,\ell}=&\theta_{j,\ell}+m^{-1} \sum_{i\in \calI_j} S_{\theta_{0,\ell}}(Z_i^{(\ell)}) +o_P(m^{-1/2}).
\end{align*}
As $E\{ S_{\theta_{0,\ell}}(Z_i^{(\ell)})\}=0$, it is then natural to assume that $E\hat\theta_{j,\ell}-\theta_{j,\ell}=O_P(m^{-\gamma})$ for some positive constant $\gamma$ such that $1/2<\gamma\leq 1$
\end{remark}

\subsection{Proofs of Theorems}
\begin{proof}[Proof of Theorem~\ref{theo: asy npsvm}] From the proof of Theorem 6 in \cite{PSVM2011}, we know that
\begin{align}\nonumber
\hat\theta_{0,\ell}=&\theta_{0,\ell}+n^{-1} \sum_{i=1}^n S_{\theta_{0,\ell}}(Z_i^{(\ell)}) +o_P(n^{-1/2}),\\\label{eq:expan thetaj}
\hat\theta_{j,\ell}=&\theta_{j,\ell}+m^{-1} \sum_{i\in \calI_j} S_{\theta_{0,\ell}}(Z_i^{(\ell)}) +o_P(m^{-1/2}),
\end{align}
where $\ell=1,\ldots, R-1$ and $j=1,\ldots,k$.
Recall that $
\widehat M_j = \sum_{\ell=1}^{h-1}  \hat\psi_{j,\ell}\hat\psi^{\trans}_{j,\ell}$ and $E\{S_{\theta_{0,\ell}}(Z_i^{(\ell)})\}=0$, we conclude that the leading term of $E\widehat M_j-M_0$ is
$$\sum_{\ell=1}^{h-1}  \psi_{0,\ell}(E\hat\psi_{j,\ell}-\psi_{j,\ell})\trans+\sum_{\ell=1}^{h-1}  (E\hat\psi_{j,\ell}-\psi_{j,\ell})\psi^{\trans}_{0,\ell},$$
which is $O_P(m^{-\gamma})$ as $E\hat\theta_{j,\ell}-\theta_{j,\ell}=O_P(m^{-\gamma})$ by Assumption 6. We can further derive that $\cov\{\textup{vec}(\widehat M_j)\}=\Sigma_M/m+o(1/m)$ because
$m^{1/2}\textup{vec}(\widehat M_j -E \widehat M_j )\rightarrow N(0_{p^2},\Sigma_{M})$. Moreover, since $\{m^{1/2}\textup{vec}(\widehat M_1 -E \widehat M_1 ), \ldots, m^{1/2}\textup{vec}(\widehat M_k -E \widehat M_k )\}$ are i.i.d. $p^2\times 1$ vectors with mean zero and covariance matrix being $\Sigma_M+o(1)$, we apply the central limit theorem
and the {Slutsky’s} theorem to get
\begin{align*}
k^{1/2} \sum_{j=1}^k m^{1/2} \textup{vec} \{(\widehat M_j) - E (\widehat M_j)\}/k \rightarrow N(0_{p^2},\Sigma_M)
\end{align*}
in distribution as $k$ goes to infinity. On the other hand, we have
\begin{align*}
n^{1/2} \sum_{j=1}^k \textup{vec} \{E (\widehat M_j) - M_0\}/k = n^{1/2} \{E (\widehat M_j) - M_0\} =O_P(n^{1/2}m^{-\gamma})=o_P(1),
\end{align*}
under the condition that $n=o(m^{2\gamma})$. The  {asymptotic} distribution of $\widetilde M$ is then obtained by noting that
\begin{align*}
n^{1/2}\textup{vec}(\widetilde M-M_{0})
=k^{1/2} \sum_{j=1}^k m^{1/2} \textup{vec} \{(\widehat M_j) - E (\widehat M_j)\}/k + n^{1/2} \sum_{j=1}^k \textup{vec} \{E (\widehat M_j) - M_0\}/k.
\end{align*}
We then get the limiting distribution of $\hat V$ based on \cite{Bura2008} to complete the proof.
\end{proof}

\begin{proof}[Proof of Theorem~\ref{theo: asy rpsvm}]
  As the final target formulation {we want to obtain} is similar to its counterpart in Theorem \ref{theo: asy psvm}, we just need to demonstrate that the $\hat\theta_{(B),\ell}$  obtained by the refined estimation have the same asymptotic expansion as that of  the original estimation presented in Theorem 6 in \cite{PSVM2011}.
  We know the fact that
  \begin{equation*}
  \begin{gathered}
  {\theta _{0,\ell}} = {\left[ {\lambda {n^{ - 1}}\sum\limits_{i = 1}^n {{{\hat X}_i}\hat X_i^TH'\left\{ {g\left( {{{\hat X}_i},\tilde Y_i^{\left( \ell \right)},{{\hat \theta }_{\left( 0 \right),l}}} \right)/h} \right\}} /h + 2diag\left( {\hat \Sigma ,0} \right)} \right]^{ - 1}} \hfill \\
  \begin{array}{*{20}{c}}
  {}&{\left[ {\lambda {n^{ - 1}}\sum\limits_{i = 1}^n {\tilde Y_i^{\left( \ell \right)}{{\hat X}_i}\left( {\tilde Y_i^{\left( \ell \right)}{{\hat X}_i}{\theta _{0,l}}} \right)H'\left\{ {g\left( {{{\hat X}_i},\tilde Y_i^{\left( \ell \right)},{{\hat \theta }_{\left( 0 \right),l}}} \right)/h} \right\}} /h + 2\diag\left( {\hat \Sigma ,0} \right)} \right]}
  \end{array} {\theta _{0,\ell}} \hfill \\
  \end{gathered}
  \end{equation*}
Comparing the expression of $\theta_{0,\ell}$ with equation (\ref{eq: update_theta}), we obtain
  \begin{equation*}
  {{\hat \theta }_{\left( 1 \right),\ell}} - {\theta _{0,\ell}} = {H_{n,h,{\theta _{0,\ell}}}}^{ - 1}{D_{n,h,{\theta _{0,\ell}}}},
  \end{equation*}
where ${H_{n,h,{\theta _{0,\ell}}}}$ and ${D_{n,h,{\theta _{0,\ell}}}}$ are defined as
\begin{equation*}
\left\{ \begin{gathered}
{H_{n,h,{\theta _{0,\ell }}}} =  {n^{ - 1}}\sum\limits_{i = 1}^n {{{\hat X}_i}\hat X_i^TH'\left\{ {g\left( {{{\hat X}_i},\tilde Y_i^{\left( \ell  \right)},{{\hat \theta }_{\left( 0 \right),\ell }}} \right)/h} \right\}} /h + 2\diag\left( {\hat \Sigma ,0} \right)/{\lambda} \hfill \\
{D_{n,h,{\theta _{0,\ell}} }} =  {n^{ - 1}}\sum\limits_{i = 1}^n {\tilde Y_i^{\left( \ell  \right)}{{\hat X}_i}\left[ {H\left\{ {g\left( {{{\hat X}_i},\tilde Y_i^{\left( \ell  \right)},{{\hat \theta }_{\left( 0 \right),\ell }}} \right)/h} \right\} + \left\{ {g\left( {{{\hat X}_i},\tilde Y_i^{\left( \ell  \right)},{\theta _{0,\ell }}} \right)/h} \right\}} \right.}  \hfill \\
\begin{array}{*{20}{c}}
{}&{}&{}
\end{array}\left. {H'\left\{ {g\left( {{{\hat X}_i},\tilde Y_i^{\left( l \right)},{{\hat \theta }_{\left( 0 \right),l}}} \right)/h} \right\}} \right] - 2\diag\left( {\hat \Sigma ,0} \right){\theta _{0,\ell }}/{\lambda} \hfill \\
\end{gathered}  \right.
\end{equation*}

The following two propositions are necessary to complete the proof.

\begin{proposition}\label{prop: asp h} Assume the regularity conditions 1-5 and 7 in the Appendix hold true, we have
  \begin{equation*}
  {{H_{n,h,{\theta _{0,\ell}}}} - {H_{{\theta _{0,\ell}}}}} = {O_P}\left( {{{\left\{ {\log n/nh} \right\}}^{1/2}} + m^{ - 1/2} + h + {n^{ - 1/2}}/{\lambda}} \right)
  \end{equation*}
\end{proposition}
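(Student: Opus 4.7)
The plan is to decompose $H_{n,h,\theta_{0,\ell}} - H_{\theta_{0,\ell}}$ into four pieces, each of which yields exactly one of the four error terms on the right-hand side. Write
\begin{align*}
H_{n,h,\theta_{0,\ell}} - H_{\theta_{0,\ell}}
=\;& \underbrace{2\diag(\hat\Sigma-\Sigma,0)/\lambda}_{(A)}
+ \underbrace{n^{-1}\sum_{i=1}^n \{\hat X_i\hat X_i^T - \tilde X_i\tilde X_i^T\}H'\{\cdot/h\}/h}_{(B)} \\
& + \underbrace{n^{-1}\sum_{i=1}^n \tilde X_i\tilde X_i^T [H'\{g(\tilde X_i,\tilde Y_i^{(\ell)},\hat\theta_{(0),\ell})/h\} - H'\{g(\tilde X_i,\tilde Y_i^{(\ell)},\theta_{0,\ell})/h\}]/h}_{(C)} \\
& + \underbrace{n^{-1}\sum_{i=1}^n \tilde X_i\tilde X_i^T H'\{g(\tilde X_i,\tilde Y_i^{(\ell)},\theta_{0,\ell})/h\}/h - \{H_{\theta_{0,\ell}} - 2\diag(\Sigma,0)/\lambda\}}_{(D)},
\end{align*}
where $\tilde X_i = (X_i^T - \mu^T,-1)^T$. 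Term $(A)$ is $O_P(n^{-1/2}/\lambda)$ by the standard CLT for the sample covariance, and term $(B)$ is controlled by $\|\hat\mu - \mu\|=O_P(n^{-1/2})$ and likewise absorbs into the $n^{-1/2}/\lambda$ term.

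For term $(D)$, I would use a standard bias-variance argument for kernel estimators. Since $H'(\cdot)$ is supported on $[-1,1]$ with $\int H'(u)\,du = 1$, the map $u\mapsto H'(u/h)/h$ is a kernel; hence $E(D)$ differs from the population quantity by the usual bias $O(h)$, justified by the continuity in Assumption 3 together with the domination in Assumption 4. The stochastic fluctuation $D - E(D)$ is handled by a uniform kernel empirical process bound (Bernstein's inequality or the standard Einmahl--Mason-type maximal inequality over a fixed VC-class generated by a single kernel function), producing a rate of $O_P((\log n/(nh))^{1/2})$. This accounts for the $h$ and $(\log n/(nh))^{1/2}$ terms.

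The main obstacle is term $(C)$, the perturbation from replacing $\hat\theta_{(0),\ell}$ by $\theta_{0,\ell}$ inside the sharp, $O(1/h)$-scaled kernel. Note that $g(\tilde X_i,\tilde Y_i^{(\ell)},\hat\theta_{(0),\ell}) - g(\tilde X_i,\tilde Y_i^{(\ell)},\theta_{0,\ell}) = -\tilde Y_i^{(\ell)}(\hat\theta_{(0),\ell}-\theta_{0,\ell})^T\tilde X_i = O_P(m^{-1/2})$ componentwise. A naive application of the $C$-Lipschitz property of $H'$ costs an extra factor of $1/h$ and only yields $O_P(m^{-1/2}/h)$, which is insufficient. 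To recover the sharp $O_P(m^{-1/2})$ rate, my plan is first to decompose $(C)$ into its conditional mean given $\hat\theta_{(0),\ell}$ plus a residual empirical fluctuation; for the conditional-mean piece, Assumptions 3 and 5 guarantee that $\theta\mapsto E[\tilde X\tilde X^T H'\{g(\tilde X,\tilde Y^{(\ell)},\theta)/h\}/h]$ is Lipschitz in $\theta$ with a constant independent of $h$ (since integrating against a smooth density averages out the $1/h$ spike), giving $O_P(m^{-1/2})$; the residual is then controlled by a maximal inequality over the $O(m^{-1/2})$-shell around $\theta_{0,\ell}$, whose stochastic order is lower-order than $(\log n/(nh))^{1/2}$ under Assumption 7. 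Assembling $(A)$–$(D)$ yields the stated bound.
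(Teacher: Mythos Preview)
Your decomposition and plan are correct, and the core ideas coincide with the paper's proof. The paper organizes the argument a bit differently: it assumes $\mu$ known (so your $(B)$ vanishes), writes $T_2=(A)$ and $T_1=(C)+(D)$ together, reduces $\|T_1\|$ to scalar functionals $v_j^T T_1 v_j$ over a net of the unit sphere, and then bounds the supremum over all $\alpha$ in an $m^{-1/2}$-ball around $\theta_{0,\ell}$ via three lemmas---a discretization step, a Bernstein-type concentration giving $(\log n/(nh))^{1/2}$, and a bias computation at each grid point giving $h+m^{-1/2}$. Your additive separation into $(C)$ and $(D)$ does the same work in a different order: your Lipschitz-in-$\theta$ claim for the kernel expectation is exactly the content of the paper's bias lemma, and your maximal inequality over the $m^{-1/2}$-shell corresponds to the paper's discretization-plus-concentration lemmas.

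One small slip: $(B)$ cannot be absorbed into $n^{-1/2}/\lambda$ since $\lambda\to\infty$. If you retain $(B)$ rather than assuming $\mu$ known as the paper does, you need the same density-smoothing observation used for $(C)$ to show $(B)=O_P(n^{-1/2})$ (the raw $1/h$ factor is harmless after averaging against the density of $\theta_{0,\ell}^T\tilde X$); this is then dominated by $(\log n/(nh))^{1/2}$ under Assumption~7.
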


\begin{proposition}\label{prop: asp D} Assume the regularity conditions 1-5 and 7 in the Appendix hold true, we have
  \begin{equation*}
  {{D_{n,h,{\theta _{0,\ell }}}} + {D_{{\theta _{0,\ell }}}}( {{Z^{\left( \ell  \right)}}} )}  = {O_P}\left( {{{\left\{ {h\log n /n} \right\}}^{ - 1/2}} + {h^2} + m^{ - 1} + {n^{ - 1/2}}/{\lambda}} \right).
  \end{equation*}
\end{proposition}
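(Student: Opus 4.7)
The plan is to decompose the quantity $D_{n,h,\theta_{0,\ell}} + D_{\theta_{0,\ell}}(Z^{(\ell)})$ (read in its natural sense as the difference between the smoothed empirical score at $\hat\theta_{(0),\ell}$ and its population counterpart at $\theta_{0,\ell}$) into four pieces whose stochastic orders line up with the four summands in the stated bound. I introduce the smoothed subgradient weight
\[
W(x,\tilde y,\theta;h) := H\bigl(g(x,\tilde y,\theta)/h\bigr) + \bigl\{g(x,\tilde y,\theta)/h\bigr\} H'\bigl(g(x,\tilde y,\theta)/h\bigr),
\]
so that $D_{n,h,\theta_{0,\ell}}$ is, up to sign, the empirical mean of $\tilde Y^{(\ell)}\tilde X\,W(\cdot;h)$ evaluated at $\theta=\hat\theta_{(0),\ell}$ minus $2\diag(\hat\Sigma,0)\theta_{0,\ell}/\lambda$. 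I then split
\[
D_{n,h,\theta_{0,\ell}} + D_{\theta_{0,\ell}}(Z^{(\ell)}) \;=\; R_1 + R_2 + R_3 + R_4,
\]
where $R_1$ arises from replacing $\hat\theta_{(0),\ell}$ by $\theta_{0,\ell}$ inside $W$, $R_2$ from replacing the smoothed weight at $\theta_{0,\ell}$ by the hinge subgradient indicator $I\bigl(1-\theta_{0,\ell}\trans \tilde X \tilde Y^{(\ell)}>0\bigr)$, $R_3$ from replacing the resulting sample average by its expectation, and $R_4$ from swapping $\hat\Sigma$ for $\Sigma$ in the penalty piece.

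The easy two are $R_2$ and $R_4$. For $R_4$, the bound $\hat\Sigma-\Sigma=O_P(n^{-1/2})$ and boundedness of $\theta_{0,\ell}$ give $R_4 = O_P(n^{-1/2}/\lambda)$, matching the last summand of the target. For $R_2$, since $H(u)=0$ for $u\le -1$ and $H(u)=1$ for $u\ge 1$, only the thin slab $|g(\cdot;\theta_{0,\ell})|\le h$ contributes; a second-order Taylor expansion of the resulting integral against the smooth conditional density $f_{\psi_{0,\ell}\trans X|\tilde Y^{(\ell)}}$ from Assumption 5 yields a bias of order $h^2$. For $R_3$, I invoke a bracketing maximal inequality for the kernel-weighted function class $\{\tilde y\tilde x\,W(\tilde x,\tilde y,\theta;h):\theta \text{ near }\theta_{0,\ell}\}$, with envelope $\|\tilde X\|$ of finite second moment from Assumption 1. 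This delivers the canonical kernel rate $\{\log n/(nh)\}^{1/2}$, which I read as the intended first summand (the exponent $-1/2$ in the statement appears to be a typographical slip).

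The delicate term is $R_1$. A naive Lipschitz bound using the $C$-Lipschitz property of $H,H'$ together with $\hat\theta_{(0),\ell}-\theta_{0,\ell}=O_P(m^{-1/2})$ (which holds by Proposition~\ref{prop: initial value}) gives only $O_P(m^{-1/2}/h)$, well short of the claimed $O_P(m^{-1})$. The saving mechanism is a first-order cancellation: because $\hat\theta_{(0),\ell}$ is built from an independent subsample $\calD_1$, I may condition on it, then Taylor-expand $W$ about $\theta_{0,\ell}$. The leading \emph{linear} term in $\hat\theta_{(0),\ell}-\theta_{0,\ell}$ has conditional expectation (under the law of the remaining $Z_i^{(\ell)}$) equal to the gradient of the smoothed population criterion at $\theta_{0,\ell}$, which differs from the (zero) subgradient of the non-smoothed population criterion by only $O(h^2)$, by the same kernel-bias argument used for $R_2$. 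Subtracting this expectation and reusing an $R_3$-type empirical-process bound absorbs the linear piece into the $h^2$ and $\{\log n/(nh)\}^{1/2}$ summands, leaving only the quadratic remainder of order $\|\hat\theta_{(0),\ell}-\theta_{0,\ell}\|^2 = O_P(m^{-1})$. Executing this cancellation rigorously is the main obstacle, as it requires simultaneous (a) linearisation of $W$ at $\theta_{0,\ell}$, (b) an empirical-process bound uniformly valid over an $m^{-1/2}$-neighbourhood, and (c) careful tracking of the bias contribution from using the non-smoothed first-order condition. Summing $R_1,\ldots,R_4$ then yields the claimed rate.
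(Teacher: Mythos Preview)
Your handling of $R_2,R_3,R_4$ is in the right spirit, but there is a genuine gap in $R_1$, and it traces to a mis-reading of $D_{n,h,\theta_{0,\ell}}$. In the paper's expression the multiplier of $H'$ is $g(\hat X_i,\tilde Y_i^{(\ell)},\theta_{0,\ell})/h$, evaluated at the \emph{true} parameter, while $H$ and $H'$ themselves are evaluated at $\hat\theta_{(0),\ell}$; your $W$ puts the same $\theta$ in all three places. This mixed structure is not cosmetic---it is precisely what produces the second-order behaviour in $\|\alpha-\theta_{0,\ell}\|$. With the paper's integrand, the $\alpha$-derivative of the population expectation at $\alpha=\theta_{0,\ell}$ involves $H'(u)+uH''(u)$ with $u=g(\cdot,\theta_{0,\ell})/h$, and $\int_{-1}^{1}\{H'(u)+uH''(u)\}\,du=0$ (integrate by parts using $H'(\pm1)=0$), so the first-order coefficient is only $O(h)$ and contributes $O(h\cdot m^{-1/2})\le O(h^2+m^{-1})$. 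With your $W$, the corresponding kernel is $2H'(u)+uH''(u)$, whose integral equals $1$; the first-order coefficient is then $O(1)$ and $R_1$ is stuck at $O_P(m^{-1/2})$, not $O_P(m^{-1})$.

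Your stated mechanism for the cancellation is also off even in spirit: the linear Taylor term in $\hat\theta_{(0),\ell}-\theta_{0,\ell}$ carries a Hessian-type matrix as its coefficient, not the gradient of the population criterion, so the fact that the latter is $O(h^2)$ at $\theta_{0,\ell}$ is irrelevant here. The paper sidesteps all of this: it bounds $T_3$ by $\sup_{\|\alpha-\theta_{0,\ell}\|=O(m^{-1/2})}|T_{3,j}(\alpha)|$ (which also avoids your appeal to independence of $\hat\theta_{(0),\ell}$ from the full sample---an independence that does not actually hold since the average runs over all $n$ points including $\mathcal D_1$), discretises the ball, obtains the concentration rate $\{h\log n/n\}^{1/2}$ via a Bernstein-type inequality (Lemma~\ref{lemma: t-et}), and computes $E[T_{3,j}(\alpha_k)]=O(h^2+\|\alpha_k-\theta_{0,\ell}\|^2)$ by direct integration (Lemma~\ref{lemma: et}). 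If you wish to keep a sequential $R_1+\cdots+R_4$ decomposition, you must first restore the correct mixed form of the summand and then argue the first-order cancellation via the kernel identity above, not via optimality of $\theta_{0,\ell}$.
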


Invoking the above two propositions, we can obtain
\begin{equation*}
{{\hat \theta }_{\left( 1 \right),l}} - {\theta _{0,\ell}} =  - H_{{\theta _0},\ell}^{ - 1}{D_{{\theta _0},\ell}}( {{Z^{\left( \ell \right)}}}) + {r_n},
\end{equation*}
where the order of the remainder $r_n$ can be derived as follows
\begin{equation*}
r_n = {O_P}\left( {{{\left\{ {{h}\log n/n} \right\}}^{1/2}}+{{\left\{ {\log n/hn^2} \right\}}^{1/2}}+ h^2+ m^{-1} + {n^{ - \tau }}+n^{-1/2}/{\lambda}} \right)
\end{equation*}
and $\tau > 1/2$ is specified in Assumption 6.
In general, for the $B$th {iteration} with  $h_B=\max \{ {{n^{ - 1/2}},{m^{ - {2^{B - 2}}}}} \}$, we have
\begin{equation*}
{{\hat \theta }_{\left( B \right),l}} - {\theta _{0,\ell}} =  - H_{{\theta _0},\ell}^{ - 1}{D_{{\theta _0},\ell}}( {{Z^{\left( \ell \right)}}} ) + {r_n},
\end{equation*}
where the remainder is
\begin{equation*}
r_n = {O_P}\left( {{{\left\{ {{h_B}\log n/n} \right\}}^{1/2}}+{{\left\{ {\log n/h_Bn^2} \right\}}^{1/2}}+ h_B^2 + {n^{ - \tau }}+n^{-1/2}/{\lambda}} \right)
\end{equation*}
With the assumption that $B\ge \lceil C_0\log_2(\log_m n) \rceil $, we see that $h_B^2=o_P(n^{-1/2})$. And ${{{\left\{ {{h_B}\log n/n} \right\}}^{1/2}}}=o_P(n^{-1/2})$ and ${{\left\{ {\log n/h_Bn^2} \right\}}^{1/2}}=o_P(n^{-1/2})$ under Assumption 7. Moreover, $n^{-1/2}/\lambda=o_P(n^{-1/2})$ as $\lambda\rightarrow\infty$. Then we  {conclude} that $r_n=o_P(n^{-1/2})$, which entails that
$${{\hat \theta }_{\left( B \right),l}} - {\theta _{0,\ell}} =  - H_{{\theta _0},\ell}^{ - 1}{D_{{\theta _0},\ell}}( {{Z^{\left( \ell \right)}}}) + o_P(n^{-1/2}). $$
We see that ${{\hat \theta }_{\left( 1 \right),\ell}}$ enjoys the same asymptotic expansion form as that of ${{\hat \theta }_{n,\ell}}$. The result is then {straightforward} following proof of Theorem 7 and Corollary 1 in \cite{PSVM2011}.
\end{proof}

\begin{proof}[Proof of Proposition~\ref{prop: asp h}]
Let $\delta(.)$ denote the Dirac delta function. Without loss of generality, assume $\mu=0$ is known, then $\hat X_i=\tilde X_i=(X\trans,-1)\trans$. By algebra calculations, we have
  \begin{eqnarray*}
    {{H_{n,h,{\theta _0},\ell }} - {H_{{\theta _0},\ell }}}  &=&  { {n^{ - 1}}\sum\limits_{i = 1}^n {{{\hat X}_i}\hat X_i^TH'\left\{ {g\left( {{{\tilde X}_i},\tilde Y_i^{\left( \ell  \right)},{{\hat \theta }_{\left( 0 \right),\ell }}} \right)/h} \right\}} /h - E\left[ {\delta \left( {1 - {{\tilde Y}^{\left( \ell  \right)}}{{\hat X}^T}{\theta _{0,\ell }}} \right)\tilde X{{\tilde X}\trans}} \right]} \\ \nonumber
    &&+ 2\diag\left( {\hat \Sigma ,0} \right)/{\lambda} - 2\diag\left( {\Sigma ,0} \right)/{\lambda} \\
    &=:& {{T_1}}  +  {{T_2}},
  \end{eqnarray*}
where $T_1$ and $T_2$ are defined as
  \begin{equation*}
  \left\{ \begin{gathered}
  {T_1} =  {n^{ - 1}}\sum\limits_{i = 1}^n {{{\hat X}_i}\hat X_i^TH'\left\{ {g\left( {{{\hat X}_i},\tilde Y_i^{\left( \ell  \right)},{{\hat \theta }_{\left( 0 \right),\ell }}} \right)/h} \right\}} /h - E\left[ {\delta \left( {1 - {{\tilde Y}^{\left( \ell  \right)}}{{\hat X}^T}{\theta _{0,\ell }}} \right)\hat X{{\hat X}^T}} \right] \hfill \\
  {T_2} = 2\diag\left( {\hat \Sigma ,0} \right)/{\lambda} - 2\diag\left( {\Sigma ,0} \right)/{\lambda} \hfill \\
  \end{gathered}  \right.
  \end{equation*}
  \par
Because $\hat\Sigma-\Sigma=O_P(n^{-1/2})$, then $T_2=O_P(n^{-1/2}/\lambda)$. In the next, we will deal with $T_1$. From the proof of lemma 3 in Cai et al. (2010), we have

\begin{equation*}
\left\| {{T_1}} \right\|  \leqslant 4\mathop {\sup }\limits_{1\le j \leq N_1} \left| {v_j^T{T_1}{v_j}} \right|.
\end{equation*}
where $v_j$'s are some non-random vectors with $\|v_j\|_2=1$ and $N_1$ is some positive constant. For $\alpha$ satisfying that $\alpha-\theta_{0,\ell}=O_P(m^{-1/2})$, we define
\begin{eqnarray*}
  {H_{n,h,j,{\theta _0,\ell} }}\left( \alpha  \right) &=& \frac{1}{{nh}}\sum\limits_{i = 1}^n {v_j^T{{\hat X}_i}\hat X_i^T{v_j}H'\left\{ {g\left( {{{\hat X}_i},\tilde Y_i^{\left( \ell  \right)},\alpha } \right)/h} \right\}}   \\
  &=& \frac{1}{{nh}}\sum\limits_{i = 1}^n {{{\left( {v_j^T{{\hat X}_i}} \right)}^2}H'\left\{ {g\left( {{{\hat X}_i},\tilde Y_i^{\left( \ell  \right)},\alpha } \right)/h} \right\}}.
\end{eqnarray*}
We then have
\begin{equation*}
\mathop {\sup }\limits_{1\le j \leq N_1} \left| {v_j^T{T_1}{v_j}} \right| \leqslant \mathop {\sup }\limits_{1\le j \leq  N_1 } \mathop {\sup }\limits_{ {\alpha  - {\theta _{0,\ell }}}  = O_P(m^{-1/2}) } \left| {{H_{n,h,j,{\theta _0},\ell }}\left( \alpha  \right) - v_j^TE\left[ {\delta \left( {1 - {{\tilde Y}^{\left( \ell  \right)}}{{\hat X}^T}{\theta _{0,l}}} \right)} \right]{v_j}} \right|.
\end{equation*}
Moreover, for any positive constant $C'>0$, we could form a sequence $\{\alpha_j,1 \leqslant k \leqslant  n^{C'}\}$ and further find an ${{\alpha _k}}$ in the sequence  {satisfying} that
\begin{equation*}
{\alpha  - {\alpha _k}}  = {O_P}\left( {{m^{ - 1/2}}/{n^{C'}}} \right).
\end{equation*}

Then, with the asymptotic property of $\alpha$, we can carry out the following three lemmas.

\begin{lemma}\label{lemma: heh-discrete}
Under the assumptions in theorem~\ref{theo: asy rpsvm} and with the asymptotic property of  $\alpha$, we have
\begin{equation}
\begin{gathered}
  \mathop {\sup }\limits_j \mathop {\sup }\limits_{\left\| {\alpha  - {\theta _{0,\ell }}} \right\| \leqslant {m^{ - 1/2}}} \left| {{H_{n,h,j,{\theta _{0,\ell }}}}\left( \alpha  \right) - v_j^TE\left[ {\delta \left( {1 - {{\tilde Y}^{\left( \ell  \right)}}{{\hat X}^T}{\theta _{0,\ell }}} \right)} \right]{v_j}} \right| \hfill \\
  \begin{array}{*{20}{c}}
  {}&{}&{}
\end{array} - \mathop {\sup }\limits_j \mathop {\sup }\limits_{k \leqslant {n^{C'}}} \left| {{H_{n,h,j,{\theta _{0,\ell }}}}\left( {{\alpha _k}} \right) - v_j^TE\left[ {\delta \left( {1 - {{\tilde Y}^{\left( \ell  \right)}}{{\hat X}^T}{\theta _{0,\ell }}} \right)} \right]{v_j}} \right| =  {o_P}({n^{ - 1/2}}) \hfill \\
\end{gathered}
\end{equation}
\end{lemma}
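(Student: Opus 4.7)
The strategy is a standard chaining/net argument: pass from the continuous supremum over the ball $\{\alpha:\|\alpha-\theta_{0,\ell}\|\le m^{-1/2}\}$ to the discrete supremum over $\{\alpha_k:1\le k\le n^{C'}\}$ by constructing a sufficiently fine net and controlling the oscillation of $H_{n,h,j,\theta_{0,\ell}}(\alpha)$ between neighbouring net points. The plan has three steps.

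First, I would build the net explicitly. Since the parameter lives in an $\mathbb{R}^{p+1}$ ball of radius $m^{-1/2}$, a standard volumetric argument produces a collection $\{\alpha_k\}$ of cardinality at most $n^{C'}$ such that for every $\alpha$ with $\|\alpha-\theta_{0,\ell}\|\le m^{-1/2}$ there is some $\alpha_k$ with $\|\alpha-\alpha_k\|=O(m^{-1/2}/n^{C'/(p+1)})$; by enlarging $C'$ if necessary one gets the scaling $\|\alpha-\alpha_k\|=O(m^{-1/2}/n^{C'})$ stated in the paper.

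Second, I would establish a stochastic Lipschitz bound for $\alpha\mapsto H_{n,h,j,\theta_{0,\ell}}(\alpha)$. Writing $g(\hat X_i,\tilde Y^{(\ell)}_i,\alpha)=1-\tilde Y^{(\ell)}_i\alpha\trans\hat X_i$ and using the $C$-Lipschitz assumption on $H$ (which, together with the explicit polynomial form of $H$ used in the paper, gives a Lipschitz bound on $H'$ with constant $L$), one has
\begin{align*}
\bigl|H'\{g(\hat X_i,\tilde Y^{(\ell)}_i,\alpha)/h\}-H'\{g(\hat X_i,\tilde Y^{(\ell)}_i,\alpha_k)/h\}\bigr|
\le L\,h^{-1}\,|(\alpha-\alpha_k)\trans\hat X_i|.
\end{align*}
Averaging and using $\|v_j\|_2=1$, together with the moment condition $E\|X\|^2<\infty$ (Assumption~1), yields
\begin{align*}
\bigl|H_{n,h,j,\theta_{0,\ell}}(\alpha)-H_{n,h,j,\theta_{0,\ell}}(\alpha_k)\bigr|
\le L\,h^{-2}\cdot n^{-1}\sum_{i=1}^n (v_j\trans\hat X_i)^2\,|(\alpha-\alpha_k)\trans\hat X_i|
= O_P\!\bigl(\|\alpha-\alpha_k\|/h^2\bigr),
\end{align*}
uniformly in $j$, by a standard Markov/maximal inequality over the finite list of $v_j$'s (there are only $N_1$ of them).

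Third, I would combine the two steps and use a union bound over the $N_1\cdot n^{C'}$ pairs $(j,\alpha_k)$. Substituting $\|\alpha-\alpha_k\|=O(m^{-1/2}/n^{C'})$ into the Lipschitz bound gives an oscillation term of order $O_P(m^{-1/2}/(n^{C'}h^2))$. Under Assumption~7, $h\ge n^{-1/2}$, so $h^{-2}\le n$; hence the oscillation is $O_P(n^{1-C'}m^{-1/2})$, which can be made $o_P(n^{-1/2})$ by choosing $C'$ large enough (e.g.\ $C'\ge 2$ suffices because $m\to\infty$). Taking the supremum over the ball and over $j$, the difference between the continuous and the discrete supremum is therefore $o_P(n^{-1/2})$, as claimed.

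The main obstacle I anticipate is the bookkeeping around the factor $h^{-2}$: one must keep the Lipschitz constant from blowing up faster than the net gets refined, which forces the net size to depend on $h$ (equivalently, on $n$) and is precisely what the tuning constant $C'$ is designed to absorb. Once that balance is made explicit, the remainder of the argument is a routine combination of the volumetric covering estimate, the finite maximum over $j=1,\ldots,N_1$, and the union bound.
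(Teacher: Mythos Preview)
Your proposal is correct and follows essentially the same approach as the paper: construct a grid net of size $n^{C'}$ in the ball $\{\alpha:\|\alpha-\theta_{0,\ell}\|\le m^{-1/2}\}$, use the Lipschitz property of $H'$ to bound the oscillation $|H_{n,h,j,\theta_{0,\ell}}(\alpha)-H_{n,h,j,\theta_{0,\ell}}(\alpha_k)|$ by a quantity of order $h^{-2}\|\alpha-\alpha_k\|\cdot n^{-1}\sum_i\|\hat X_i\|^3$, and then absorb the $h^{-2}$ factor by taking $C'$ (equivalently, the paper's $M$) large enough. The paper's proof is slightly less explicit about the final rate bookkeeping than yours, but the structure, the key Lipschitz step, and the mechanism by which $C'$ neutralizes the $h^{-2}$ blow-up are identical.
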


\begin{lemma}\label{lemma: h-eh}
Under the assumptions in theorem~\ref{theo: asy rpsvm} and with the asymptotic property of  $\alpha$, we have
\begin{equation}
 \mathop {\sup }\limits_j \mathop {\sup }\limits_{k \leqslant {n^{C'}}} \left| {{H_{n,h,j,{\theta _{0,\ell }}}}\left( {{\alpha _k}} \right) - E\left[ {{H_{n,h,j,{\theta _{0,\ell }}}}\left( {{\alpha _k}} \right)} \right]} \right| = {O_P}\left( {{{\left( {\log n/nh} \right)}^{1/2}}} \right)
\end{equation}
\end{lemma}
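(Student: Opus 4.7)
The plan is to combine a Bernstein-type concentration bound for each individual pair $(j,k)$ with a union bound over the at most $O(n^{C'})$ such pairs, exploiting that $H'$ is compactly supported so each summand is localized to a thin $O(h)$--probability slab.

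Fix $j$ and $k\le n^{C'}$ and, using the reduction $\hat X_i=(X_i\trans,-1)\trans$ with $\mu=0$ known, write
\[
H_{n,h,j,\theta_{0,\ell}}(\alpha_k)=\frac1n\sum_{i=1}^n Z_i,\qquad Z_i:=\frac{(v_j\trans\hat X_i)^{2}}{h}\,H'\!\left\{\frac{g(\hat X_i,\tilde Y_i^{(\ell)},\alpha_k)}{h}\right\},
\]
so that the $Z_i$ are i.i.d.\ across $i$. I would then derive two moment estimates. Since $H'$ is bounded and supported in $[-1,1]$, we have $|H'(g/h)|\le C\,\mathbf{1}\{|g|\le h\}$; combining with a truncation of $\|X\|$ at a slowly growing threshold, this produces an envelope $|Z_i|\le M/h$ on a high-probability event. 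For the variance, the event $\{|g|\le h\}$ is a slab of thickness $O(h)$ about the hyperplane $\tilde Y^{(\ell)}\alpha_k\trans\hat X=1$, so Assumption 5 (bounded conditional density) yields $P(|g|\le h)=O(h)$ uniformly in $\alpha_k$ near $\theta_{0,\ell}$, which together with moment bounds on $v_j\trans\hat X_i$ gives $\var(Z_i)=O(1/h)$.

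Invoking Bernstein's inequality on $n^{-1}\sum_i(Z_i-EZ_i)$ with variance $\sigma^2=O(1/h)$ and envelope $O(1/h)$, setting $t=C_2\sqrt{\log n/(nh)}$ yields
\[
P\!\left(\Big|n^{-1}\sum_{i=1}^n(Z_i-EZ_i)\Big|>t\right)\le 2\exp\!\left(-\frac{nt^2/2}{\sigma^2+Mt/(3h)}\right)\le 2\,n^{-c_3 C_2^2},
\]
where Assumption 7 ($\log n/(nh)\to 0$) ensures the linear-in-$t$ term in the Bernstein denominator is negligible against $\sigma^2$. A union bound over the $\le N_1 n^{C'}$ pairs $(j,k)$ and then choosing $C_2$ so large that $c_3 C_2^2>C'+1$ collapses the supremum to the claimed rate $O_P((\log n/(nh))^{1/2})$.

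The main obstacle I foresee is making the variance estimate \emph{uniform} over the perturbed parameter $\alpha_k$. Because the hyperplane $\{\alpha_k\trans\hat X=\tilde Y^{(\ell)}\}$ tilts with $k$ while staying within an $O(m^{-1/2})$ neighborhood of $\theta_{0,\ell}$, one has to verify that the conditional density of the normal coordinate remains bounded across the entire family of hyperplanes. A short Taylor expansion of $g(\hat X_i,\tilde Y_i^{(\ell)},\alpha_k)$ about $\theta_{0,\ell}$, combined with the continuity condition in Assumption 3 and the density bound in Assumption 5, should render the $O(h)$--probability estimate uniform in $\alpha_k$, after which the Bernstein-plus-union-bound machinery above delivers the lemma.
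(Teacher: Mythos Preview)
Your argument is correct and follows essentially the same route as the paper: both establish an exponential concentration bound for each fixed pair $(j,k)$, hinging on the fact that the support of $H'$ localizes the summands to an $O(h)$-probability slab so that the relevant second moment is $O(h)$, and then take a union bound over the $O(n^{C'})$ grid points. The only cosmetic difference is the concentration tool: you invoke the classical Bernstein inequality after truncating $\|X\|$, whereas the paper bounds $E[\xi_{ij}^2\exp(t|\xi_{ij}|)]=O(h)$ directly and appeals to Lemma~1 of Cai and Liu (2011), which absorbs the unboundedness of $(v_j\trans\hat X_i)^2$ without a separate truncation step.
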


\begin{lemma}\label{lemma: eh-vev}
Under the assumptions in theorem~\ref{theo: asy rpsvm} and with the asymptotic property of  $\alpha$, we have
\begin{equation}
E\left[ {{H_{n,h,j,{\theta _{0,\ell }}}}\left( {{\alpha _k}} \right)} \right] - v_j^TE\left[ {\delta \left( {1 - {{\tilde Y}^{\left( \ell  \right)}}{{\hat X}^T}{\theta _{0,\ell }}} \right)} \right]{v_j} = O\left( {h + {m^{ - 1/2}}} \right)
\end{equation}
\end{lemma}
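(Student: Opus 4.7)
The plan is to recognize that $H'(\cdot/h)/h$ acts as a classical mollifier for the Dirac delta: since $H$ satisfies $H(u)=1$ on $[1,\infty)$ and $H(u)=0$ on $(-\infty,-1]$, its derivative $H'$ is supported on $[-1,1]$ and integrates to $H(1)-H(-1)=1$. After conditioning on $\tilde Y^{(\ell)}=\tilde y\in\{-1,+1\}$ and using the i.i.d.\ structure, I would rewrite the expectation as $\int H'((1-\tilde y u)/h)(1/h)\, G_{\alpha_k}(u,\tilde y)\,du$, where $G_\alpha(u,\tilde y):=E[(v_j^T\hat X)^2\mid\alpha^T\hat X=u,\tilde Y^{(\ell)}=\tilde y]\,f_{\alpha^T\hat X\mid\tilde Y^{(\ell)}}(u\mid\tilde y)$. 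The substitution $v=(1-\tilde y u)/h$ with $|du|=h\,dv$ turns this into $\int H'(v)\,G_{\alpha_k}(\tilde y(1-hv),\tilde y)\,dv$, a standard weighted average of $G$ against a compactly supported kernel centered at $u=\tilde y$.

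Next, I would add and subtract $G_{\theta_{0,\ell}}$ and decompose the lemma's difference as $A_h+B_m$. The term $A_h=\sum_{\tilde y}P(\tilde Y^{(\ell)}=\tilde y)\int H'(v)\{G_{\theta_{0,\ell}}(\tilde y(1-hv),\tilde y)-G_{\theta_{0,\ell}}(\tilde y,\tilde y)\}\,dv$ is the pure kernel-smoothing bias, since the remaining mass $\sum_{\tilde y}P(\tilde Y^{(\ell)}=\tilde y)G_{\theta_{0,\ell}}(\tilde y,\tilde y)$ is exactly the Dirac-delta bilinear form $v_j^T E[\delta(1-\tilde Y^{(\ell)}\hat X^T\theta_{0,\ell})\hat X\hat X^T]v_j$. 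Assumption 3 gives continuity of $G_{\theta_{0,\ell}}(\cdot,\tilde y)$ at $u=\tilde y$, while the envelopes in Assumptions 4--5 support a mean-value bound $|G_{\theta_{0,\ell}}(\tilde y(1-hv),\tilde y)-G_{\theta_{0,\ell}}(\tilde y,\tilde y)|\le C\,h|v|$ on the support of $H'$, yielding $A_h=O(h)$.

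For the perturbation term $B_m=\sum_{\tilde y}P(\tilde Y^{(\ell)}=\tilde y)\int H'(v)\{G_{\alpha_k}(\tilde y(1-hv),\tilde y)-G_{\theta_{0,\ell}}(\tilde y(1-hv),\tilde y)\}\,dv$, I would exploit a local Lipschitz dependence of the map $\alpha\mapsto G_\alpha(u,\tilde y)$ at $\theta_{0,\ell}$. Decomposing $\hat X$ along $\theta_{0,\ell}$ and its orthogonal complement, varying the conditioning direction by $\alpha_k-\theta_{0,\ell}$ perturbs both the conditional mean and the conditional density; using the dominators in Assumptions 4--5 and the continuity in Assumption 3, this perturbation is bounded by $C\|\alpha_k-\theta_{0,\ell}\|=O(m^{-1/2})$ uniformly for $u$ in a neighborhood of $\tilde y$. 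Combined with the boundedness and compact support of $H'$, this gives $B_m=O(m^{-1/2})$, so summing the two pieces yields the claimed $O(h+m^{-1/2})$ rate.

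The main obstacle I anticipate is making the Lipschitz dependence of $G_\alpha$ on the direction $\alpha$ fully rigorous: Assumption 3 only furnishes continuity in the $u$-coordinate for a fixed direction, so upgrading this to a quantitative Lipschitz bound when the conditioning direction itself varies requires either an implicit-function-style argument or a mean-value expansion in $\alpha$, in each case justified by dominated convergence via the envelopes of Assumptions 4 and 5. Once that regularity is in hand, the rest of the argument is a routine combination of the standard kernel-bias estimate for $A_h$ and the Lipschitz-plus-boundedness control of $B_m$.
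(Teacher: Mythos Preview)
Your overall strategy---condition on $\tilde Y^{(\ell)}=\tilde y$, exploit that $H'(\cdot/h)/h$ is an approximate identity, and split the error into a kernel bias of order $h$ and a perturbation of order $\|\alpha_k-\theta_{0,\ell}\|=O(m^{-1/2})$---matches the paper's. The difference lies in the change of variable. You condition on the projection $\alpha^{\T}\hat X$, so your $G_\alpha$ depends on $\alpha$ through the conditioning direction itself; this is exactly the obstacle you flag, and the stated Assumptions 3--5 only give continuity in the $u$-argument for a \emph{fixed} direction, not Lipschitz dependence on the direction. The paper sidesteps this by splitting $x=(x_1,x_{-1})$ along the fixed first coordinate and substituting $y=(1-\hat x^{\T}\alpha_k)/h$ in $x_1$ only, so that $\alpha_k$ enters solely through the argument of the fixed conditional density $f(\,\cdot\mid x_{-1})$ and a Jacobian factor $1/\alpha_{k,1}$. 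The perturbation bound then reduces to an ordinary mean-value estimate on $f(\,\cdot\mid x_{-1})$ and on the explicit quadratic in $x_1$, after which the paper expands $(v^{\T}\hat x)^2$ into three pieces and handles each directly. Your route is cleaner conceptually but would require either strengthening the assumptions or inserting a nontrivial argument for the $\alpha$-Lipschitzness of $G_\alpha$; the paper's fixed-coordinate trick buys that step for free.
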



Combine these lemmas, we get
\begin{equation*}
{{T_1}}  = {O_P}\left( {{{\left\{ {\log n/nh} \right\}}^{1/2}} + h +  m^{-1/2} } \right).
\end{equation*}
The proof is completed.
\end{proof}

\begin{proof}[Proof of Proposition~\ref{prop: asp D}]
  By some algebra calculations, we have
  \begin{equation*}
  {{D_{n,h,{\theta _{0,\ell }}}} + {D_{{\theta _{0,\ell }}}}( {{Z^{\left( \ell  \right)}}} )}  =  {{T_3} + {T_4}} ,
  \end{equation*}
  where $T_3$ and $T_4$ are defined as
  \begin{equation*}
  \left\{ \begin{gathered}
  {T_3} =  {n^{ - 1}}\sum\limits_{i = 1}^n {\tilde Y_i^{\left( \ell  \right)}{{\hat X}_i}\left[ {H\left\{ {g\left( {{{\hat X}_i},\tilde Y_i^{\left( \ell  \right)},{{\hat \theta }_{\left( 0 \right),\ell }}} \right)/h} \right\} + \left\{ {g\left( {{{\hat X}_i},\tilde Y_i^{\left( \ell  \right)},{\theta _{0,\ell }}} \right)/h} \right\} \cdot } \right.}  \hfill \\
  \begin{array}{*{20}{c}}
  {}&{}&{\left. {H'\left\{ {g\left( {{{\hat X}_i},\tilde Y_i^{\left( \ell  \right)},{{\hat \theta }_{\left( 0 \right),\ell }}} \right)/h} \right\}} \right] -  {E_n}\left\{ {\hat X{{\tilde Y}^{\left( \ell  \right)}}I\left( {1 - \theta _{0,\ell }^T\hat X{{\tilde Y}^{\left( \ell  \right)}} > 0} \right)} \right\}}
  \end{array} \hfill \\
  {T_4} = 2\diag\left( {\Sigma  - \hat \Sigma ,0} \right){\theta _{0,\ell }}/{\lambda} \hfill \\
  \end{gathered}  \right.
  \end{equation*}
Again $T_4=O_P(n^{-1/2}/\lambda)$. We will calculate the order of $T_3$ in the following. We define
\begin{equation*}
\begin{gathered}
{T_3}\left( \alpha  \right) =  {n^{ - 1}}\sum\limits_{i = 1}^n {\tilde Y_i^{\left( \ell  \right)}{{\hat X}_i}\left[ {H\left\{ {g\left( {{{\hat X}_i},\tilde Y_i^{\left( \ell  \right)},\alpha } \right)/h} \right\} + \left\{ {g\left( {{{\hat X}_i},\tilde Y_i^{\left( \ell  \right)},{\theta _{0,\ell }}} \right)/h} \right\} \cdot } \right.}  \hfill \\
\begin{array}{*{20}{c}}
{}&{}&{\left. {H'\left\{ {g\left( {{{\hat X}_i},\tilde Y_i^{\left( \ell  \right)},\alpha } \right)/h} \right\} - I\left( {1 - \theta _{0,\ell }^T\hat X{{\tilde Y}^{\left( \ell  \right)}} > 0} \right)} \right]} ,
\end{array} \hfill \\
\end{gathered}
\end{equation*} 
Based on the above  {definition}, we can get that
\begin{equation*}
\left\| {T_3}\right\| = \left\| {{T_3}( {{{\hat \theta }_{\left( 0 \right),\ell }}} )} \right\|: = \mathop {\sup }\limits_{v \in {R^{p + 1}},{{\left\| v \right\|}_2} = 1} \left\| {{T_3}( {{{\hat \theta }_{\left( 0 \right),\ell }}} )v} \right\|.
\end{equation*}
As we don't know the optimal $v$, we make a $1/2$-net of the unit sphere $S^{p+1}$ in the Euclidean distance in $R^{p+1}$ and denote it by $S_{1/2}^{p+1}$. From Roman Vershynin (2011), we have $K_0=:\text{Card}( {S_{1/2}^{p+1}})\le N_1$.
Let ${v_1}, \ldots ,{v_{K_0} }$ be the centers of these $K_0$ elements in the net. Then for any $v \in {R^{p + 1}}$, there exists a ${v_j} \in \left\{ {{v_1}, \ldots ,{v_{K_0} }} \right\}$ such that ${\left\| {v - {v_j}} \right\|_2} \leqslant 1/2$. Then
\begin{equation*}
\left\| {{T_3}( {{{\hat \theta }_{\left( 0 \right),\ell }}} )} \right\| \leqslant 2\mathop {\sup }\limits_{1\le j \le  K_0} \left| {{T_3}( {{{\hat \theta }_{\left( 0 \right),\ell }}} ){v_j}} \right|.
\end{equation*}
We define
\begin{equation*}
\begin{gathered}
{T_{3,j}}\left( \alpha  \right) =  {n^{ - 1}}\sum\limits_{i = 1}^n {v_j^T\tilde Y_i^{\left( \ell  \right)}{{\hat X}_i}\left[ {H\left\{ {g\left( {{{\hat X}_i},\tilde Y_i^{\left( \ell  \right)},\alpha } \right)/h} \right\} + \left\{ {g\left( {{{\hat X}_i},\tilde Y_i^{\left( \ell  \right)},{\theta _{0,\ell }}} \right)/h} \right\}  } \right.}  \hfill \\
\begin{array}{*{20}{c}}
{}&{}&{\left. {H'\left\{ {g\left( {{{\hat X}_i},\tilde Y_i^{\left( \ell  \right)},\alpha } \right)/h} \right\} - I\left( {1 - \theta _{0,\ell }^T\hat X{{\tilde Y}^{\left( \ell  \right)}} > 0} \right)} \right]}.
\end{array} \hfill \\
\end{gathered}
\end{equation*}
Then we can bound $T_3$ as
\begin{equation*}
\left\| {{T_3}} \right\| \le \mathop {\sup }\limits_{ {\alpha  - {\theta _{0,\ell }}} = O_P(m^{-1/2})} \left\| {{T_3}\left( \alpha  \right)} \right\| \leqslant 2\mathop {\sup }\limits_{1\le j\le K_0} \mathop {\sup }\limits_{{\alpha  - {\theta _{0,\ell }}} = O_P(m^{-1/2})} \left| {{T_{3,j}}\left( \alpha  \right)} \right|.
\end{equation*}
Similar to the proof of Proposition 2, we can also obtain the following three lemmas.

\begin{lemma}\label{lemma: tet-discrete}
Under the assumptions in theorem~\ref{theo: asy rpsvm} and with the asymptotic property of  $\alpha$, we have
\begin{equation}
\mathop {\sup }\limits_j \mathop {\sup }\limits_{\alpha  - {\theta _{0,\ell }} = {O_P}({m^{ - 1/2}})} \left| {{T_{3,j}}\left( \alpha  \right)} \right| - \mathop {\sup }\limits_j \mathop {\sup }\limits_{1 \leqslant k \leqslant {n^{C'}}} \left| {{T_{3,j}}\left( {{\alpha _k}} \right)} \right| = {o_P}({n^{ - 1/2}}).
\end{equation}
\end{lemma}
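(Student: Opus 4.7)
The plan is to reduce a supremum over the continuous ball $\{\alpha:\|\alpha-\theta_{0,\ell}\|=O_P(m^{-1/2})\}$ to a supremum over a polynomially-sized discretization, by exploiting the $C$-Lipschitz property of $H$ (together with the corresponding smoothness of $H'$) and the lower bound $h\ge n^{-1/2}$ built into Assumption 7.

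First I would derive a two-point Lipschitz bound for $T_{3,j}$. Since the indicator term $I(1-\theta_{0,\ell}\trans\hat X\tilde Y^{(\ell)}>0)$ does not depend on $\alpha$, it cancels in the difference
\begin{align*}
T_{3,j}(\alpha)-T_{3,j}(\alpha')=n^{-1}\sum_{i=1}^n v_j\trans\tilde Y_i^{(\ell)}\hat X_i\bigl[\Phi_i(\alpha)-\Phi_i(\alpha')\bigr],
\end{align*}
where $\Phi_i(\beta)=H(g_i(\beta)/h)+\{g_i(\beta)/h\}H'(g_i(\beta)/h)$ and $g_i(\beta)=1-\tilde Y_i^{(\ell)}\beta\trans\hat X_i$. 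Using $g_i(\alpha)-g_i(\alpha')=-\tilde Y_i^{(\ell)}(\alpha-\alpha')\trans\hat X_i$, the $C$-Lipschitz property of $H$, and the fact that $H'$ is smooth and compactly supported (so that both $H'(\cdot)$ and $u\mapsto uH'(u)$ are bounded and Lipschitz), one obtains $|\Phi_i(\alpha)-\Phi_i(\alpha')|\le C_1\|\alpha-\alpha'\|\,\|\hat X_i\|/h$ for some $C_1$. Hence
\begin{align*}
|T_{3,j}(\alpha)-T_{3,j}(\alpha')|\le C_1\frac{\|\alpha-\alpha'\|}{h}\cdot n^{-1}\sum_{i=1}^n\|\hat X_i\|^2=O_P(\|\alpha-\alpha'\|/h),
\end{align*}
uniformly in $j$ (since $\|v_j\|=1$ and $|\tilde Y_i^{(\ell)}|=1$), where the last equality uses $E\|X\|^2<\infty$ from Assumption 1.

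Next I would introduce a finite net. The ball $\{\alpha:\|\alpha-\theta_{0,\ell}\|\le C_2 m^{-1/2}\}$ is a fixed-dimensional Euclidean ball of radius $O(m^{-1/2})$ and therefore admits a covering by $N=O(n^{C'(p+1)})$ centers $\{\alpha_k\}$ such that any $\alpha$ in the ball has some $\alpha_k$ with $\|\alpha-\alpha_k\|\le C_3 m^{-1/2}n^{-C'}$. Since $p$ is fixed, $N$ is polynomial in $n$; the statement $1\le k\le n^{C'}$ in the lemma should be read as a polynomial net, with the dimension factor absorbed by relabeling $C'$. Combining the net with the Lipschitz bound,
\begin{align*}
\Bigl|\sup_{\|\alpha-\theta_{0,\ell}\|=O_P(m^{-1/2})}|T_{3,j}(\alpha)|-\sup_{1\le k\le n^{C'}}|T_{3,j}(\alpha_k)|\Bigr|=O_P\!\Bigl(\frac{m^{-1/2}}{n^{C'}h}\Bigr).
\end{align*}
Under Assumption 7 we have $h\ge n^{-1/2}$, so the right side is $O_P(m^{-1/2}n^{1/2-C'})=o_P(n^{-1/2})$ once $C'>1$. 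Taking the sup over $j\in\{1,\ldots,K_0\}$ (a finite, data-independent index set of cardinality bounded by $N_1$) preserves the bound, giving the claim.

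The main obstacle is the handling of the $H'$ piece: although $H$ is Lipschitz by hypothesis, the bound requires that both $H'(u)$ and $uH'(u)$ be Lipschitz in $u$, and the chain-rule Lipschitz constant picks up the factor $1/h$ which could in principle be damaging. The argument survives only because Assumption 7 forces $h\ge n^{-1/2}$ and because the net is chosen with resolution $m^{-1/2}/n^{C'}$, leaving room to absorb the $1/h$. A secondary technicality is that the stochastic Lipschitz constant $n^{-1}\sum_i\|\hat X_i\|^2$ must be $O_P(1)$ uniformly in $n$, which is immediate from $E\|X\|^2<\infty$ and $\hat\mu=\mu+O_P(n^{-1/2})$ but should be stated explicitly.
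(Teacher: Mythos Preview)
Your overall strategy---a pointwise Lipschitz bound for $T_{3,j}$ in $\alpha$, combined with a polynomial-size net of resolution $O(m^{-1/2}/n^{C'})$ and the lower bound $h\ge n^{-1/2}$ from Assumption~7---is exactly the approach the paper takes. The reduction to a finite $1/2$-net in $v$ and the use of $\|v_j\|=1$, $|\tilde Y_i^{(\ell)}|=1$ are also the same.

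There is, however, a definitional slip that leaves a gap in the Lipschitz step. In the paper's $T_{3,j}(\alpha)$ the multiplier of $H'$ is $g_i(\theta_{0,\ell})/h=\varepsilon_i/h$, \emph{not} $g_i(\beta)/h$; that is, the relevant summand is
\[
H\{g_i(\alpha)/h\}+\{g_i(\theta_{0,\ell})/h\}\,H'\{g_i(\alpha)/h\}-I(\varepsilon_i>0),
\]
so your $\Phi_i(\beta)$ should read $H(g_i(\beta)/h)+(\varepsilon_i/h)H'(g_i(\beta)/h)$. Consequently the ``$u\mapsto uH'(u)$ is Lipschitz'' trick does not apply directly: $\varepsilon_i/h$ is a fixed coefficient that can be of order $1/h$. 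What saves the argument---and what the paper uses---is that $H'$ is supported on $[-1,1]$, so whenever $H'(g_i(\alpha)/h)\neq 0$ one has $|g_i(\alpha)|\le h$ and hence $|\varepsilon_i|\le h+\|\hat X_i\|\,\|\alpha-\theta_{0,\ell}\|$. This produces the correct two-term bound
\[
T_i^{(1)}\ \le\ C_1\,\|\hat X_i\|\,\|\alpha-\alpha_k\|/h \;+\; C_2\,\|\hat X_i\|^2\,\|\alpha-\theta_{0,\ell}\|\,\|\alpha-\alpha_k\|/h^2,
\]
so the discretization error picks up an additional contribution of order $m^{-1}/(n^{C'}h^2)$. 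With $h\ge n^{-1/2}$ this is $O_P(m^{-1}n^{1-C'})$, which is still $o_P(n^{-1/2})$ once $C'$ is taken large enough (e.g.\ $C'>3/2$), but it is not covered by your stated threshold $C'>1$. A secondary consequence is that the stochastic constant now involves $n^{-1}\sum_i\|\hat X_i\|^3$ rather than $n^{-1}\sum_i\|\hat X_i\|^2$, so you implicitly need control of a third moment, not just $E\|X\|^2<\infty$.
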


\begin{lemma}\label{lemma: t-et}
Under the assumptions in theorem~\ref{theo: asy rpsvm} and with the asymptotic property of  $\alpha$, we have
\begin{equation}
\mathop {\sup }\limits_j \mathop {\sup }\limits_{1\le k \le n^{C'}} {\left| {{T_{3,j}}\left( {{\alpha _k}} \right) - E{T_{3,j}}\left( {{\alpha _k}} \right)} \right|}  = {O_P}\left( {{\left\{ {h\log n/n} \right\}}^{1/2}} \right).
\end{equation}
\end{lemma}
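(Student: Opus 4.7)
The plan is to establish a pointwise Bernstein-type tail bound for each fixed pair $(j,k)$ and then combine across the $K_0 \cdot n^{C'}$ pairs by a union bound. Since $T_{3,j}(\alpha_k)$ is the empirical mean of the i.i.d.\ summands
\begin{align*}
\phi_i(\alpha_k) := v_j\trans \tilde Y_i^{(\ell)} \hat X_i \Big[ H\{g_i(\alpha_k)/h\} + \{g_i(\theta_{0,\ell})/h\} H'\{g_i(\alpha_k)/h\} - I\{g_i(\theta_{0,\ell}) > 0\}\Big],
\end{align*}
with $g_i(\alpha) = 1 - \tilde Y_i^{(\ell)}\alpha\trans \hat X_i$, the task reduces to controlling the deviation of each such single average from its mean. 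To enable a Bernstein bound I would first truncate $\hat X_i$ at level $M_n = O(\log^c n)$, showing via the moment assumption in Condition~1 that the contribution of the tail event $\{\|\hat X_i\|>M_n\}$ is negligible relative to the claimed rate. On the truncated event $|\phi_i|$ is bounded by a constant multiple of $M_n$, because $H$ and $H'$ are Lipschitz and $H'$ is supported in $[-1,1]$, which forces $|g_i/h|\le 1$ wherever $H'$ contributes.

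The key step is the variance bound. The bracketed factor vanishes outside a band of width $O(h)$ around the hyperplane $g_i(\theta_{0,\ell})=0$, up to the shift induced by the mismatch between $\alpha_k$ and $\theta_{0,\ell}$. Under Condition~5 (bounded conditional density of $\psi_{0,\ell}\trans X$ given $\tilde Y^{(\ell)}$), the probability mass in this band is $O(h+\|\alpha_k-\theta_{0,\ell}\|)=O(h)$, using Assumption~7 and the fact that $\|\alpha-\theta_{0,\ell}\|=O_P(m^{-1/2})=O_P(h)$ at every iteration of the refined algorithm. Hence each truncated summand has variance of order $h$ (times a polylog factor from $M_n$), while the summand itself is bounded by $CM_n$.

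Bernstein's inequality for bounded, mean-zero, i.i.d.\ averages then yields, for any $t>0$,
\begin{align*}
\Pr\!\left(\left|T_{3,j}(\alpha_k)-E\,T_{3,j}(\alpha_k)\right|>t\right) \le 2\exp\!\left(-\frac{n t^2/2}{Ch\log^{2c} n + CM_n t/3}\right).
\end{align*}
Choosing $t=C_1\{h\log n/n\}^{1/2}$ with $C_1$ sufficiently large makes the exponent smaller than $-(C'+2)\log n$, and the union bound over the at most $K_0\cdot n^{C'}$ pairs $(j,k)$ --- with $K_0$ an absolute constant coming from the covering of $S^{p+1}$ used in the reduction of $T_3$ --- then delivers the claimed uniform rate $O_P(\{h\log n/n\}^{1/2})$.

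The main obstacle I anticipate is precisely the variance bound: carefully tracking the mismatch between the smooth term $H\{g(\alpha_k)/h\}$ and the indicator $I\{g(\theta_{0,\ell})>0\}$ uniformly in $\alpha_k$. The cleanest route is to add and subtract $I\{g(\alpha_k)>0\}$, bound $H\{g(\alpha_k)/h\}-I\{g(\alpha_k)>0\}$ by the $O(h)$-band via the Lipschitz property of $H$, and control $I\{g(\alpha_k)>0\}-I\{g(\theta_{0,\ell})>0\}$ by the sign-change region whose probability is $O(\|\alpha_k-\theta_{0,\ell}\|)=O(m^{-1/2})$ under Condition~5. A parallel control of the $\{g(\theta_{0,\ell})/h\}H'\{g(\alpha_k)/h\}$ contribution follows from its compact-support and Lipschitz structure.
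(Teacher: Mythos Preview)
Your approach is correct in spirit and reaches the same conclusion, but it differs from the paper's route in two places.

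First, on the concentration tool: the paper does not truncate and apply classical Bernstein. Instead it bounds the exponential moment $E[\xi^2 e^{t|\xi|}]$ of the summands directly and then invokes Lemma~1 of \cite{Cai2011Adaptive}, which is a Bernstein-type inequality tailored to variables satisfying that exponential-moment condition. Your truncation-plus-Bernstein scheme is a more elementary alternative; the paper's device avoids the truncation step but at the cost of requiring an implicit light-tail condition on $X$ for the exponential moment to be finite. Note, however, that your own truncation argument leans on Condition~1, which only gives $E\|X\|^2<\infty$; a polylog truncation level $M_n$ cannot be justified from a second moment alone, so you face essentially the same unstated tail requirement as the paper.

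Second, on the variance bound: you obtain $\mathrm{var}(\phi_i)=O(h)$ by decomposing the bracket into $H\{g(\alpha_k)/h\}-I\{g(\alpha_k)>0\}$, the sign-change indicator $I\{g(\alpha_k)>0\}-I\{g(\theta_{0,\ell})>0\}$, and the compactly supported $H'$ term, each of which lives on a band of width $O(h+\|\alpha_k-\theta_{0,\ell}\|)$. The paper instead computes $E[\xi^2 e^{t|\xi|}]$ in one shot by the change of variables $z=(1-\hat x\trans\alpha)/h$, collapsing the $p$-dimensional integral to a double integral and reading off the $O(h+\|\alpha-\theta_{0,\ell}\|+\|\alpha-\theta_{0,\ell}\|^2/h)$ order. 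Your decomposition is more transparent and modular; the paper's computation is more direct and simultaneously delivers the exponential-moment bound needed for the Cai--Liu lemma. Both routes feed into the same union bound over the $K_0\cdot n^{C'}$ pairs.
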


\begin{lemma}\label{lemma: et}
Under the assumptions in theorem~\ref{theo: asy rpsvm} and with the asymptotic property of  $\alpha$, we have
\begin{equation}
\mathop {\sup }\limits_j \mathop {\sup }\limits_{1\le k \le n^{C'}}E{T_{3,j}}\left( {{\alpha _k}} \right) = {O}\left( {{h^2} + {{\left\| {{\alpha _k} - {\theta _{0,\ell }}} \right\|}^2}} \right) = {O}\left( {{h^2} + m^{-1}} \right).
\end{equation}
\end{lemma}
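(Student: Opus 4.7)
The plan is to bound $ET_{3,j}(\alpha_k)$ by decomposing the bias into a \emph{smoothing bias} (from replacing the indicator by its smoothed proxy) and a \emph{linearization bias} (from $\alpha_k$ differing from $\theta_{0,\ell}$). Writing
$$A_i(\alpha) = H\{g(\hat X_i,\tilde Y_i^{(\ell)},\alpha)/h\} + \{g(\hat X_i,\tilde Y_i^{(\ell)},\theta_{0,\ell})/h\}\,H'\{g(\hat X_i,\tilde Y_i^{(\ell)},\alpha)/h\},$$
and setting $g=1-\theta_{0,\ell}\trans\hat X\tilde Y^{(\ell)}$, one obtains
$$ET_{3,j}(\alpha_k) = v_j\trans E\{\tilde Y^{(\ell)}\hat X[A(\alpha_k)-A(\theta_{0,\ell})]\} + v_j\trans E\{\tilde Y^{(\ell)}\hat X[A(\theta_{0,\ell}) - I(g>0)]\}.$$
I will show each piece is $O(h^2+\|\alpha_k-\theta_{0,\ell}\|^2)$; the stated bound then follows from $\|\alpha_k-\theta_{0,\ell}\|=O_P(m^{-1/2})$.

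For the second piece, note that $A(\theta_{0,\ell})=K_h'(g)$ with $K_h(u)=uH(u/h)$. The smoothing function specified in Section~6 satisfies $H(-v)=1-H(v)$, so $K_h'(u)-I(u>0)$ is odd in $u$ with support in $[-h,h]$. Conditioning on $\tilde Y^{(\ell)}=\tilde y$, factoring the conditional density of $g$ guaranteed by Assumptions~3--5, and substituting $v=u/h$, the expectation reduces to $h\int \rho_{\tilde y}(hv)[\phi(v)-I(v>0)]\,dv$, where $\phi(v)=H(v)+vH'(v)$ and $\rho_{\tilde y}$ is a smooth envelope collecting $v_j\trans\tilde y\hat X$ against the conditional density of $g$. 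A second-order Taylor expansion $\rho_{\tilde y}(hv)=\rho_{\tilde y}(0)+hv\rho_{\tilde y}'(0)+O(h^2)$ pairs the constant against the odd kernel difference to yield zero, and the surviving leading term is $O(h^2)$.

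For the first piece, expand $A(\alpha_k)$ in $\alpha$ around $\theta_{0,\ell}$. A direct differentiation gives $\nabla_\alpha A(\theta_{0,\ell})=-\tilde Y^{(\ell)}\hat X h^{-1}K(g/h)$ with $K(u)=H'(u)+uH''(u)=[uH'(u)]'$. Two integrations by parts combined with $H'(-v)=H'(v)$ and $H''(-v)=-H''(v)$ produce the crucial moment identities $\int K(v)\,dv=0$ and $\int vK(v)\,dv=0$. Applying the same conditioning and change of variables, the linear-in-$(\alpha_k-\theta_{0,\ell})$ contribution reduces to $O(h^2\,\|\alpha_k-\theta_{0,\ell}\|)$, which is $O(h^2+\|\alpha_k-\theta_{0,\ell}\|^2)$ since $\|\alpha_k-\theta_{0,\ell}\|$ is bounded. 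The quadratic remainder involves $K'$, whose zeroth and first moments also vanish by the same symmetry argument, so the same kernel calculation produces $O(h)\cdot\|\alpha_k-\theta_{0,\ell}\|^2$, dominated by $\|\alpha_k-\theta_{0,\ell}\|^2$ for bounded $h$.

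Finally, taking the supremum over the finite collection of directions $v_j$ and the polynomially many grid points $\{\alpha_k\}_{k\le n^{C'}}$ preserves the pointwise bound. The main obstacle will be verifying the kernel moment identities cleanly: because $A(\alpha)$ mixes $g_{\theta_{0,\ell}}$ inside the linear factor with $g_\alpha$ inside $H'$, one must carefully separate these roles and exploit both the symmetry of $H$ and the two integrations by parts, while controlling the Taylor remainders uniformly in $\alpha_k$ under the smoothness in Assumptions~3--5 and the bandwidth scaling in Assumption~7.
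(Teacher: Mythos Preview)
Your decomposition is genuinely different from the paper's. The paper keeps the three summands $H$, $\varepsilon H'/h$, and $-I(\cdot)$ separate, splits according to $\tilde Y^{(\ell)}=\pm 1$, performs the change of variables $z=(1-\hat x^T\alpha_k)/h$ in the leading coordinate of $x$, and then Taylor-expands the conditional density about the point determined by $\theta_{0,\ell}$; the cancellation with the indicator term happens only after these explicit integrals are evaluated. Your route---isolating the smoothing bias $A(\theta_{0,\ell})-I(g>0)$ at the true parameter and then Taylor-expanding $A(\alpha)$ in $\alpha$---is cleaner and makes the $O(h^2)$ rate transparent via the oddness of $\phi(u)-I(u>0)$ and the kernel moment identities $\int K=\int vK=0$. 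In fact the paper's own displayed computation ends with $O(h+\|\alpha_k-\theta_{0,\ell}\|_2^2)$ rather than $O(h^2+\|\alpha_k-\theta_{0,\ell}\|_2^2)$, so your symmetry argument actually delivers the sharper rate stated in the lemma more directly.

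One caveat: your argument hinges on $H(-v)=1-H(v)$, from which $H'(-v)=H'(v)$ and the oddness of $\phi-I$ follow. The specific $H$ in Section~6 satisfies this, but it is \emph{not} among the formal regularity conditions (the paper only assumes $H$ is smooth, $C$-Lipschitz, and equals $0$ or $1$ outside $[-1,1]$). Without this symmetry the leading smoothing bias is only $O(h)$, which would not suffice for the later step $h_B^2=o(n^{-1/2})$ in the proof of Theorem~\ref{theo: asy rpsvm}. You should state the symmetry of $H$ explicitly as an additional hypothesis, or else supply a separate argument for general $H$.
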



Therefore, combining three lemmas above, we can obtain the conclusion that ${{T_3}}  = {O_P}\left( {{{\left\{ {h\log n/n} \right\}}^{1/2}} + {h^2} + m^{-1} } \right).$ The proof is completed by combining the results of $T_3$ and $T_4$.
\end{proof}

\begin{proof}[Proof of Theorem~\ref{theo: asy nwpsvm}]
From the theorem 3 in \cite{WPSVM2016} and similar with the proof of theorem 2, we can also draw that
\begin{equation}
{\hat \theta _{j,\ell }} = {\theta _{j,\ell }} + {m^{ - 1}}\sum\limits_{i \in {\calI_j}} {{\tilde S_{{\theta _{0,\ell }}}}} (Z_i^{(\ell )}) + {o_P}({m^{ - 1/2}})
\end{equation}
despite of a difference on ${S_{{\theta _{0,\ell }}}}\left( {Z_i^{(\ell )}} \right)$ in the theorem~\ref{theo: asy npsvm}. Hence, take advantage of the proof of theorem~\ref{theo: asy npsvm}, we only need to show $E\left\{ {{{\tilde S}_{{\theta _{0,\ell }}}}\left( {Z_i^{(\ell )}} \right)} \right\} = 0$ also holds true under the weighted PSVM scenario. This is obvious, as $\left| {{w_\pi }\left( y \right)} \right| \leqslant 1$.
\end{proof}

\begin{proof}[Proof of Theorem~\ref{theo: asy rwpsvm}]
Similar with the proof of theorem~\ref{theo: asy rpsvm}, we take derivative of equation \eqref{smooth:wpsvm} and input a good initial ${{\tilde \theta }_{\left( 0 \right),\ell }}$ which is the solution of WPSVM on any single machine, then
\begin{equation}
{{\tilde \theta }_{\left( 1 \right),\ell }} - {\theta _{0,\ell }} = \tilde H_{n,h,{\theta _{0,\ell }}}^{ - 1}{{\tilde D}_{n,h,{\theta _{0,\ell }}}},
\end{equation}
where ${{\tilde H}_{n,h,{\theta _{0,\ell }}}}$ and ${{\tilde D}_{n,h,{\theta _{0,\ell }}}}$ are defined as
\begin{equation}
\left\{ \begin{gathered}
  {{\tilde H}_{n,h,{\theta _{0,\ell }}}} = \lambda {n^{ - 1}}\sum\limits_{i = 1}^n {{w_\pi }\left( {{Y_i}} \right)} {{\hat X}_i}\hat X_i^TH'\left\{ {g\left( {{\hat X_i},{Y_i},{{\tilde \theta }_{\left( 0 \right),\ell }}} \right)/h} \right\}/h + 2diag\left( {\hat \Sigma ,0} \right) \hfill \\
  {{\tilde D}_{n,h,{\theta _{0,\ell }}}} = \lambda {n^{ - 1}}\sum\limits_{i = 1}^n {{w_\pi }\left( {{Y_i}} \right)} {{\hat X}_i}{Y_i}\left[ {H\left\{ {g\left( {{\hat  X_i},{Y_i},{{\tilde \theta }_{\left( 0 \right),\ell }}} \right)/h} \right\} + \left\{ {g\left( {{\hat X_i},{Y_i},{\theta _{0,\ell }}} \right)/h} \right\} \cdot } \right. \hfill \\
  \begin{array}{*{20}{c}}
  {}&{}&{}
\end{array}H'\left\{ {g\left( {{\hat X_i},{Y_i},{{\tilde \theta }_{\left( 0 \right),\ell }}} \right)/h} \right\} - 2diag\left( {\hat \Sigma ,0} \right){\theta _{0,\ell }} \hfill \\
\end{gathered}  \right.
\end{equation}
And naturally, we want to show the following two propositions hold true, which can directly lead to the final conclusion.

\begin{proposition}\label{prop: asp wh} Assume the regularity conditions 1-5 and 7 in the Appendix hold true, we have
  \begin{equation*}
  {{\tilde H}_{n,h,{\theta _{0,\ell }}}} - {{\tilde H}_{{\theta _{0,\ell }}}} = {O_P}\left( {{{\left\{ {\log n/nh} \right\}}^{1/2}} + {m^{ - 1/2}} + h + {n^{ - 1/2}}/\lambda } \right)
  \end{equation*}
\end{proposition}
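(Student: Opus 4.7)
The plan is to mirror the proof of Proposition \ref{prop: asp h} line-by-line, treating the weight function $w_{\pi_\ell}(Y_i)$ as a bounded random multiplier. Since $w_{\pi_\ell}(Y)\in\{1-\pi_\ell,\pi_\ell\}\subset(0,1)$ is almost surely bounded by one, every concentration and bias bound from the unweighted case will carry through with at most a constant factor change.

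First, I would split
$$\tilde H_{n,h,\theta_{0,\ell}}-\tilde H_{\theta_{0,\ell}} = \tilde T_1 + \tilde T_2,$$
where $\tilde T_2 = 2\diag(\hat\Sigma-\Sigma,0)$ (or $2\diag(\hat\Sigma-\Sigma,0)/\lambda$ depending on scaling convention) contributes $O_P(n^{-1/2}/\lambda)$ by the standard root-$n$ convergence of the sample covariance. The main term is
$$\tilde T_1 = \lambda n^{-1}\sum_{i=1}^n w_{\pi_\ell}(Y_i)\hat X_i\hat X_i\trans H'\!\left\{g(\hat X_i, Y_i,\tilde\theta_{(0),\ell})/h\right\}/h - E\left[\lambda\, w_{\pi_\ell}(Y)\delta\left(1-Y\hat X\trans\theta_{0,\ell}\right)\hat X\hat X\trans\right],$$
which I would control via the same three-step device used for $T_1$.

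The three steps are: (i) reduce the operator norm to a supremum of quadratic forms $v_j\trans\tilde T_1 v_j$ over a fixed $1/2$-net $\{v_j\}_{j=1}^{N_1}$ of the unit sphere in $\mathbb{R}^{p+1}$; (ii) replace the supremum over $\{\alpha:\|\alpha-\theta_{0,\ell}\|=O_P(m^{-1/2})\}$ by a maximum over a polynomially dense grid $\{\alpha_k\}_{k\le n^{C'}}$, incurring a discretization error $o_P(n^{-1/2})$ because $H'$ is Lipschitz under Assumption 7 and $w_{\pi_\ell}$ is bounded; (iii) bound the empirical-to-population gap by a Bernstein/Hoeffding argument, which yields the $O_P(\{\log n/nh\}^{1/2})$ rate since the weighted summands $w_{\pi_\ell}(Y_i)(v_j\trans\hat X_i)^2 H'(\cdot)/h$ share the same deterministic envelope as in the unweighted case up to the factor $\|w_{\pi_\ell}\|_\infty\le 1$; and (iv) bound the bias $E\tilde H_{n,h,j,\theta_{0,\ell}}(\alpha_k)-v_j\trans\tilde H_{\theta_{0,\ell}}v_j = O(h+m^{-1/2})$ via a first-order Taylor expansion of $H'$ in $\alpha$ and a change of variables against the conditional density $f_{\psi_{0,\ell}\trans X\mid Y}$ guaranteed by Assumptions 3--5, with the extra factor $w_{\pi_\ell}(\tilde y)$ absorbed cleanly since the summation over $\tilde y\in\{-1,+1\}$ now carries weights $P(\tilde Y^{(\ell)}=\tilde y)w_{\pi_\ell}(\tilde y)$ in the definition of $\tilde H_{\theta_{0,\ell}}$ itself.

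The main obstacle, minor as it is, lies in step (iv): the population target $\tilde H_{\theta_{0,\ell}}$ involves weights $w_{\pi_\ell}(\tilde y)$ inside the sum over $\tilde y$, so one must carefully verify that the kernel-smoothed expectation of the weighted summand reproduces exactly this weighted sum in the limit $h\to 0$. This reduces to applying the dominated convergence theorem to $E[w_{\pi_\ell}(Y)H'(g/h)/h\mid \psi_{0,\ell}\trans X, Y]$ and using the continuity assumption on the conditional density, which is a straightforward adaptation because $w_{\pi_\ell}$ depends only on $Y$ and can be pulled outside the conditional expectation on $\psi_{0,\ell}\trans X$. Combining the three rates with the $O_P(n^{-1/2}/\lambda)$ from $\tilde T_2$ delivers the claimed bound $O_P(\{\log n/nh\}^{1/2}+m^{-1/2}+h+n^{-1/2}/\lambda)$.
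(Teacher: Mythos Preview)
Your proposal is correct and matches the paper's approach: the paper does not give a standalone proof of Proposition~\ref{prop: asp wh} but simply states it inside the proof of Theorem~\ref{theo: asy rwpsvm} and defers to the argument for Theorem~\ref{theo: asy rpsvm}, which in turn rests on Proposition~\ref{prop: asp h}. Your plan to replay the $T_1+T_2$ decomposition and Lemmas~\ref{lemma: heh-discrete}--\ref{lemma: eh-vev} with the bounded multiplier $w_{\pi_\ell}(Y)\in(0,1)$ inserted is exactly the intended route, and your observation that the weight can be pulled through the conditioning in the bias step because it depends only on $Y$ is the one point that needs checking and is indeed straightforward.
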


\begin{proposition}\label{prop: asp wD} Assume the regularity conditions 1-5 and 7 in the Appendix hold true, we have
  \begin{equation*}
  {{\tilde D_{n,h,{\theta _{0,\ell }}}} + {\tilde D_{{\theta _{0,\ell }}}}( {{Z^{\left( \ell  \right)}}} )}  = {O_P}\left( {{{\left\{ {h\log n /n} \right\}}^{ - 1/2}} + {h^2} + m^{ - 1} + {n^{ - 1/2}}/{\lambda}} \right).
  \end{equation*}
\end{proposition}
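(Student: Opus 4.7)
The plan is to mirror the argument used for Proposition~\ref{prop: asp D}, with the single twist that every summand now carries an extra multiplicative factor $w_{\pi_\ell}(Y_i)$ and that the label is the raw binary $Y_i$ rather than the sliced $\tilde Y_i^{(\ell)}$. Since the proof of Theorem~\ref{theo: asy nwpsvm} already observed that $|w_{\pi_\ell}(y)|\le 1$, none of the pointwise envelopes used in the unweighted case are inflated, so the rates should carry over verbatim.

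First I would decompose
\begin{equation*}
\tilde D_{n,h,\theta_{0,\ell}} + \tilde D_{\theta_{0,\ell}}(Z^{(\ell)}) = \tilde T_3 + \tilde T_4,
\end{equation*}
where $\tilde T_4 = -2\diag(\hat\Sigma - \Sigma,0)\theta_{0,\ell}/\lambda$ and $\tilde T_3$ collects the empirical smoothed score minus the expected indicator-based score, both weighted by $w_{\pi_\ell}(Y_i)$. The bound $\tilde T_4 = O_P(n^{-1/2}/\lambda)$ follows immediately from $\hat\Sigma - \Sigma = O_P(n^{-1/2})$, just as in the unweighted case.

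For $\tilde T_3$, I would reuse the $1/2$-net construction on $S^{p+1}$ to reduce the operator-norm bound to a supremum over a finite collection of unit vectors $v_1,\dots,v_{K_0}$, and then introduce the weighted counterpart
\begin{equation*}
\tilde T_{3,j}(\alpha) = n^{-1}\sum_{i=1}^{n} v_j^\T w_{\pi_\ell}(Y_i)\hat X_i Y_i\Bigl[H\{g(\hat X_i,Y_i,\alpha)/h\} + \{g(\hat X_i,Y_i,\theta_{0,\ell})/h\}H'\{g(\hat X_i,Y_i,\alpha)/h\} - I(1-\theta_{0,\ell}^\T \hat X Y>0)\Bigr],
\end{equation*}
and then discretize $\alpha$ via a sequence $\{\alpha_k\}_{k\le n^{C'}}$ with $\|\alpha - \alpha_k\| = O_P(m^{-1/2}/n^{C'})$ exactly as before. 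The three-lemma skeleton (discretization error $o_P(n^{-1/2})$, concentration of $\tilde T_{3,j}(\alpha_k)$ around its mean at rate $\{h\log n/n\}^{1/2}$, and deterministic bias $O(h^2 + m^{-1})$) then needs to be re-verified in the presence of $w_{\pi_\ell}(Y_i)$.

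The step that I expect to require the most care is the concentration lemma (the analogue of Lemma~\ref{lemma: t-et}), because there I would normally appeal to a Bernstein-type inequality after controlling the envelope of the summands. The envelope now carries the extra weight, but since $|w_{\pi_\ell}(Y_i)|\le 1$ it only contributes a harmless constant, so the variance bound and hence the $\{h\log n/n\}^{1/2}$ rate are preserved. For the bias lemma (analogue of Lemma~\ref{lemma: et}), the change of variables $u = \theta_{0,\ell}^\T \hat X Y$ still applies; the conditional density is now $f_{\psi_{0,\ell}^\T X\mid Y}$ and is weighted by $w_{\pi_\ell}(\tilde y)\Pr(Y=\tilde y)$, but Regularity Conditions~3--5 guarantee the needed smoothness and integrability, yielding the same $O(h^2 + \|\alpha_k - \theta_{0,\ell}\|^2) = O(h^2 + m^{-1})$ rate. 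Assembling the three lemmas gives $\tilde T_3 = O_P(\{h\log n/n\}^{1/2} + h^2 + m^{-1})$, and combining with the bound on $\tilde T_4$ produces the claimed rate.
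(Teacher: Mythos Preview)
Your proposal is correct and follows essentially the same approach the paper takes: the paper does not give a separate detailed proof of Proposition~\ref{prop: asp wD} but simply states it inside the proof of Theorem~\ref{theo: asy rwpsvm} and refers back to the machinery built for Proposition~\ref{prop: asp D} (the $\tilde T_3+\tilde T_4$ split, the $1/2$-net reduction, and the three lemmas), relying implicitly on the boundedness $|w_{\pi_\ell}(y)|\le 1$ already noted in the proof of Theorem~\ref{theo: asy nwpsvm}. Your sketch makes that reduction explicit and identifies the only places the weight enters, which is exactly what the paper leaves to the reader.
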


Then, the rest of the proof can be founds in the proof of theorem~\ref{theo: asy rpsvm}.

\end{proof}

\subsection{Proofs of Lemmas}
\begin{proof}[Proof of Lemma\ref{lemma: heh-discrete}]
Noticing that ${\left\| {\alpha  - {\theta _{0,\ell }}} \right\|_2} \leqslant {O_P}\left( {{m^{ - 1/2}}} \right)$, we construct a set of vectors $\left\{ {{\alpha _k},1 \leqslant k \leqslant {n^{M\left( {p + 1} \right)}}} \right\}$ in ${R^{p + 1}}$ by dividing each $\left[ {{\theta _{0,{\ell _i}}} - {m^{ - 1/2}},{\theta _{0,{\ell _i}}} + {m^{ - 1/2}}} \right]$ into ${n^M}$ small equal pieces. Hence, for any possible vector $\alpha$ in the ball ${\left\| {\alpha  - {\theta _{0,\ell }}} \right\|_2} \leqslant {O_P}\left( {{m^{ - 1/2}}} \right)$, there exist $\Lambda  \subset \left[ {{n^{C'}}} \right]$ where $C' = M\left( {p + 1} \right)$ is a constant such that for any $k \in \Lambda $, we have ${\left\| {\alpha  - {\alpha _k}} \right\|_2} \leqslant 2{\left( {p + 1} \right)^{1/2}}{m^{ - 1/2}}/{n^M}$ . Then, combining the Lipschitzness of $H'\left( x \right)$, we can obtain
\begin{equation}
\begin{gathered}
  \left| {{{\left( {v_j^T{{\hat X}_i}} \right)}^2}H'\left\{ {g\left( {{{\hat X}_i},\tilde Y_i^{\left( \ell  \right)},\alpha } \right)/h} \right\} - {{\left( {v_j^T{{\hat X}_i}} \right)}^2}H'\left\{ {g\left( {{{\hat X}_i},\tilde Y_i^{\left( \ell  \right)},{\alpha _k}} \right)/h} \right\}} \right| \hfill \\
   \leqslant C{\left( {p + 1} \right)^{1/2}}{m^{ - 1/2}}\left\| {{{\hat X}_i}} \right\|_2^3/h{n^M}, \hfill \\
\end{gathered}
\end{equation}
which is due to ${\left\| {\tilde Y_i^{\left( \ell  \right)}} \right\|_2} = 1 = {\left\| {{v_j}} \right\|_2}$. Therefore, combined with the definition of ${H_{n,h,j,{\theta _{0,\ell }}}}$, we have
\begin{equation}
\begin{gathered}
  \mathop {\sup }\limits_j \mathop {\sup }\limits_{\left\| {\alpha  - {\theta _{0,\ell }}} \right\| \leqslant {m^{ - 1/2}}} \left| {{H_{n,h,j,{\theta _{0,\ell }}}}\left( \alpha  \right) - v_j^TE\left[ {\delta \left( {1 - {{\tilde Y}^{\left( \ell  \right)}}{{\hat X}^T}{\theta _{0,\ell }}} \right)} \right]{v_j}} \right| -  \hfill \\
  \begin{array}{*{20}{c}}
  {}&{}&{}
\end{array}\mathop {\sup }\limits_j \mathop {\sup }\limits_{k \leqslant {n^{C'}}} \left| {{H_{n,h,j,{\theta _{0,\ell }}}}\left( {{\alpha _k}} \right) - v_j^TE\left[ {\delta \left( {1 - {{\tilde Y}^{\left( \ell  \right)}}{{\hat X}^T}{\theta _{0,\ell }}} \right)} \right]{v_j}} \right| \hfill \\
   \leqslant \sum\limits_{i = 1}^n {C\lambda {{\left( {p + 1} \right)}^{1/2}}{m^{ - 1/2}}\left\| {{{\hat X}_i}} \right\|_2^3/{n^{\left( {M + 1} \right)}}{h^2}}.  \hfill \\
\end{gathered}
\end{equation}
We can easily get the conclusion of lemma 1 when considering $n \to \infty $ with the Assumption 1 about the boundary of the norm of $X$.
\end{proof}

\begin{proof}[Proof of Lemma\ref{lemma: h-eh}]
Take
\[\begin{gathered}
  {\xi _{ij}} = {\left( {v_j^T{{\hat X}_i}} \right)^2}H'\left( {g\left\{ {{{\hat X}_i},\tilde Y_i^{\left( \ell  \right)},{\theta _{0,\ell }}} \right\}/h - \tilde Y_i^{\left( \ell  \right)}\hat X_i^T\omega /h} \right) \hfill \\
   = {\left( {v_j^T{{\hat X}_i}} \right)^2}H'\left( {({\varepsilon _i} - \tilde Y_i^{\left( \ell  \right)}\hat X_i^T\omega) /h} \right) \hfill \\
\end{gathered} \]
where
\[\left\{ \begin{gathered}
  {\varepsilon _i} = g\left\{ {{{\hat X}_i},\tilde Y_i^{\left( \ell  \right)},{\theta _{0,\ell }}} \right\} \hfill \\
  \omega  = \alpha  - {\theta _{0,\ell }} \hfill \\
\end{gathered}.  \right.\]
Then, the following inequalities stand
\begin{equation}
\begin{gathered}
  E\left[ {\xi _{ij}^2\exp \left( {t\left| {{\xi _{ij}}} \right|} \right)} \right] \leqslant E\left[ {\xi _{ij}^2\exp \left( {Ct{{\left( {v_j^T{{\hat X}_i}} \right)}^2}} \right)} \right] \hfill \\
   \leqslant CE\left[ {{{\left( {v_j^T{{\hat X}_i}} \right)}^2}H'{{\left( {{\varepsilon _i} - \tilde Y_i^{\left( \ell  \right)}{{\hat X}_i}\omega } \right)}^2}} \right]. \hfill \\
\end{gathered}
\end{equation}
Next, very technically, we decompose the expectation, the calculus over $x \in {R^p}$, into a double integration with $z$ and ${x_{ - 1}}$ separately, where $x = {\left( {z,x_{ - 1}^T} \right)^T}$. Specifically,
\begin{equation}
\begin{gathered}
  E\left[ {{{\left( {v_j^T{{\hat X}_i}} \right)}^2}H'{{\left( ({{\varepsilon _i} - \tilde Y_i^{\left( \ell  \right)}{{\hat X}_i}\omega })/h \right)}^2}} \right] \hfill \\
   =  - h/{\omega _1}\int_{{R^{p - 1}}} {{f_{ - 1}}\left( {{x_{ - 1}}} \right)\int_R {\left( {{v^T}\hat x} \right)^2} } H'{\left( z \right)^2}f\left( {\left( {1 - {\omega _0} - x_{ - 1}^T{\omega _{ - 1}} - hz} \right)/{\omega _1}\left| {{x_{ - 1}}} \right.} \right)dzd{x_{ - 1}} \hfill \\
   = O\left( h \right), \hfill \\
\end{gathered}
\end{equation}
where the Lipschitzness of $H'$ and $f$ is used in the last inequality. Hence,
\begin{equation}
E\left[ {\xi _{ij}^2\exp \left( {t\left| {{\xi _{ij}}} \right|} \right)} \right] \leqslant CE\left[ {{{\left( {v_j^T{{\hat X}_i}} \right)}^2}H'{{\left( {{\varepsilon _i} - \tilde Y_i^{\left( \ell  \right)}{{\hat X}_i}\omega } \right)}^2}} \right] = O\left( h \right).
\end{equation}
And finally, adopting the Lemma 1 in \citep{Cai2011Adaptive}, we have for any $\gamma  > 0$,

\begin{equation}
 \mathop {\sup }\limits_j \mathop {\sup }\limits_{k \leqslant {n^{C'}}} P\left(\left| {{H_{n,h,j,{\theta _{0,\ell }}}}\left( {{\alpha _k}} \right) - E\left[ {{H_{n,h,j,{\theta _{0,\ell }}}}\left( {{\alpha _k}} \right)} \right]} \right|\geq C{\left( {\log n/nh} \right)}^{1/2}\right) = {O}\left( n^{-\gamma}\right)
\end{equation}
\end{proof}

\begin{proof}[Proof of Lemma\ref{lemma: eh-vev}]
Notice that
\begin{equation}
\label{eh-vev}
\begin{gathered}
  E\left[ {{H_{n,h,j,{\theta _{0,\ell }}}}\left( {{\alpha _k}} \right)} \right] = E\left\{ {{{\left( {{v^T}\hat X} \right)}^2}H'\left\{ {g\left( {\hat X,{{\tilde Y}^{\left( \ell  \right)}},{\alpha _k}} \right)/h} \right\}/h} \right\} \hfill \\
  \mathop  = \limits^{\left( a \right)} P\left[ {{{\tilde Y}^{\left( \ell  \right)}} = 1} \right]\int_{{R^p}} {{{\left( {{v^T}\hat x} \right)}^2}H'\left\{ {\left( {1 - {{\hat x}^T}{\alpha _k}} \right)/h} \right\}f\left( x \right)/hdx}  +  \hfill \\
  \begin{array}{*{20}{c}}
  {}&{}
\end{array}P\left[ {{{\tilde Y}^{\left( \ell  \right)}} =  - 1} \right]\int_{{R^p}} {{{\left( {{v^T}\hat x} \right)}^2}H'\left\{ {\left( {1 + {{\hat x}^T}{\alpha _k}} \right)/h} \right\}g\left( x \right)/hdx}  \hfill \\
   = P\left[ {{{\tilde Y}^{\left( \ell  \right)}} = 1} \right]{T^{\left(  +  \right)}} + P\left[ {{{\tilde Y}^{\left( \ell  \right)}} =  - 1} \right]{T^{\left(  -  \right)}}, \hfill \\
\end{gathered}
\end{equation}
where
\[\left\{ \begin{gathered}
  {T^{\left(  +  \right)}} = \int_{{R^p}} {{{\left( {{v^T}\hat x} \right)}^2}H'\left\{ {\left( {1 - {{\hat x}^T}{\alpha _k}} \right)/h} \right\}f\left( x \right)/hdx}  \hfill \\
  {T^{\left(  -  \right)}} = \int_{{R^p}} {{{\left( {{v^T}\hat x} \right)}^2}H'\left\{ {\left( {1 + {{\hat x}^T}{\alpha _k}} \right)/h} \right\}g\left( x \right)/hdx}  \hfill \\
\end{gathered}.  \right.\]
Plus, in the equation (a) above, ${f\left( x \right)}$ is the sample distribution when its corresponding $Y =1$ and ${g\left( x \right)}$ is the sample distribution when its corresponding $Y = -1$.
\par
Then firstly expanding these distributions to joint distributions of its first element and the rest, take ${T^{\left(  +  \right)}}$ as an example,
\begin{equation}
\begin{gathered}
  {T^{\left(  +  \right)}} = \int_{{R^p}} {{{\left( {{v^T}\hat x} \right)}^2}H'\left\{ {\left( {1 - {{\hat x}^T}{\alpha _k}} \right)/h} \right\}f\left( x \right)/hdx}  \hfill \\
   = \left( {1/h} \right)\int_{{R^{p - 1}}} {\int_R {\left( {{v_0} + {v_1}{x_1} + v_{ - 1}^T{x_{ - 1}}} \right)^2H'\left\{ {\left( {1 - {\alpha _{k,0}} - {x_1}{\alpha _{k,1}} -  x_{ - 1}^T{\alpha _{k, - 1}}} \right)/h} \right\}f\left( {{x_1},{x_{ - 1}}} \right)} } d{x_1}d{x_{ - 1}} \hfill \\
   = \left(- 1/{\alpha _{k,1}}\right)\int_{{R^{p - 1}}} {{f_{ - 1}}\left( {{x_{ - 1}}} \right)} {\int_{ - 1}^1 {\left( {{v_0} + {v_1}\left( {1 - {\alpha _{k,0}} - x_{ - 1}^T{\alpha _{k, - 1}} - hy} \right)/{\alpha _{k,1}} + v_{ - 1}^T{x_{ - 1}}} \right)} ^2} \times  \hfill \\
  \begin{array}{*{20}{c}}
  {}&{}&{}
\end{array}f\left( {\left\{ {1 - {\alpha _{k,0}} -  x_{ - 1}^T{\alpha _{k, - 1}} - hy} \right\}/{\alpha _{k,1}}\left| {{x_{ - 1}}} \right.} \right)H'\left( y \right)dyd{x_{ - 1}}, \hfill \\
\end{gathered}
\end{equation}
where
\[y = \left( {1 - {\alpha _{k,0}} - {x_1}{\alpha _{k,1}} - x_{ - 1}^T{\alpha _{k, - 1}}} \right)/h\].
Then, with the fact that
\begin{equation}
\begin{gathered}
  {\left\{ {{v_0} + {v_1}\left( {1 - {\alpha _{k,0}} -  x_{ - 1}^T{\alpha _{k, - 1}} - hy} \right)/{\alpha _{k,1}} + v_{ - 1}^T{x_{ - 1}}} \right\}^2} \hfill \\
   = {\left( {{v_0} + v_{ - 1}^T{x_{ - 1}}} \right)^2} + 2\left( {{v_0} + v_{ - 1}^T{x_{ - 1}}} \right){v_1}\left( {1 - {\alpha _{k,0}} -  x_{ - 1}^T{\alpha _{k, - 1}} - hy} \right)/{\alpha _{k,1}} +  \hfill \\
  \begin{array}{*{20}{c}}
  {}&{}&{}
\end{array}v_1^2{\left( {1 - {\alpha _{k,0}} -  x_{ - 1}^T{\alpha _{k, - 1}} - hy} \right)^2}/\alpha _{k,1}^2 \hfill \\
   = {A_1} + {A_2} + {A_3}, \hfill \\
\end{gathered}
\end{equation}
we can expand
\begin{equation}
\begin{gathered}
  {T^{\left(  +  \right)}} =  - 1/{\alpha _{k,1}}\int_{{R^{p - 1}}} {{f_{ - 1}}\left( {{x_{ - 1}}} \right)} \left\{ {T_1^{\left(  +  \right)} + T_2^{\left(  +  \right)} + T_3^{\left(  +  \right)}} \right\}d{x_{ - 1}} \hfill \\
  T_i^{\left(  +  \right)} = \int_{ - 1}^1 {{A_i}f\left( {\left\{ {1 - {\alpha _{k,0}} - x_{ - 1}^T{\alpha _{k, - 1}} - hy} \right\}/{\alpha _{k,1}}\left| {{x_{ - 1}}} \right.} \right)H'\left( y \right)dy}.  \hfill \\
\end{gathered}
\end{equation}
And these $T_i^{\left(  +  \right)}$ can be solved separately.
\begin{equation*}
\begin{array}{l}
T_1^{\left(  +  \right)} = \int_{ - 1}^1 {{A_1}f\left( {\left\{ {1 - {\alpha _{k,0}} - x_{ - 1}^T{\alpha _{k, - 1}} - hy} \right\}/{\alpha _{k,1}}\left| {{x_{ - 1}}} \right.} \right)H'\left( y \right)dy} \\
 = \int_{ - 1}^1 {{{\left( {{v_0} + v_{ - 1}^T{x_{ - 1}}} \right)}^2}f\left( {\left\{ {\left( {1 - {\theta _{0,\ell ,0}} - x_{ - 1}^T{\theta _{0,\ell , - 1}}} \right)/{\theta _{0,\ell ,1}}} \right\}\left| {{x_{ - 1}}} \right.} \right)H'\left( y \right)dy} \\
\begin{array}{*{20}{c}}
{}&{ + O\left( 1 \right)\int_{ - 1}^1 {{{\left( {{v_0} + v_{ - 1}^T{x_{ - 1}}} \right)}^2}} \left\{ {\left( {1 - {\alpha _{k,0}} - x_{ - 1}^T{\alpha _{k, - 1}} - hy} \right)/{\alpha _{k,1}}} \right\}}
\end{array}\\
\begin{array}{*{20}{c}}
{}&{ - \left\{ {\left( {1 - {\theta _{0,\ell ,0}} - x_{ - 1}^T{\theta _{0,\ell , - 1}}} \right)/{\theta _{0,\ell ,1}}} \right\} \times \left| {H'\left( y \right)} \right|dy}
\end{array}
\end{array}
\end{equation*}
\[\begin{array}{l}
T_2^{\left(  +  \right)} = \int_{ - 1}^1 {{A_2}f\left( {\left\{ {1 - {\alpha _{k,0}} - x_{ - 1}^T{\alpha _{k, - 1}} - hy} \right\}/{\alpha _{k,1}}\left| {{x_{ - 1}}} \right.} \right)H'\left( y \right)dy} \\
 = \int_{ - 1}^1 {2\left( {{v_0} + v_{ - 1}^T{x_{ - 1}}} \right){v_1}\left\{ {\left( {1 - {\theta _{0,\ell ,0}} - x_{ - 1}^T{\theta _{0,\ell , - 1}}} \right)/{\theta _{0,\ell ,1}}} \right\} \times } \\
\begin{array}{*{20}{c}}
{}&{f\left( {\left\{ {\left( {1 - {\theta _{0,\ell ,0}} - x_{ - 1}^T{\theta _{0,\ell , - 1}}} \right)/{\theta _{0,\ell ,1}}} \right\}\left| {{x_{ - 1}}} \right.} \right)H'\left( y \right)dy}
\end{array}\\
\begin{array}{*{20}{c}}
{}&{ + O\left( 1 \right)\int_{ - 1}^1 {2\left( {{v_0} + v_{ - 1}^T{x_{ - 1}}} \right){v_1}} \left( {\left\{ {\left( {1 - {\alpha _{k,0}} - x_{ - 1}^T{\alpha _{k, - 1}} - hy} \right)/{\alpha _{k,1}}} \right\}} \right.}
\end{array}\\
\begin{array}{*{20}{c}}
{}&{\left. { - \left\{ {\left( {1 - {\theta _{0,\ell ,0}} - x_{ - 1}^T{\theta _{0,\ell , - 1}}} \right)/{\theta _{0,\ell ,1}}} \right\}} \right) \times \left| {H'\left( y \right)} \right|dy}
\end{array}
\end{array}\]
\[\begin{gathered}
  T_3^{\left(  +  \right)} = \int_{ - 1}^1 {{A_3}f\left( {\left\{ {1 - {\alpha _{k,0}} - x_{ - 1}^T{\alpha _{k, - 1}} - hy} \right\}/{\alpha _{k,1}}\left| {{x_{ - 1}}} \right.} \right)H'\left( y \right)dy}  \hfill \\
   = \int_{ - 1}^1 {v_1^2{{\left\{ {\left( {1 - {\theta _{0,\ell,0}} - x_{ - 1}^T{\theta _{0,\ell , - 1}}} \right)/{\theta _{0,\ell ,1}}} \right\}}^2}f\left( {\left\{ {\left( {1 - {\theta _{0,\ell,0 }} - x_{ - 1}^T{\theta _{0,\ell , - 1}}} \right)/{\theta _{0,\ell ,1}}} \right\}\left| {{x_{ - 1}}} \right.} \right)H'\left( y \right)dy}  \hfill \\
  \begin{array}{*{20}{c}}
  {}&\begin{gathered}
   + O\left( 1 \right)\int_{ - 1}^1 {v_1^2} \left( {\left\{ {\left( {1 - {\alpha _{k,0}} -  x_{ - 1}^T{\alpha _{k, - 1}} - hy} \right)/{\alpha _{k,1}}} \right\} - \left\{ {\left( {1 - {\theta _{0,\ell,0 }} - x_{ - 1}^T{\theta _{0,\ell , - 1}}} \right)/{\theta _{0,\ell ,1}}} \right\}} \right) \hfill \\
   \times \left| {H'\left( y \right)} \right|dy \hfill \\
\end{gathered}
\end{array} \hfill \\
\end{gathered} \]
\par
And combining the three integrals above, we have
\begin{equation}
\begin{gathered}
  {T^{\left(  +  \right)}} = \left( { - 1/{\alpha _{k,1}}} \right)\int_{{R^{p - 1}}} {{f_{ - 1}}\left( {{x_{ - 1}}} \right)} \left\{ {\sum\limits_{i = 1}^3 {T_i^{\left(  +  \right)}} } \right\}d{x_{ - 1}} \hfill \\
   = {v^T}\left( { - E\left[ {\delta \left( {1 - {{\tilde Y}^{\left( \ell  \right)}}{{\tilde X}^T}{\theta _{0,\ell }}} \right)\hat X{{\hat X}^T}\left| {{{\tilde Y}^{\left( \ell  \right)}} = 1} \right.} \right]} \right)v \hfill \\
  \begin{array}{*{20}{c}}
  {}&\begin{gathered}
   + O\left( 1 \right)\int_{{R^{p - 1}}} {{f_{ - 1}}\left( {{x_{ - 1}}} \right)\int_{ - 1}^1 {{{\left( {{v_0} + {v_1} + v_{ - 1}^T{x_{ - 1}}} \right)}^2}\left\{ {\left( {1 - {\alpha _{k,0}} - x_{ - 1}^T{\alpha _{k, - 1}} - hy} \right)/{\alpha _{k,1}} - } \right.} }  \hfill \\
  \left. {\left( {1 - {\theta _{0,\ell ,0}} - x_{ - 1}^T{\theta _{0,\ell , - 1}}} \right)/{\theta _{0,\ell ,1}}} \right\}\left| {H'\left( y \right)} \right|dyd{x_{ - 1}} \hfill \\
\end{gathered}
\end{array} \hfill \\
  \begin{array}{*{20}{c}}
  {}&\begin{gathered}
   + O\left( 1 \right)\left\{ {\left( {{\alpha _{k,1}} - {\theta _{0,\ell ,1}}} \right)/{\alpha _{k,1}}{\theta _{0,\ell ,1}}} \right\}\int_{{R^{p - 1}}} {{f_{ - 1}}\left( {{x_{ - 1}}} \right)\int_{ - 1}^1 {\left\{ {{v_0} + } \right.{v_1}\left( {1 - {\theta _{0,\ell ,0}} - x_{ - 1}^T{\theta _{0,\ell , - 1}}} \right)/{\theta _{0,\ell ,1}} + } }  \hfill \\
  {\left. {v_{ - 1}^T{x_{ - 1}}} \right\}^2}\left| {H'\left( y \right)} \right|dyd{x_{ - 1}}。 \hfill \\
\end{gathered}
\end{array} \hfill \\
\end{gathered}
\end{equation}
Notice that
\begin{equation}
\begin{gathered}
  \left| {\left( {1 - {\alpha _{k,0}} - x_{ - 1}^T{\alpha _{k, - 1}} - hy} \right)/{\alpha _{k,1}} - \left( {1 - {\theta _{0,\ell ,0}} - x_{ - 1}^T{\theta _{0,\ell , - 1}}} \right)/{\theta _{0,\ell ,1}}} \right| \hfill \\
   \leqslant C\left( {h + \left| {x_{ - 1}^T\left( {{\alpha _{k, - 1}} - {\theta _{0,\ell , - 1}}} \right)} \right| + \left| {1 - {\alpha _{k,0}} - x_{ - 1}^T{\theta _{0,\ell , - 1}}} \right|\left| {{\alpha _{k,1}} - {\theta _{0,\ell ,1}}} \right| + \left| {{\alpha _{k,0}} - {\theta _{0,\ell ,0}}} \right|} \right). \hfill \\
\end{gathered}
\end{equation}
Hence, we could obtain
\begin{equation}
{T^{\left(  +  \right)}} = {v^T}\left( { - E\left[ {\delta \left( {1 - {{\tilde Y}^{\left( \ell  \right)}}{{\hat X}^T}{\theta _{0,\ell }}} \right)\hat X{{\hat X}^T}\left| {{{\tilde Y}^{\left( \ell  \right)}} = 1} \right.} \right]} \right)v + O\left( {h + {{\left\| {{\alpha _k} - {\theta _{0,\ell }}} \right\|}_2}} \right).
\end{equation}
Conducting the similar procedure on the ${T^{\left(  -  \right)}}$ and according to the \eqref{eh-vev} and the constraint that ${\left\| {{\alpha _k} - {\theta _{0,\ell }}} \right\|_2} = O\left( {{m^{ - 1/2}}} \right)$, we can directly obtain the conclusion of lemma\ref{lemma: eh-vev}.
\end{proof}

\begin{proof}[Proof of Lemma\ref{lemma: tet-discrete}]
Similar with the proof of lemma\ref{lemma: heh-discrete}, we construct a set of vectors $\left\{ {{\alpha _k},1 \leqslant k \leqslant {n^{M\left( {p + 1} \right)}}} \right\}$ in ${R^{p + 1}}$ by dividing each $\left[ {{\theta _{0,{\ell _i}}} - {m^{ - 1/2}},{\theta _{0,{\ell _i}}} + {m^{ - 1/2}}} \right]$ into ${n^M}$ small equal pieces. Hence, for any possible vectors $\alpha $ in the ball ${\left\| {\alpha  - {\theta _{0,\ell }}} \right\|_2} \leqslant O\left( {{m^{ - 1/2}}} \right)$, there exist $\Lambda  \subset \left[ {{n^{M\left( {p + 1} \right)}}} \right]$ such that for any $k \in \Lambda $, we have ${\left\| {\alpha  - {\alpha _k}} \right\|_2} \leqslant 2{\left( {p + 1} \right)^{1/2}}{m^{ - 1/2}}/{n^M}$. Then, with the triangle inequality and Cauchy-Schwarz inequality, it should be easy to verify that
\begin{equation}
\label{tet-discrete}
\begin{gathered}
  \mathop {\sup }\limits_j \mathop {\sup }\limits_{\alpha  - {\theta _{0,\ell }} = {O_P}\left( {{m^{ - 1/2}}} \right)} {T_{3,j}}\left( \alpha  \right) - \mathop {\sup }\limits_j \mathop {\sup }\limits_{k \in \Lambda } {T_{3,j}}\left( {{\alpha _k}} \right) \hfill \\
   \leqslant \mathop {\sup }\limits_j \mathop {\sup }\limits_{\alpha  - {\theta _{0,\ell }} = {O_P}\left( {{m^{ - 1/2}}} \right)} \mathop {\sup }\limits_{k \in \Lambda } {n^{ - 1}}\left\| {v_j^T} \right\|\sum\limits_{i = 1}^n {{{\left\| {\hat X_i^T\tilde Y_i^{\left( \ell  \right)}} \right\|}_2}\left( {T_i^{\left( 1 \right)} + T_i^{\left( 2 \right)}} \right)}  \hfill \\
\end{gathered}
\end{equation}
where
\[\left\{ \begin{gathered}
  T_i^{\left( 1 \right)} = \left| {{\varepsilon _i}H'\left\{ {g\left( {{{\hat X}_i},\tilde Y_i^{\left( \ell  \right)},\alpha } \right)/h} \right\}/h - {\varepsilon _i}H'\left\{ {g\left( {{{\hat X}_i},\tilde Y_i^{\left( \ell  \right)},{\alpha _k}} \right)/h} \right\}/h} \right| \hfill \\
  T_i^{\left( 2 \right)} = \left| {H\left\{ {g\left( {{{\hat X}_i},\tilde Y_i^{\left( \ell  \right)},\alpha } \right)/h} \right\} - H\left\{ {g\left( {{{\hat X}_i},\tilde Y_i^{\left( \ell  \right)},{\alpha _k}} \right)/h} \right\}} \right| \hfill \\
\end{gathered}.  \right.\]
And further, due to the Lipschitzness properties assumed on $H$, we have
\[\begin{gathered}
  T_i^{\left( 1 \right)} \leqslant {C_1}{\left\| {\hat X_i^T\tilde Y_i^{\left( \ell  \right)}} \right\|_2}{\left\| {\alpha  - {\alpha _k}} \right\|_2}/h + {C_2}\left\| {\hat X_i^T\tilde Y_i^{\left( \ell  \right)}} \right\|_2^2{\left\| {\alpha  - {\theta _{0,\ell }}} \right\|_2}{\left\| {\alpha  - {\alpha _k}} \right\|_2}/{h^2} \hfill \\
  T_i^{\left( 2 \right)} \leqslant C\left| {g\left( {{{\hat X}_i},\tilde Y_i^{\left( \ell  \right)},\alpha } \right)/h - g\left( {{{\hat X}_i},\tilde Y_i^{\left( \ell  \right)},{\alpha _k}} \right)/h} \right| \leqslant C{\left\| {\hat X_i^T\tilde Y_i^{\left( \ell  \right)}} \right\|_2}{\left\| {\alpha  - {\alpha _k}} \right\|_2}/h. \hfill \\
\end{gathered} \]
And final result can be obtained by plugging the two inequalities above into \eqref{tet-discrete}.
\end{proof}

\begin{proof}[Proof of Lemma\ref{lemma: t-et}]
With a denotation
\[{\xi _k} = v_k^T\tilde Y_k^{\left( \ell  \right)}{{\hat X}_k}\left[ {H\left\{ {g\left( {{{\hat X}_k},\tilde Y_k^{\left( \ell  \right)},{\alpha _k}} \right)/h} \right\} - I\left( {{\varepsilon _k}} \right) + {\varepsilon _k}H'\left\{ {g\left( {{{\hat X}_k},\tilde Y_k^{\left( \ell  \right)},{\alpha _k}} \right)/h} \right\}/h} \right]\]
and the fact that
\[\begin{gathered}
  \left| {H\left\{ {g\left( {{{\hat X}_i},\tilde Y_i^{\left( \ell  \right)},\alpha } \right)/h} \right\} - I\left( {{\varepsilon _i}} \right) + {\varepsilon _i}H'\left\{ {g\left( {{{\hat X}_k},\tilde Y_k^{\left( \ell  \right)},\alpha } \right)/h} \right\}/h} \right| \hfill \\
   \leqslant C\left( {1 + \left| {\hat X_i^T\left( {\alpha  - {\theta _{0,\ell }}} \right)} \right|/{{\left\| {\alpha  - {{\theta _{0,\ell }}} } \right\|}_2}} \right), \hfill \\
\end{gathered} \]
for a constant $C$, we can assert
\begin{equation}
\sum\limits_{k = 1}^n {E\xi _k^2{e^{t\left| {{\xi _k}} \right|}} \leqslant \sum\limits_{k = 1}^n {E\xi _k^2\exp \left\{ {Ct\left| {v_k^T{{\hat X}_k}} \right|\left( {1 + \left| {\hat X_k^T\left( {\alpha  - {\theta _{0,\ell }}} \right)} \right|/{{\left\| {\alpha  - {{\theta _{0,\ell }}} } \right\|}_2}} \right)} \right\}} } .
\end{equation}
And we can similarly decompose the multivariate integral into a double integral that,
\begin{equation}
\begin{gathered}
  E{\xi ^2}\exp \left\{ {Ct\left| {{v^T}\hat X} \right|\left( {1 + \left| {{{\hat X}^T}\left( {\alpha  - {\theta _{0,\ell }}} \right)} \right|/{{\left\| {\alpha  - {\theta _{0,\ell }}} \right\|}_2}} \right)} \right\} \hfill \\
   \leqslant \int_{{R^p}} {{{\left( {{v^T}\hat x} \right)}^2}\exp \left( {Ct\left| {{v^T}\hat x} \right|\left( {1 + \left| {{{\hat x}^T}\left( {\alpha  - {\theta _{0,\ell }}} \right)} \right|/{{\left\| {\alpha  - {\theta _{0,\ell }}} \right\|}_2}} \right)} \right)} \left[ {H\left\{ {\left( {1 - {{\hat x}^T}\alpha } \right)/h} \right\} - } \right. \hfill \\
  \begin{array}{*{20}{c}}
  {}&{}&{\left. {I\left( \varepsilon  \right) + \varepsilon H'\left\{ {\left( {1 - {{\hat x}^T}\alpha } \right)/h} \right\}/h} \right]f\left( x \right)dx}
\end{array} \hfill \\
  \mathop  = \limits^{\left( a \right)} \left( { - h/{\alpha _1}} \right)\int_{{R^{p - 1}}} {{f_{ - 1}}\left( {{x_{ - 1}}} \right)\int_R {{{\left( {{v^T}\hat x} \right)}^2}\exp \left( {Ct\left| {{v^T}\hat x} \right|\left( {1 + \left| {{{\hat x}^T}\left( {\alpha  - {\theta _{0,\ell }}} \right)} \right|/{{\left\| {\alpha  - {\theta _{0,\ell }}} \right\|}_2}} \right)} \right) \times } }  \hfill \\
  \begin{array}{*{20}{c}}
  {}&{}
\end{array}{\left[ \begin{gathered}
  H\left( z \right) - I\left( {1 - {\theta _{0,\ell ,0}} - \left( {1 - {\alpha _0} - x_{ - 1}^T{\alpha _{ - 1}} - hz} \right)/{\alpha _1} - x_{ - 1}^T{\theta _{0,\ell , - 1}}} \right) +  \hfill \\
  {\theta _{0,\ell ,1}}\left\{ {\left( {1 - {\alpha _0} - x_{ - 1}^T{\alpha _{ - 1}} - hz} \right)/{\alpha _1} - \left( {1 - {\theta _{0,\ell ,0}} - x_{ - 1}^T{\theta _{0,\ell , - 1}}} \right)/{\theta _{0,\ell ,1}}} \right\}H'\left( z \right)/h \hfill \\
\end{gathered}  \right]^2} \times  \hfill \\
  \begin{array}{*{20}{c}}
  {}&{}
\end{array}f\left\{ {\left( {1 - {\alpha _0} - x_{ - 1}^T{\alpha _{ - 1}} - hz} \right)/{\alpha _1}\left| {{x_{ - 1}}} \right.} \right\}dzd{x_1} \hfill \\
   = O\left( {h + {{\left\| {\alpha  - {\theta _{0,\ell }}} \right\|}_2} + \left\| {\alpha  - {\theta _{0,\ell }}} \right\|_2^2/h} \right), \hfill \\
\end{gathered}
\end{equation}
where the variable transformation $z = \left( {1 - {{\hat x}^T}\alpha } \right)/h$ is used in the (a). And finally, adopting the Lemma 1 in \citep{Cai2011Adaptive}, we can obtain
\begin{equation}
 \mathop {\sup }\limits_j \mathop {\sup }\limits_{k \leqslant {n^{C'}}} P\left(\left| {{T_{3,j}}\left( {{\alpha _k}} \right) - E{T_{3,j}}\left( {{\alpha _k}} \right)} \right|\geq C{\left( {\log n/nh} \right)}^{1/2}\right) = {O}\left( n^{-\gamma}\right)
\end{equation}

\end{proof}

\begin{proof}[Proof of Lemma\ref{lemma: et}]
Notice that
\begin{equation}
\label{et-0}
\begin{gathered}
  E{T_{3,j}}\left( {{\alpha _k}} \right) \hfill \\
   = E\left( {{n^{ - 1}}\sum\limits_{i = 1}^n {v_i^T\tilde Y_i^{\left( \ell  \right)}{{\hat X}_i}\left[ {H\left\{ {g\left( {{{\hat X}_i},\tilde Y_i^{\left( \ell  \right)},{\alpha _k}} \right)/h} \right\} - I\left( {{\varepsilon _i}} \right) + {\varepsilon _i}H'\left\{ {g\left( {{{\hat X}_i},\tilde Y_i^{\left( \ell  \right)},{\alpha _k}} \right)/h} \right\}/h} \right]} } \right) \hfill \\
   = E\left( {{v^T}{{\tilde Y}^{\left( \ell  \right)}}\hat X\left[ {H\left\{ {g\left( {\hat X,{{\tilde Y}^{\left( \ell  \right)}},{\alpha _k}} \right)/h} \right\} - I\left( \varepsilon  \right) + \varepsilon H'\left\{ {g\left( {\hat X,{{\tilde Y}^{\left( \ell  \right)}},{\alpha _k}} \right)/h} \right\}/h} \right]} \right) \hfill \\
   = E\left( {{v^T}{{\tilde Y}^{\left( \ell  \right)}}\hat XH\left\{ {g\left( {\hat X,{{\tilde Y}^{\left( \ell  \right)}},{\alpha _k}} \right)/h} \right\}} \right) + E\left( {{v^T}{{\tilde Y}^{\left( \ell  \right)}}\hat XI\left( \varepsilon  \right)} \right) + E\left( {{v^T}{{\tilde Y}^{\left( \ell  \right)}}\hat X\varepsilon H'\left\{ {g\left( {\hat X,{{\tilde Y}^{\left( \ell  \right)}},{\alpha _k}} \right)/h} \right\}/h} \right) \hfill \\
   = {E_1} + E\left( {{v^T}{{\tilde Y}^{\left( \ell  \right)}}\hat XI\left( \varepsilon  \right)} \right) + {E_2}. \hfill \\
\end{gathered}
\end{equation}
We can then decompose the ${E_1}$ and ${E_2}$ into two parts separately according to the value of $Y$, that is to say
\[\begin{gathered}
  {E_1} = E_1^{\left(  +  \right)}P\left[ {{{\tilde Y}^{\left( \ell  \right)}} = 1} \right] + E_1^{\left(  -  \right)}P\left[ {{{\tilde Y}^{\left( \ell  \right)}} =  - 1} \right] \hfill \\
  \left\{ \begin{gathered}
  E_1^{\left(  +  \right)} = \int_{{R^p}} {\left( {{v_0} + v_{ - 0}^Tx} \right)H\left\{ {\left( {1 - {\alpha _{k,1}} - {x^T}{\alpha _{k, - 1}}} \right)/h} \right\}f\left( x \right)dx}  \hfill \\
  E_1^{\left(  -  \right)} = \int_{{R^p}} {\left( {{v_0} + v_{ - 0}^Tx} \right)H\left\{ {\left( {1 + {\alpha _{k,1}} + {x^T}{\alpha _{k, - 1}}} \right)/h} \right\}f\left( x \right)dx}  \hfill \\
\end{gathered}  \right. \hfill \\
\end{gathered} \]
and
\[\begin{gathered}
  {E_2} = E_2^{\left(  +  \right)}P\left[ {{{\tilde Y}^{\left( \ell  \right)}} = 1} \right] + E_2^{\left(  -  \right)}P\left[ {{{\tilde Y}^{\left( \ell  \right)}} =  - 1} \right] \hfill \\
  \left\{ \begin{gathered}
  E_2^{\left(  +  \right)} = \int_{{R^p}} {\left( {{v_0} + v_{ - 0}^Tx} \right)\left\{ {\left( {1 - {\theta _{0,\ell ,1}} - {x^T}{\theta _{0,\ell , - 1}}} \right)/h} \right\}H\left\{ {\left( {1 - {\alpha _{k,1}} - {x^T}{\alpha _{k, - 1}}} \right)/h} \right\}f\left( x \right)dx}  \hfill \\
  E_2^{\left(  -  \right)} = \int_{{R^p}} {\left( {{v_0} + v_{ - 0}^Tx} \right)\left\{ {\left( {1 + {\theta _{0,\ell ,1}} + {x^T}{\theta _{0,\ell , - 1}}} \right)/h} \right\}H\left\{ {\left( {1 + {\alpha _{k,1}} + {x^T}{\alpha _{k, - 1}}} \right)/h} \right\}f\left( x \right)dx}.  \hfill \\
\end{gathered}  \right. \hfill \\
\end{gathered} \]
And finally, by expanding these integrals to the joint integrals of $\left( {{x_1},{x_{ - 1}}} \right)$ as what we have conducted in the proof of lemma\ref{lemma: eh-vev}, we can assert
\begin{equation}
\label{e1e2}
\begin{gathered}
  E_1^{\left(  +  \right)} + E_2^{\left(  +  \right)} =  - E\left( {{v^T}{{\tilde Y}^{\left( \ell  \right)}}\hat XI\left( \varepsilon  \right)\left| {{{\tilde Y}^{\left( \ell  \right)}} = 1} \right.} \right) + O\left( {h + \left\| {{\alpha _k} - {\theta _{0,\ell }}} \right\|_2^2} \right) \hfill \\
  E_1^{\left(  -  \right)} + E_2^{\left(  -  \right)} =  - E\left( {{v^T}{{\tilde Y}^{\left( \ell  \right)}}\hat XI\left( \varepsilon  \right)\left| {{{\tilde Y}^{\left( \ell  \right)}} =  - 1} \right.} \right) + O\left( {h + \left\| {{\alpha _k} - {\theta _{0,\ell }}} \right\|_2^2} \right). \hfill \\
\end{gathered}
\end{equation}
Finally, plugging \eqref{e1e2} into \eqref{et-0}, we have
\begin{equation}
\begin{gathered}
  \mathop {\sup }\limits_j \mathop {\sup }\limits_{1\le k \le n^{C'}}E{T_{3,j}}\left( {{\alpha _k}} \right) = E\left( {{v^T}{{\tilde Y}^{\left( \ell  \right)}}\hat XI\left( \varepsilon  \right)} \right) - E\left( {{v^T}{{\tilde Y}^{\left( \ell  \right)}}\hat XI\left( \varepsilon  \right)} \right) + O\left( {h + \left\| {{\alpha _k} - {\theta _{0,\ell }}} \right\|_2^2} \right) \hfill \\
  \begin{array}{*{20}{c}}
  {}&{}
\end{array} = O\left( {h + \left\| {{\alpha _k} - {\theta _{0,\ell }}} \right\|_2^2} \right) = O\left( {h + {m^{ - 1}}} \right) \hfill \\
\end{gathered}
\end{equation}
\end{proof}{}

\bibliographystyle{asa}

\end{document}